\tikzset{>=latex}
\theoremstyle{plain}
\newtheorem{theorem}{Theorem}[section]
\newtheorem{cor}{Corollary}
\newtheorem{lemma}{Lemma}
\newtheorem{assumption}{Assumption}[section]
\theoremstyle{definition}
\newtheorem{definition}{Definition}
\newtheorem{eg}{Example}
\newtheorem{fact}{Fact}
\newtheorem{observe}{Observation}
\numberwithin{equation}{section}
\renewcommand*{\backref}[1]{\ifx#1\relax \else Page #1 \fi}
\renewcommand*{\backrefalt}[4]{%
  \ifcase #1 \footnotesize{(Not cited.)}%
  \or        \footnotesize{(Cited on page~#2.)}%
  \else      \footnotesize{(Cited on pages~#2.)}%
  \fi
}
\newcommand{\rpm}{\sbox0{$1$}\sbox2{$\scriptstyle\pm$}
  \raise\dimexpr(\ht0-\ht2)/2\relax\box2 }
\newcommand{\R}{\mathbb{R}}
\tikzstyle{nd} = [anchor=base, inner sep=0pt]
\tikzstyle{ndpic} = [remember picture, baseline, every node/.style={nd}]
 \DeclareMathOperator{\N}{N}
\newcommand{\norm}[1]{\left\|#1\right\|}
\def\beq{\begin{equation}}
\def\eeq{\end{equation}}
\def\ba{\begin{enumerate}[(a)]}
\def\bei{\begin{enumerate}[(i)]}
\def\be{\begin{enumerate}[(1)]}
\def\ee{\end{enumerate}}
\def\bi{\begin{itemize}}
\def\ei{\end{itemize}}
\def\beg{\begin{eg}}
\def\eeg{\end{eg}}
\def\bd{\begin{defn}}
\def\ed{\end{defn}}
\def\bt{\begin{thm}}
\def\et{\end{thm}}
\def\bl{\begin{lemma}}
\def\el{\end{lemma}}
\def\bfac{\begin{fact}}
\def\efac{\end{fact}}
\def\bc{\begin{cor}}
\def\ec{\end{cor}}
\def\bp{\begin{prop}}
\def\ep{\end{prop}}
\def\bo{\begin{observe}}
\def\eo{\end{observe}}
\def\bas{\begin{assumption}}
\def\eas{\end{assumption}}
\def\beg{\begin{eg}}
\def\eeg{\end{eg}}
\title{\bf From Stability to Chaos: Analyzing Gradient Descent Dynamics in Quadratic Regression}
\author{
{Xuxing Chen} \thanks{Department of Mathematics, University of California, Davis.  \texttt{xuxchen@ucdavis.edu}. Supported in part by National Science Foundation (NSF) grants DMS-2053918 and CCF-1934568.}
\and
Krishnakumar Balasubramanian\thanks{Department of Statistics, University of California, Davis.  \texttt{kbala@ucdavis.edu}. Supported in part by NSF grant DMS-2053918}
\and
Promit Ghosal  \thanks{Department of Mathematics, Brandeis University.  \texttt{promit@brandeis.edu}. Supported in part by NSF grant DMS-2153661.}
\and
Bhavya Agrawalla \thanks{Department of Mathematics, Massachusetts Institute of Technology.  \texttt{bhavya@mit.edu}}
}
\date{}
\begin{document}
\maketitle

\begin{abstract}
We conduct a comprehensive investigation into the dynamics of gradient descent using large-order constant step-sizes in the context of quadratic regression models. Within this framework, we reveal that the dynamics can be encapsulated by a specific cubic map, naturally parameterized by the step-size. Through a fine-grained bifurcation analysis concerning the step-size parameter, we delineate five distinct training phases: (1) monotonic, (2) catapult, (3) periodic, (4) chaotic, and (5) divergent, precisely demarcating the boundaries of each phase. As illustrations, we provide examples involving phase retrieval and two-layer neural networks employing quadratic activation functions and constant outer-layers, utilizing orthogonal training data. Our simulations indicate that these five phases also manifest with generic non-orthogonal data. We also empirically investigate the generalization performance when training in the various non-monotonic (and non-divergent) phases. In particular, we observe that performing an ergodic trajectory averaging stabilizes the test error in non-monotonic (and non-divergent) phases.

\end{abstract}

\section{Introduction}
Iterative algorithms like the gradient descent and its stochastic variants are widely used to train deep neural networks. For a given step-size parameter $\eta>0$, the gradient descent algorithm is of the form $w^{(t+1)}=w^{(t)} - \eta \nabla \ell(w^{(t)})$ where $\ell$ is the training objective function being minimized, which depends on the loss function and the neural network architecture and the dataset. Classical optimization theory operates under small-order step-sizes. In this regime, one can think of the gradient descent algorithm as a discretization of so-called gradient flow equation given by $\dot{w}^{(t)}= -\nabla \ell(w^{(t)})$, which could be obtained from the gradient descent algorithm by letting $\eta \to 0$. Additionally, assuming that the objective function $\ell$ has gradients that are $L$-Lipschitz, selecting a step-size $\eta < 1/L$ guarantees convergence to stationarity.  


In stark contrast to traditional optimization, recent empirical studies in deep learning have revealed that training deep neural networks with large-order step-sizes yields superior generalization performance. Unlike the scenario with small step-sizes, where gradient descent dynamics follow a monotonic pattern, larger step-sizes introduce a more intricate behavior. Various patterns like catapult (also related to \emph{edge of stability}), periodicity and chaotic dynamics in neural network training with large step-sizes have been observed empirically, for example, by \cite{lewkowycz2020large},~~\cite{Jastrzebski2020The},~\cite{cohen2021gradient},~\cite{lobacheva2021on},~\cite{gilmer2022a},~\cite{zhang22q},~\cite{kodryan2022training},~\cite{herrmann2022chaotic}. In fact, the necessity for larger step-sizes to expedite convergence and the ensuing chaotic behavior has also been observed empirically outside the deep learning community by~\cite{van2012chaotic}, much earlier.

Faster convergence of gradient descent with iteration-dependent step-size schedules that have specific patterns (including cyclic and fractal patterns) has been examined empirically by~\cite{lebedev1971ordering,smith2017cyclical, oymak2021provable,agarwal21a,goujaud2022super,  grimmer2023provably}, with~\citet{Altschuler2023hedging} and~\citet{Grimmer2023long} proving the state-of-the art remarkable results; see also \citet[Section 1.2]{Altschuler2023hedging} for a historical overview. Notable, the stated faster convergence behavior of gradient descent requires large order step-sizes, very much violating the classical case. More importantly, the corresponding optimization trajectory, while being non-monotonic, exhibits intriguing patterns \citep{van2012chaotic}. 

Considering the aforementioned factors, gaining insight into the dynamics of gradient descent with large-order step-sizes emerges as a pivotal endeavor. A precise theoretical characterizing of the gradient descent dynamics in the large step-size regime for deep neural network, and other such non-convex models, is a formidably challenging problem. Existing findings (as detailed in Section~\ref{sec: relatedwork1}) often rely on strong assumptions, even when attempting to delineate a subset of the aforementioned patterns, and do not provide a comprehensive account of the entire narrative underlying the training dynamics. Recent research, such as \cite{agarwala2022second}, \cite{zhu2022quadratic}, and \cite{zhu2022understanding}, has pivoted towards comprehending the dynamics of quadratic regression models based on a \emph{local} analysis. These models offer a valuable testing ground due to their ability to provide tractable approximations for various machine learning models, including phase retrieval, matrix factorization, and two-layer neural networks, all of which exhibit unstable training dynamics. Despite their seeming simplicity, a fine-grained understanding of their training dynamics is far from trivial. Building in this direction, the primary aim of our work is to attain a precise characterization of the training dynamics of gradient descent in quadratic models, thereby fostering a deeper comprehension of the diverse phases involved in the training process.

\begin{tcolorbox}[colback=blue!2,colframe=gray!10]
    \textbf{Contribution 1.} We perform a \emph{fine-grained, global theoretical analysis} of a cubic-map-based dynamical system (see Equation~\ref{eq: f_a}), and identify the precise boundaries of the following five phases: (i) monotonic, (ii) catapult, (iii) periodic, (iv) Li-Yorke chaotic, and (v) divergent. See Figure~\ref{fig: simple_phases} for an illustration, and Definition~\ref{def: 5phases} and Theorem~\ref{thm:cubicmapresult} for formal results. We show in Theorem~\ref{thm: multiple_data_gd} and~\ref{thm:dnnexample}, that the dynamics of gradient descent for two non-convex statistical problems, namely phase retrieval and two-layer neural networks with constant outer layers and quadratic activation functions, with orthogonal training data is captured by the cubic-map-based dynamical system. We provide empirical evidence of the presence of similar phases in training with non-orthogonal data. 
\end{tcolorbox}
\begin{figure}[t]
    \centering
    \subfigure[]{\label{fig: p0}\includegraphics[width=0.19\textwidth]{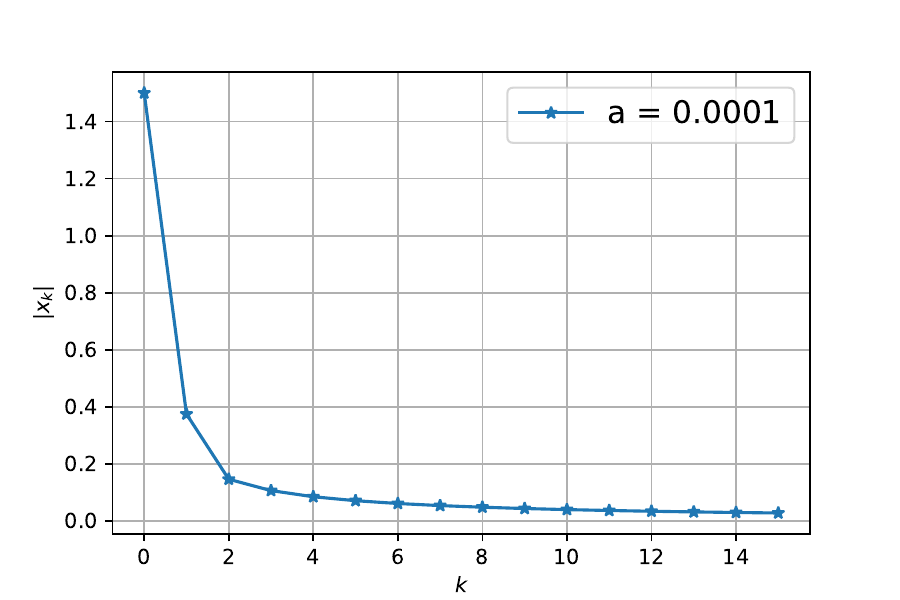}}
    \subfigure[]{\label{fig: p1}\includegraphics[width=0.19\textwidth]{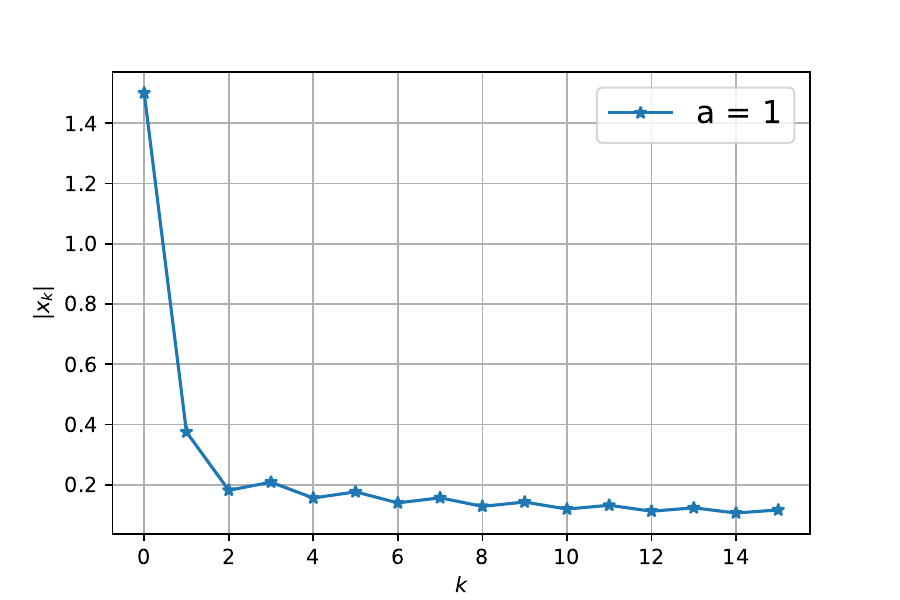}}
    \subfigure[]{\label{fig: p2}\includegraphics[width=0.19\textwidth]{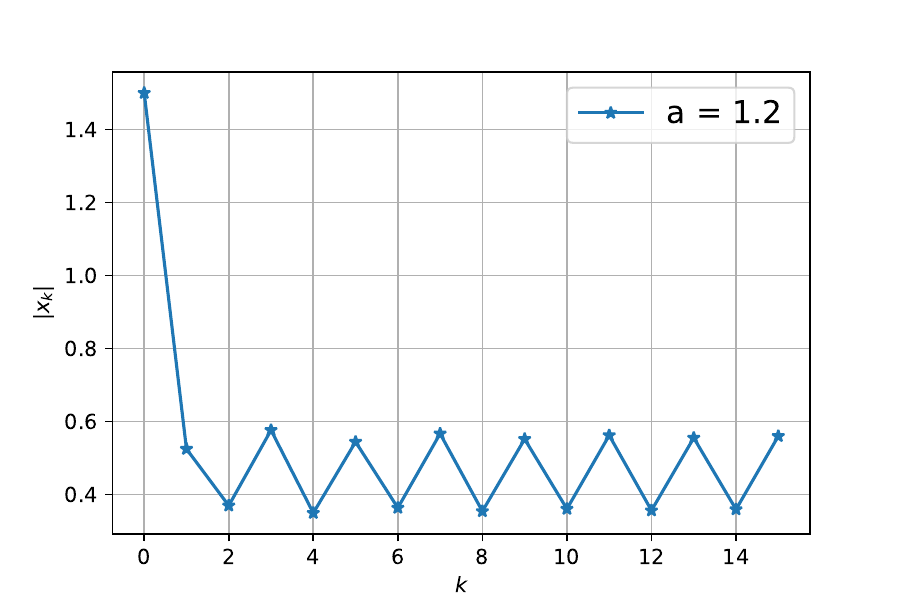}}
    \subfigure[]{\label{fig: p3}\includegraphics[width=0.19\textwidth]{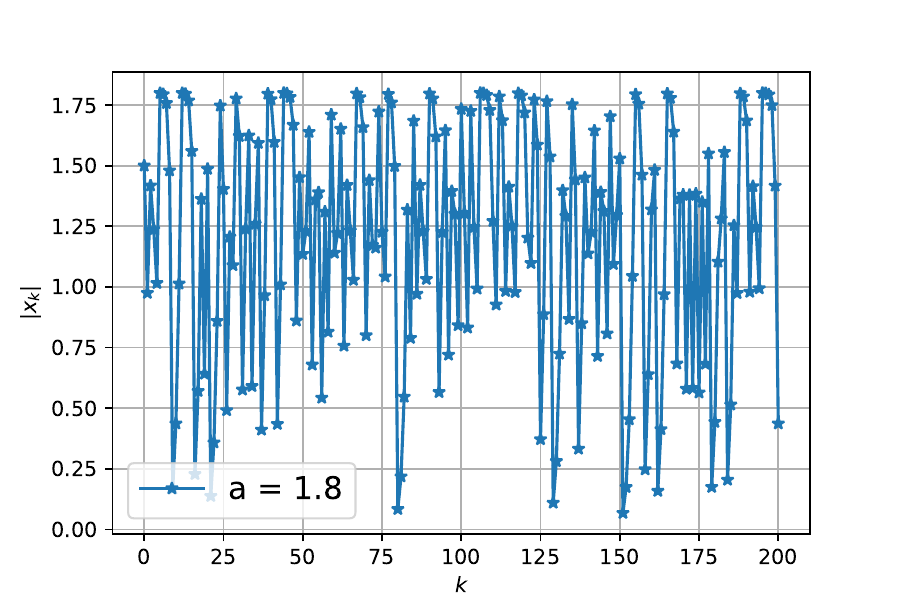}}
    \subfigure[]{\label{fig: p4}\includegraphics[width=0.19\textwidth]{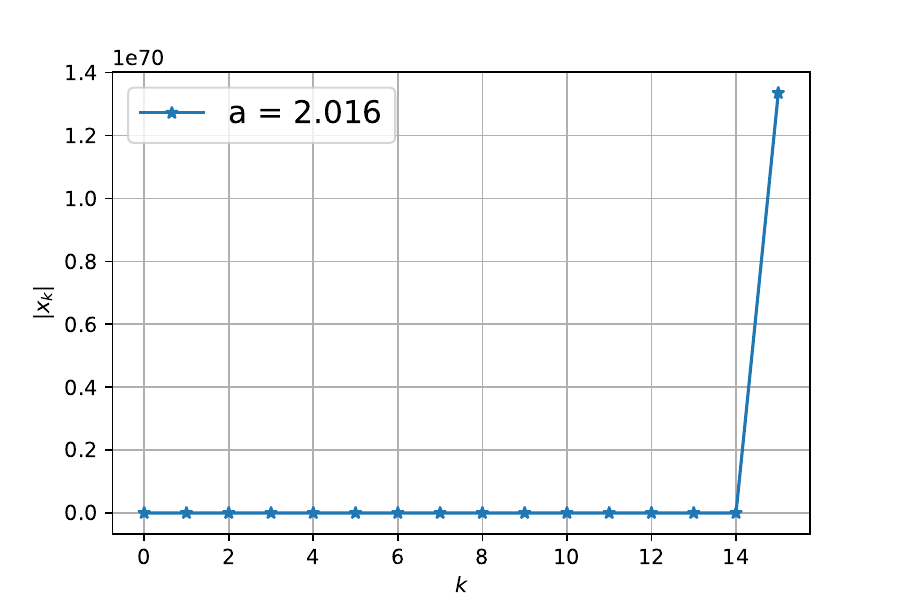}}
    \caption{Phases of cubic-map based dynamical system in~\eqref{eq: f_a} parameterized by $a$. Sub-figure~\ref{fig: p0} corresponds to the monotonic phases, where the dynamics monotonically decays to zero. Sub-figure~\ref{fig: p1} corresponds to the catapult phase where the dynamics decays to zero but is non-monotonic. Sub-figure~\ref{fig: p2} corresponds to the periodic phase, where the dynamics decays and settles in a period-2 orbit (i.e., shuttles between two points) but never decays to zero. Sub-figures~\ref{fig: p3} and~\ref{fig: p4}  correspond to the chaotic phase (see Definition~\ref{sec:liyorke}) and divergent phases, respectively. Note that the order of $x$-axis and $y$-axis in Sub-figures~\ref{fig: p3} and~\ref{fig: p4} are different from the rest. }
    \label{fig: simple_phases}
\end{figure}
We also empirically examine the effect of  training models in the above-mentioned phases, in particular the non-monotonic ones, on the generalization error. Indeed, provable model-specific statistical benefits for training in catapult phase are studied in~\cite{lyu2022understanding,ahn2022understanding}. \cite{lim2022chaotic} proposed to induce controlled chaos in the training trajectory to obtain better generalization. 
Approaches to explain generalization with chaotic behavior are examined in~\cite{chandramoorthy2022generalization} based on a relaxed notion of statistical algorithmic stability. Although our focus is on gradient descent, related notions of generalization of stochastic gradient algorithms, based on characterizing the fractal-like properties of the invariant measure they converge to (with larger-order constant step-size choices) have been explored, for example, in~\cite{birdal2021intrinsic, camuto2021fractal, Dupuis23a, hodgkinson22a}. Hence, we also conduct empirical investigations into the performance of generalization when training within the different non-monotonic (and non-divergent) phases and make the following contribution.

\begin{tcolorbox}[colback=blue!2,colframe=gray!10]
\textbf{Contribution 2}. We propose a natural ergodic trajectory averaging based prediction mechanism (see Section~\ref{sec:ergodicavg}) to stabilize the predictions when operating in any non-monotonic (and non-divergent) phase. 
\end{tcolorbox}

\subsection{Related works}\label{sec: relatedwork1}


\textbf{Specific Models.} \cite{zhu2022understanding} and \cite{chen2023beyond} studied gradient descent dynamics for minimizing the functions $\ell(u,v)=(u^2v^2-1)^2$ and $\ell=(u^2-1)^2$, respectively. Both works primarily focused on characterizing period-2 orbits and hint at the possibility of chaos without rigorous theoretical justifications. Furthermore, their proofs are relatively tedious and very different from ours. \cite{song2023trajectory} provided empirical evidence of chaos for minimizing $\ell(u,v)=$ using gradient descent. However, their results are not applicable to quadratic regression models. \cite{ahn2022learning} examined the Edge of Stability (EoS) between the monotonic and catapult phase for minimizing $\ell(u,v)=l(uv)$, where $l$ is convex, even, and Lipschitz. Their analysis is not directly extendable to the quadratic regression models we consider in this work. See  also the discussion below Theorem~\ref{thm:cubicmapresult} for important technical comparisons. \cite{wang2021large} analyzed additional benefits (e.g., taming homogeneity) of gradient descent with large step-sizes for matrix factorization. 

\cite{agarwala2022second} explored gradient descent dynamics for a class of quadratic regression models and identified the EoS. \cite{zhu2023catapults,zhu2022understanding} also studied the catapult phase and EoS for a class of quadratic regression models.~\cite{agarwala2023sam} examined the EoS in the context of Sharpness Aware Minimization for quadratic regression models. The above works are related to our work in terms of the model that they study. However, none of the above works characterize the five distinct phases (with precise boundaries) like we do, along with precise boundaries. Furthermore, our analysis is distinct (and is also global\footnote{Analysis in~\cite{wang2021large} and \cite{chen2023beyond} is also global, but not applicable to our model.}) from the above works and is firmly grounded in the rich literature on dynamical systems.

\textbf{General results.} \cite{lewkowycz2020large} empirically examine the catapult phase, particularly in neural networks with one hidden layer and linear activations. \cite{cohen2021gradient} and \cite{ahn2022understanding} provide insights into the EoS. \cite{damian2022self} propose self-stabilization as a phenomenological reason for the occurrence of catapults and EoS in gradient descent dynamics. \cite{kreisler2023gradient} investigate how gradient descent monotonically decreases the sharpness of Gradient Flow solutions, specifically in one-dimensional deep neural networks. Although they do not formally prove the existence of chaos in the dynamics, they conjecture its possibility. \cite{arora2022understanding} and \cite{lyu2022understanding} explore sharpness reduction flows, related to the above findings. \cite{andriushchenko2023sgd} prove that large step-sizes in gradient descent can lead to the learning of sparse features. \cite{wu2023implicit} investigate the EoS phenomenon for logistic regression.~\cite{kong2020stochasticity} theoretically explore the chaotic dynamics (and related stochasticity) in gradient descent for minimizing multi-scale functions under additional assumptions. While being extremely insightful, their results are fairly qualitative and are not directly applicable to the cubic maps analyzed in our work. As we focus on specific models, our results are more precise and quantitative. 


\textbf{Dynamical systems.} Our results draw upon the rich literature available in the field of dynamical systems. We refer the interested reader to \cite{alligood1997chaos}, \cite{lasota1998chaos}, \cite{devaney1989introduction}, \cite{ott2002chaos}, \cite{de2012one} for a book-level introduction. Birfurcation analysis of some classes of cubic maps has been studied, for example, by ~\cite{skjolding1983bifurcations},~\cite{rogers1983chaos},~\cite{branner1988iteration} and~\cite{milnor1992remarks}. Some of the above works are rather empirical, and the exact maps considered in the above works differ significantly from our case.

\section{Analyzing a discrete dynamical system with cubic map}\label{sec: cubic_map}

\textbf{Notations and definitions.} We say a sequence $\{x_k\}_{k=0}^{\infty}$ is increasing (decreasing), if $x_{t+1} \geq x_t$ ($x_{t+1}\leq x_t$) for any $t$. Moreover, it is strictly increasing (decreasing) if the equalities never hold. For a real-valued function $f$ and a set $S$, define $f(S) = \{f(x): x\in S\}$, and $f^{(k)}(x) := f(f^{(k-1)}(x))$ for any $k\in \N_+$ with $f^{(0)}(x) = x.$ The preimage of $x$ under $f$ on $S$ is the set $f^{-1}(x) := \{y\in S: f(y) = x\}$. We say a property $P$ holds for almost every $x\in S$ or almost surely in $S$, if the subset $\{x\in S: \text{property } P \text{ does not hold for } x\}$ is Lebesgue measure zero. A critical point of $f$ is a point $x$ satisfying $f'(x) = 0$. We call $x_0$ a period-$k$ point of $f$, when $f^{(k)}(x_0) = x_0$ and $f^{(i)}(x_0)\neq x_0$ for any $0\leq i\leq k-1$. The orbit of a point $x_0$ denotes the sequence $\{f^{(t)}(x_0)\}_{t=0}^{\infty}$. A point $x_0$ is called asymptotically periodic if there exists a periodic point $y_0$ such that $    \lim_{t\rightarrow \infty} |f^{(t)}(x_0) - f^{(t)}(y_0)| = 0.$ The stable set of a period-$k$ point $x_0$ is defined as $W^{s}(x_0) := \left\{x: \lim_{n\rightarrow\infty} f^{(kn)}(x) = x_0.\right\}.$

The stable set of the orbit of a periodic point $x_0$ is the union of the stable sets of all points in the orbit of $x_0$. A point $x_0$ is an aperiodic point if it is not an asymptotically periodic point and the orbit of $x_0$ is bounded. We say a fixed point $x_0$ of $f$ is stable if, for any $\epsilon > 0$, there is a $\delta > 0$ such that for any $x$ satisfiying $|x - x_0| < \delta$, we have $|f^{(n)}(x) - x_0|<\epsilon$ for all $n\geq 0$. The fixed point $x_0$ is said to be unstable if it is not stable. The fixed point $x_0$ is asymptotically stable if it is stable and there is an $\delta>0$ such that $\lim_{n\rightarrow\infty}f^{(n)}(x)=x_0$ for all $x$ satisfying $|x - x_0| < \delta.$ A period-$p$ point $x_0$ and its associated periodic orbit are asymptotically stable if $x_0$ is an asymptotically stable fixed point of $f^{(p)}$. A point $x_0\in \R\bigcup\{+\infty, -\infty\} \backslash S $ is called an absorbing boundary point of $S$ for $f$ with period $p$, for some $p\in \{1, 2\}$, if there exists an open set $U\subseteq S$ such that $\lim_{k\rightarrow \infty}f^{(pk)}(y)\rightarrow x$ for all $y\in U$. The Schwarzian derivative\footnote{It is widely used in the study of dynamical systems for its sign-preservation property under compositions; see, for example,~\cite{de2012one}.} of a three-times continuously differentiable function $f$ is defined (at non-critical points) as  
\begin{align}
    \mathsf{S}f(x) \coloneqq ({f'''(x)}/{f'(x)}) - 1.5\,\left({f''(x)}/{f'(x)}\right)^2,\ \text{where }f'(x)\neq 0.
\end{align}
The Lyapunov exponent\footnote{It is associated with the stability properties and commonly used in dynamical systems to measure the sensitive dependence on initial conditions \citep{strogatz2018nonlinear}. } of a given orbit with initialization $x_0$ is defined as 
$$
\mathsf{L}f(x_0) = \lim_{n\rightarrow\infty}\frac{1}{n}\sum_{i=1}^{n-1}\log|f'(x_i)|.
$$ 
The sharpness of a loss function is defined as the maximum eigenvalue of the Hessian matrix of the loss.

\textbf{Bifurcation analysis.} Our main goal in this section is to undertake a bifurcation analysis of the following discrete dynamics system defined by a cubic map. For $a>0$, first define the functions $g$ and $f$, parameterized by $a$, as  
\begin{align}
    g_a(z) = z^2 + (a-2)z + 1-2a = (z+a)(z-2) + 1\quad\text{and}\quad 
        f_a(z) = z g_a(z). \label{eq: f_a}
\end{align}
Next, consider the discrete dynamical system given by
\begin{align}\label{eq:cubiclossdynamics}
    z_{t+1} = f_a(z_t) = z_tg_a(z_t).
\end{align}
Note that for any $a, \epsilon>0$ and $z_0 \geq 2+ \epsilon$ or $z_0\leq -a-\epsilon$, we will have $\lim_{t\rightarrow \infty}|z_t| = +\infty$. Hence, we only study the case when $z_0\in [-a, 2]$. We will show in Section~\ref{sec: quadregexamples} that the dynamics of the training loss for several quadratic regression models could be captured by~\eqref{eq:cubiclossdynamics}. The parameter $a$ in~\eqref{eq: f_a} for the models will naturally correspond to the step-size of the gradient descent algorithm. 

We next introduce the precise definitions of the five phase that arise in the bifurcation analysis of~\eqref{eq: f_a}. To do so, we need the following definition of chaos in the Li-Yorke sense \citep{li1975period}. Li-Yorke chaos is widely used in the study of dynamical systems and is also directly related to important measures of the complexity of dynamical systems, like the topological entropy~\citep{adler1965topological, franzova1991positive}. We also refer to~\cite{aulbach2001three} and~\cite{kolyada2004li} for its relationship to other notions of chaos and related history. 

\begin{definition}[Li-Yorke Chaos \citep{li1975period}]\label{sec:liyorke}
    Suppose we are given a function $f(x)$. If there exists a compact interval $I$ such that $f: I\rightarrow I$, then it is called Li-Yorke chaotic \citep{li1975period, aulbach2001three} when it satisfies
\begin{itemize}[noitemsep, leftmargin=0.15in]
    \item For every $k = 1,2,...$ there is a periodic point in $I$ having period-$k$.
    \item There is an uncountable set $S\subseteq I$ (containing no periodic points), which satisfies for any $p, q\in S$ with $p\neq q$, $\limsup_{t\rightarrow\infty} |f^{(t)}(p) - f^{(t)}(q)| > 0,\ \liminf_{t\rightarrow\infty}|f^{(t)}(p) - f^{(t)}(q)| = 0$, and for any $p\in S$ and periodic point $q\in I$, $\limsup_{n\rightarrow\infty} |f^{(t)}(p) - f^{(t)}(q)| > 0.$
\end{itemize}
\end{definition}

To define the 5 phases in particular, we consider the orbit $\{f^{(k)}(x)\}_{k=0}^{+\infty}$ generated by a given function $f$ defined over a set $I$, in which the initial point $x$ belongs to.

\begin{definition}\label{def: 5phases}
    Given a function $f(x)$ defined on a set $I$, we say the discrete dynamics is in the 
    \begin{itemize}[noitemsep,leftmargin=0.15in]
        \item \textbf{Monotonic phase}, when $\{|f^{(k)}(x)|\}_{k=0}^{\infty}$ is decreasing and $\lim_{n\rightarrow \infty}|f^{(n)}(x)| = 0$ for almost every $x\in I$.
        \item \textbf{Catapult phase}, when $\{|f^{(k)}(x)|\}_{k=m}^{\infty}$ is not decreasing for any $m$ and $\lim_{n\rightarrow \infty}|f^{(n)}(x)| = 0$ for almost every $x\in I$. We say such sequences have catapults.
        \item \textbf{Periodic phase}, when $f$ is not Li-Yorke chaotic, $\{|f^{(k)}(x)|\}_{k=0}^{\infty}$ is bounded and does not have a limit for almost every $x\in I$, and there exists period-$2$ points in $I$.
        \item \textbf{Chaotic phase}, when the function $f$ is Li-Yorke chaotic and $\{|f^{(k)}(x)|\}_{k=0}^{\infty}$ is bounded for almost every $x\in I$.
        \item \textbf{Divergent phase\footnote{We do not further sub-characterize the divergent phase as it is uninteresting.}}, when $\lim_{n\rightarrow \infty} |f^{(n)}(x)| = +\infty$ for almost every $x\in I$.
    \end{itemize}
\end{definition}

As an illustration, in Figure \ref{fig: simple_phases}, we plot the five phases for the parameterized function and its discrete dynamical system defined in~\eqref{eq: f_a} with initialization $1.9$, i.e., $x_k = f_a^{(k)}(x_0),\ x_0 = 1.9.$ We have the following main result for different phases of dynamics.

\begin{wrapfigure}{r}{8cm}
\vspace{-0.4in}
    \includegraphics[width=0.4\textwidth]{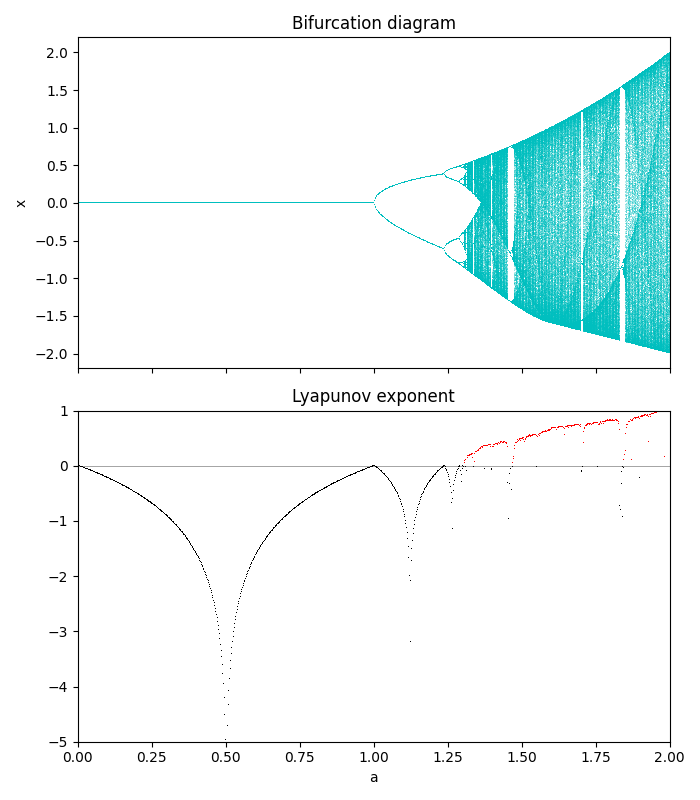}
    \caption{Bifurcation diagram and Lyapunov exponent. Initialization $z_0 = 0.1$.}
    \label{fig: diagram}
\end{wrapfigure}

\begin{theorem}\label{thm:cubicmapresult}
    Suppose $f_a(z)$ is defined in \eqref{eq: f_a}. Define $z_{t+1} = f_a(z_t)$ with $z_0$ sampled uniformly at random in $(-a, 2)$. Then there exists $a_*\in(1, 2)$ such that the following holds. 
    \begin{itemize}[noitemsep,leftmargin=0.15in]
        \item If $a\in (0, 2\sqrt{2}-2]$, then almost surely $\lim_{t\rightarrow\infty}|z_t| = 0$ and $|z_t|$ is decreasing, and hence the dynamics is in the \textit{monotonic} phase.
        \item If $a\in (2\sqrt{2}-2, 1]$, then almost surely $\lim_{t\rightarrow \infty}|z_t|=0$ and $|z_t|$ have catapults, and hence the dynamics is in the \textit{catapult} phase.
        \item If $a\in (1, a_*)$, then there exists a period-$2$ point in $(0, 1)$. $z_t\in(-a, 2)$ for all $t$. If there exists an asymptotically stable periodic orbit, then the orbit of $z_0$ is asymptotically periodic almost surely, and hence the dynamics is in the \textit{periodic} phase.
        \item If $a\in (a_*, 2]$, $f_a$ is Li-Yorke chaotic. $z_t\in(-a, 2)$ for all $t$. If there exists an asymptotically stable periodic orbit, then the orbit of $z_0$ is asymptotically periodic almost surely, and hence the dynamics is in the \textit{chaotic} phase.
        \item If $a\in (2, +\infty)$, then $\lim_{t\rightarrow\infty} |z_t| = +\infty$ almost surely, and hence the dynamics is in the \textit{divergent} phase.
    \end{itemize}
\end{theorem}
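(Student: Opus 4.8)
The plan is to analyze the cubic map $f_a$ piece by piece, matching each interval of the parameter $a$ to one of the five phases, and to rely heavily on the Schwarzian-derivative machinery together with standard results from one-dimensional dynamics (Singer's theorem, the Li--Yorke period-three criterion, Sharkovskii-type statements). The first step is to collect the elementary structural facts about $f_a$: its fixed points, its critical points, the shape of $g_a$, and the fact that $[-a,2]$ is (nearly) invariant. In particular I would compute $f_a'(z) = 3z^2 + 2(a-2)z + (1-2a)$, locate the two critical points $z_\pm$, evaluate $f_a$ at the critical points and at the endpoints $-a,2$, and check the sign of $f_a$ on $(0,2)$ and $(-a,0)$. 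A key preliminary lemma will be that $\mathsf{S}f_a(z) < 0$ wherever $f_a'(z)\neq 0$; this follows from a direct computation because $f_a$ is a cubic with three real roots (or can be seen from the general fact that cubics with all real critical values of the right type have negative Schwarzian). Negative Schwarzian derivative is the workhorse: by Singer's theorem it guarantees that every attracting or neutral periodic orbit attracts a critical point, so $f_a$ has at most two attracting periodic orbits, and it lets us control the global dynamics from the orbit of the critical points alone.

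Next I would handle the two ``decay to zero'' phases. For $a \le 2\sqrt2 - 2$ the claim is that $|f_a(z)| \le |z|$ with strict inequality off a measure-zero set and that the sign pattern forces monotonicity; this reduces to showing $|g_a(z)| \le 1$ on the relevant subinterval and $0 < g_a(z)$ so no sign flip occurs — an inequality about the quadratic $g_a$ that should cut cleanly at the threshold $2\sqrt2-2$ (indeed $g_a$ at its minimum equals $1 - 2a - (a-2)^2/4$, and requiring this to be $\ge -1$ gives exactly $a \le 2\sqrt2 - 2$). For $2\sqrt2-2 < a \le 1$ the fixed point $z=0$ is still the relevant attractor (one checks $f_a'(0) = 1-2a \in (-1,1)$ for $a<1$, and the boundary case $a=1$ needs a separate soft argument since $f_a'(0)=-1$), but now $g_a$ dips below $-1$ somewhere, which is what creates the non-monotone ``catapult'' overshoot; the global convergence $|z_t|\to 0$ for a.e.\ start would again come from the negative-Schwarzian/Singer analysis, ruling out other attractors and showing the basin of $0$ is full measure. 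The transition at $a=1$ is a transcritical/period-doubling-type bifurcation where a period-$2$ orbit is born, which sets up the next case.

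For $a\in(1,a_*)$ I would show: (i) a period-$2$ point exists in $(0,1)$ — solve $f_a(f_a(z))=z$, factor out the fixed points, and show the residual quartic has a root in $(0,1)$ for all $a>1$, or more cleanly exhibit an interval mapped across itself by $f_a^{(2)}$ with a sign change; (ii) $z_t$ stays in $(-a,2)$ for all $t$, from invariance of a suitable subinterval containing the critical values; (iii) the conditional statement about asymptotic periodicity, which is immediate from Singer once we are \emph{told} an attracting periodic orbit exists, since its basin then has full measure. The definition of $a_*$ should be ``the infimum of $a$ for which $f_a$ has a period-$3$ point'' (equivalently where $f_a$ first becomes Li--Yorke chaotic); one then invokes the Li--Yorke theorem (period three implies chaos) to get the chaotic phase for $a\in(a_*,2]$, checking that a period-$3$ point indeed exists throughout that range — e.g.\ by a continuity/intermediate-value argument on $f_a^{(3)}$ or by verifying the Li--Yorke inequality $f_a(c)\le b\le a< c$ with $f_a(a)\ge c$ for a suitable triple at $a=2$ and monotonicity of the relevant quantities in $a$. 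That $a_*\in(1,2)$ requires showing period-$3$ points do \emph{not} exist for $a$ slightly above $1$ (small overshoot, dynamics still tame) but \emph{do} exist at $a=2$. Finally, for $a>2$, I would show $|g_a(z)|>1$ on a neighborhood of any starting point together with a sign/escape argument forcing $|z_t|\to\infty$ for a.e.\ $z_0$; here the endpoint value $f_a(-a) = -a\,g_a(-a)$ and $f_a(2)=2 g_a(2)$ need checking to see that even the ``safe'' interval leaks out.

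The main obstacle I expect is pinning down $a_*$ and proving the chaotic regime $(a_*,2]$ rigorously: establishing existence of a period-$3$ orbit for \emph{every} $a$ in that interval (not just one value), and separately proving its \emph{non}-existence for $a$ just above $1$ so that $a_*>1$ strictly. Unlike the quadratic (logistic) family, the cubic $f_a$ is not unimodal — it has two critical points — so the clean kneading-theory and Sharkovskii-ordering arguments need care, and one may have to track both critical orbits simultaneously. The negative-Schwarzian property is what makes this tractable, but converting ``period three exists'' into the full Li--Yorke scrambled-set conclusion is exactly the content of the Li--Yorke theorem, so that part is a citation; the real work is the interval-arithmetic / monotonicity-in-$a$ estimates that locate the period-three window and show it is nonempty with boundary strictly inside $(1,2)$.
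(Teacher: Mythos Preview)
Your high-level strategy---negative Schwarzian derivative, Singer's theorem, the Li--Yorke period-three criterion, and Nusse's almost-sure convergence results---is exactly the machinery the paper uses, and your threshold computation for the monotonic phase (minimum of $g_a$ equal to $-1$ at $a=2\sqrt{2}-2$) matches. One small slip: the side condition ``$g_a>0$ so no sign flip occurs'' is both wrong and unnecessary; $g_a(0)=1-2a<0$ for $a>1/2$, and only $|g_a|\le 1$ is needed for $|z_t|$ to be decreasing.

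There is a genuine gap in your divergent-phase argument. The claim ``$|g_a(z)|>1$ on a neighborhood of any starting point'' fails: $g_a(-a)=g_a(2)=1$, and for $z$ just inside either endpoint $g_a(z)$ is slightly below $1$, so $|g_a(z)|<1$ there and the contraction/expansion dichotomy does not hold uniformly. The paper's route is different and cleaner: it shows the critical point $(1-2a)/3$ satisfies $f_a((1-2a)/3)>2$ when $a>2$, so its orbit escapes to $+\infty$; since the other critical point $z=1$ maps to the unstable fixed point $-a$, Singer's theorem rules out any attracting periodic orbit, and then Nusse's theorem gives divergence for almost every $z_0$. For the catapult phase your Singer-based sketch gives convergence to $0$ but not the ``infinitely many overshoots'' conclusion; the paper obtains that by an explicit five-interval decomposition $I_1,\dots,I_5$ of $[-a,2]$, tracking $f_a(I_j)$ to force every orbit eventually into $I_2\cup I_3$, and showing the orbit revisits a subinterval where $g_a<-1$ infinitely often. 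The paper also proves directly (by factoring the degree-six polynomial $g_a(z)g_a(zg_a(z))-1$) that no period-$2$ point exists for $a\le 1$, which is stronger than what Singer alone yields and is what makes the $a=1$ boundary case go through.
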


In Figure \ref{fig: diagram} we numerically plot a bifurcation diagram for $a\in (0, 2)$ and Lyapunov exponent scatter plot with initialization $z_0 = 0.1$. The main ingredients in proving Theorem~\ref{thm:cubicmapresult} are the following Lemmas \ref{lem: convergence}, \ref{lem: bifurcation_chaos}, and \ref{lem: divergence}. Note that by straightforward computations, we have
\begin{align}
    f_a'(0) = 1-2a \in (-1, 1) \Leftrightarrow a\in (0,1).
\end{align}
This implies $0$ is a asymptotically stable fixed point when $a\in (0, 1)$. This type of local stability analysis is standard in dynamical systems literature \citep{hale2012dynamics, strogatz2018nonlinear}, and has been used in analyzing the training dynamics of gradient descent recently \citep{zhu2022quadratic, song2023trajectory}. However, such results are limited to only local regions. In contrast, the following results provide a global convergence analysis.

\begin{lemma}\label{lem: convergence}
    Suppose $0<a\leq 1$ and $-a\leq z_0\leq 2$. Then we have
    \begin{itemize}[noitemsep,leftmargin=0.15in]
        \item (i) $-a\leq z_t\leq 2$ for any $t$, and $f_a$ does not have a period-$2$ point on $[-a, 2]$.
        \item (ii) If $z_0$ is chosen from $[-a, 2]$ uniformly at random, then $\lim_{t\rightarrow \infty}z_t = 0$ almost surely. Moreover, if $0 < a \leq 2\sqrt{2} - 2$, then almost surely $|z_{t+1}|\leq |z_t|$ for all $t$. If $2\sqrt{2} - 2<a\leq 2$, then almost surely $\{|z_t|\}_{t=0}^{\infty}$ has catpults. 
    \end{itemize}
\end{lemma}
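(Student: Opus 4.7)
The plan is to treat parts (i) and (ii) separately. For part (i), invariance of $[-a,2]$ under $f_a$ follows by locating the two real critical points $c_1=(1-2a)/3$ and $c_2=1$ of $f_a$ and computing the boundary/critical values $f_a(-a)=-a$, $f_a(c_1)=(1-2a)^2(a+4)/27\in[0,5/27]$, $f_a(c_2)=-a$, and $f_a(2)=2$, all lying in $[-a,2]$ when $a\in(0,1]$. For the absence of period-$2$ points, I would derive the identities
\[
f_a(z)+a=(z+a)(z-1)^2,\qquad f_a(z)-2=(z-2)(z^2+az+1),
\]
directly from $f_a(z)=z+z(z+a)(z-2)$, and combine them with $f_a(z)-z=z(z+a)(z-2)$ to obtain the factorization
\[
f_a^{(2)}(z)-z=(f_a(z)-z)\,R_a(z),\qquad R_a(z)=1+g_a(z)(z-1)^2(z^2+az+1).
\]
The task then reduces to proving $R_a>0$ on $[-a,2]$ for $a\in(0,1)$, with the boundary case $a=1$ giving $R_1(z)=z^2(z^4-2z^3+2)$ whose only root on $[-a,2]$ is the fixed point $0$ (using that $z^4-2z^3+2$ attains its minimum $5/16$ at $z=3/2$). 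For $a\in(0,1)$, I would exploit $z^2+az+1>0$ (since $a<2$), $(z-1)^2\ge 0$, and $R_a(0)=2(1-a)>0$ to reduce positivity to the sub-interval $(r_-,r_+)$ where $g_a<0$, where a direct bound yields $|g_a(z)|(z-1)^2(z^2+az+1)<1$.

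For part (ii), the a.s.\ convergence $z_t\to 0$ is obtained by combining a local contraction near $0$ with a Schwarzian/Singer global argument. Locally, $|f_a'(0)|=|1-2a|<1$ for $a\in(0,1)$, and at the borderline $a=1$ the expansion $f_1^{(2)}(z)=z-4z^3+O(z^4)$ still makes $0$ a (non-hyperbolic) attractor. I would then verify that $\mathsf{S}f_a(z)<0$ everywhere using the identity $\mathsf{S}f_a(z)\cdot (f_a'(z))^2=-6[6z^2+4(a-2)z+a^2-2a+3]$ and noting that the bracketed quadratic has discriminant $-8(a+1)^2<0$, hence is positive. Singer's theorem then bounds the number of asymptotically stable periodic orbits by the number of critical points, hence at most two; since the orbit of $c_2=1$ lands on the repelling fixed point $-a$ after one step (and stays there), it cannot lie in any stable basin, while part (i) and an explicit check confirm the orbit of $c_1$ converges to $0$. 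Therefore $\{0\}$ is the unique stable periodic orbit, and its basin exhausts $[-a,2]$ up to the countable set of preimages of $\{-a,2\}$, which has Lebesgue measure zero. The monotonic/catapult dichotomy finally follows from
\[
(f_a(z))^2-z^2=z^2(z+a)(z-2)\cdot h(z),\qquad h(z)=z^2+(a-2)z+(2-2a),
\]
together with the observation that $z^2(z+a)(z-2)\le 0$ on $[-a,2]$: the discriminant $a^2+4a-4$ of $h$ vanishes at $a=2\sqrt{2}-2$, so $h>0$ and $|f_a(z)|\le|z|$ everywhere for $a\le 2\sqrt{2}-2$, while for $a>2\sqrt{2}-2$ the roots $\alpha_\pm$ of $h$ carve out an interval $(\alpha_-,\alpha_+)\subset(0,1)$ on which $h<0$ and $|f_a(z)|>|z|$. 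The a.s.\ catapult claim then reduces to showing that the set of $z_0$ whose orbits permanently avoid $(\alpha_-,\alpha_+)$ is Lebesgue null, which I would handle by a preimage-measure argument.

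The technically sharpest step is the global positivity $R_a>0$ on $[-a,2]$ for $a\in(0,1)$, since $R_a(0)=2(1-a)$ vanishes as $a\uparrow 1$; any proof must track this thin margin with a sharp analytic estimate rather than a soft bound, and this is where I expect most of the work to lie. A secondary subtlety is the rigorous ``almost surely'' formulation of the catapult claim, which demands exhibiting that orbits trapped in the contracting region near $0$ (where $|f_a(z)|<|z|$) form a Lebesgue null set---something that is intuitive but requires a careful preimage/invariance argument beyond the Singer-based global convergence of part (ii).
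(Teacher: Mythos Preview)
Your factorization for part~(i) is correct: the identities $f_a(z)+a=(z+a)(z-1)^2$ and $f_a(z)-2=(z-2)(z^2+az+1)$ yield $f_a^{(2)}(z)-z=(f_a(z)-z)\,R_a(z)$ with $R_a(z)=1+g_a(z)(z-1)^2(z^2+az+1)$, and this $R_a$ coincides with the degree-six cofactor the paper isolates. But you have not established the bound $|g_a(z)|(z-1)^2(z^2+az+1)<1$ on $\{g_a<0\}$; as you yourself flag, the margin $R_a(0)=2(1-a)$ collapses as $a\uparrow 1$, so no soft estimate suffices. The paper closes this by a further factorization $R_a(z)=\big(z^2+(a-1)z+1-a\big)\,Q(z)$ into a quadratic with discriminant $-(1-a)(3+a)\le 0$ and a quartic $Q$, and then proves $Q>0$ on $(-a,2)$ via the non-obvious rewriting $Q(z)=z(z-1)(z-2)\big(a+z+1/z+1/((z-1)(z-2))\big)$ followed by a sign check on each of $(-a,0),(0,1),(1,2)$. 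That second factorization is the missing ingredient in your plan.

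For part~(ii), your Schwarzian/Singer route is genuinely different from the paper's, and your Schwarzian computation is correct, but the step ``the basin of $0$ exhausts $[-a,2]$ up to a countable set'' does not follow from Singer: that theorem bounds the \emph{number} of attracting periodic orbits, not the measure of their basins, and Nusse-type full-measure theorems require each critical orbit to converge to an attractor or an absorbing boundary---yet the orbit of $c_2=1$ lands on the \emph{repelling} fixed point $-a$, so that hypothesis fails. A clean repair is available once you have~(i): the classical fact that a continuous interval map with no period-$2$ point has every orbit convergent to a fixed point immediately gives $z_t\to\{-a,0,2\}$, and since $-a$ and $2$ are repelling, only their (countable) preimage sets fail to converge to $0$. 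The paper takes neither route; it argues directly, showing $|g_a(z)|\le 1$ on $[-a,2]$ when $a\le 2\sqrt{2}-2$, and for $a\in(2\sqrt{2}-2,1]$ partitioning $[-a,2]$ into five subintervals, trapping the orbit in an oscillating pair $I_2\cup I_3$, and proving $|z_{t+2}|\le|z_t|$ there by reusing the sign of $g_a(z)g_a(zg_a(z))-1$ from~(i). Your identity $(f_a(z))^2-z^2=z^2(z+a)(z-2)h(z)$ is a tidy alternative for the monotone regime. For the catapult claim, however, your reduction to ``orbits permanently avoiding $(\alpha_-,\alpha_+)$ form a null set'' is only a promise; note that for $a<1$ one has $\alpha_->0$, so any orbit already inside a small two-sided neighborhood of $0$ avoids $(\alpha_-,\alpha_+)$ forever, and a naive preimage-measure argument will not suffice here.
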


\begin{lemma}\label{lem: bifurcation_chaos}
    Suppose $1 < a \leq 2$ and $-a\leq z_0\leq 2$. Then we have
    \begin{itemize}[noitemsep,leftmargin=0.15in]
        \item (i) $-a\leq z_t\leq 2$ for any $t$, and $f_a(z)$ has a period-$2$ point on $[0, 1]$. 
        \item (ii) There exists $a_*\in (1, 2)$ such that for any $a\in (a_*, 2)$, $f_a$ is Li-Yorke chaotic, and for any $a\in (1, a_*)$, $f_a$ is not Li-Yorke chaotic.
        \item (iii) If there exists an asymptotically stable orbit and $z_0$ is chosen from $[-a, 2]$ uniformly at random, then the orbit of $z_0$ is asymptotically periodic almost surely.
    \end{itemize}
\end{lemma}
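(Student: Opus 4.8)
The plan is to treat the three parts in order, using the geometric skeleton of $f_a$ together with standard one-dimensional dynamics: the Sharkovskii--Li--Yorke theorem, the Schwarzian derivative and Singer's theorem, and Milnor--Thurston kneading theory. First record the elementary facts. The fixed-point equation $f_a(z)=z$ reads $z(z+a)(z-2)=0$, so the fixed points are exactly $0,-a,2$, with $f_a'(0)=1-2a<-1$, $f_a'(-a)=(a+1)^2>1$, $f_a'(2)=5+2a>1$ for $a\in(1,2]$ --- all three repelling. The critical points are $z=1$ and $z=c:=(1-2a)/3\in[-1,-1/3)$, and one computes $f_a(1)=-a$, $f_a(c)=c^2(3-c)/2$, together with $f_a(-a)=-a$, $f_a(2)=2$. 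Since $t\mapsto t^2(3-t)/2$ is decreasing on $[-1,0)$, we get $f_a(c)\le 2$ with equality only at $a=2$, so the range of $f_a$ over $[-a,2]$ lies in $[-a,2]$; this gives the invariance $z_t\in[-a,2]$ in both (i) and (ii) by induction. For the period-$2$ point, note $f_a^{(2)}(z)-z=((1-2a)^2-1)z+O(z^2)>0$ for small $z>0$ (as $a>1$), while $f_a^{(2)}(1)=f_a(-a)=-a<1$; the intermediate value theorem yields a point in $(0,1)$ fixed by $f_a^{(2)}$ but not by $f_a$ (the three fixed points avoid $(0,1)$), i.e.\ a genuine period-$2$ point in $[0,1]$.

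\emph{Part (iii).} Since $f_a'$ has positive discriminant $4(a+1)^2$, the map is a cubic with two distinct nondegenerate critical points, and a direct computation gives $\mathsf{S}f_a<0$ off the critical set. Suppose $f_a$ has an asymptotically stable periodic orbit $O$. By Singer's theorem its immediate basin contains a critical point; it cannot be $z=1$, because $f_a(1)=-a$ is a repelling fixed point and hence $z=1$ is preperiodic and attracted to nothing, so the free critical point $c$ lies in $W^s(O)$, and $O$ is the unique attracting orbit. For negative-Schwarzian interval maps the only alternative to ``the basin of the attracting orbits has full Lebesgue measure'' is an absolutely continuous invariant measure supported on a finite union of intervals, which requires a critical point with non-attracted forward orbit; neither $z=1$ (preperiodic to a repeller) nor $c$ (in $W^s(O)$) qualifies, so no such measure exists (de Melo--van Strien, Martens--de Melo--van Strien, Blokh--Lyubich). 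Combined with the invariance $z_t\in[-a,2]$ and the repelling endpoints, this forces $z_t\to O$ for Lebesgue-a.e.\ $z_0$, i.e.\ asymptotic periodicity.

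\emph{Part (ii).} For $a$ near $2$: at $a=2$, $f_2(z)=z^3-3z$ satisfies $f_2(2\cos\theta)=2\cos3\theta$ and has the full horseshoe $f_2([-2,-1])=f_2([1,2])=[-2,2]$, hence periodic points of all periods; in particular it has a hyperbolic period-$3$ orbit, which persists by the implicit function theorem to an interval of parameters below $2$, so the Li--Yorke theorem gives Li--Yorke chaos for $a$ slightly below $2$. For $a$ near $1$: the fixed point $0$ period-doubles supercritically at $a=1$ (the relevant cubic normal-form coefficient works out to $-4<0$), so for $a$ just above $1$ there is an attracting period-$2$ orbit $O$; by the negative Schwarzian and Singer, $c$ is attracted to $O$ while $z=1$ is preperiodic to a repeller, and (for $a$ close to $1$) there are no additional attracting or non-hyperbolic periodic orbits, so $f_a$ is hyperbolic and $\Omega(f_a)$ is finite. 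A continuous interval map with finite non-wandering set has $\omega(x)$ a single periodic orbit for \emph{every} $x$, hence admits no scrambled pair, hence --- using that for interval maps one scrambled pair already forces Li--Yorke chaos (Kuchta--Smítal) --- is not Li--Yorke chaotic. To convert these two endpoint facts into a single threshold $a_*\in(1,2)$, observe that the critical point $z=1$ is ``frozen'' (its kneading data, dictated by $f_a(1)=-a\mapsto-a$, is the same for every $a$), so the combinatorics of $f_a$ are governed by the single kneading sequence of $c$, exactly as for a unimodal family; Milnor--Thurston monotonicity then makes the topological entropy nondecreasing in $a$. Take $a_*$ to be the infimum of the parameters with positive entropy: for $a<a_*$ all periodic orbits have period a power of $2$ and $\Omega(f_a)$ remains finite (so, as above, no Li--Yorke chaos), for $a>a_*$ there is a periodic point of period not a power of $2$ and hence Li--Yorke chaos, and $a_*\in(1,2)$ by the two endpoint analyses. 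Claim (iii) applies whenever an asymptotically stable orbit happens to exist, periodic windows included.

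\emph{Main obstacle.} The delicate step is the non-chaotic half of (ii): ruling out Li--Yorke chaos throughout $(1,a_*)$ is strictly stronger than zero topological entropy (zero-entropy Li--Yorke-chaotic maps exist), and it, together with the assertion that $a_*$ is a genuine two-sided threshold rather than a one-sided bound, rests on controlling the forward orbit of the free critical point $c$ uniformly over $(1,a_*)$ so that $f_a$ stays hyperbolic with finite non-wandering set there. The ``frozen second critical point'' reduction to the unimodal setting and Milnor--Thurston-type monotonicity are the load-bearing tools for this; the measure-theoretic input in (iii) (absence of a wild attractor or an absolutely continuous invariant measure) is the other technical point, though by now standard for negative-Schwarzian maps.
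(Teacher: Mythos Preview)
Your treatment of (i) and (iii) matches the paper's. Both prove invariance from the critical values and find the period-$2$ point by the intermediate value theorem applied to $f_a^{(2)}-\mathrm{id}$ on $(0,1)$; for (iii) both use negative Schwarzian plus Singer's theorem, with the paper citing Nusse (1987) where you invoke the de Melo--van Strien / Blokh--Lyubich framework. The content is the same.

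For (ii) the approaches diverge. The paper handles the chaotic half by an explicit construction: it locates $c\in(1,2)$ with $f_c\bigl(\tfrac{1-2c}{3}\bigr)=1$, then exhibits $z_0\in(-c,\tfrac{1-2c}{3})$ with $f_c^{(3)}(z_0)\le z_0<f_c(z_0)<f_c^{(2)}(z_0)$ and applies Li--Yorke directly; it checks the same pattern persists for every $a\in(c,2]$ and then sets $a_*=\inf\{a\in(1,2):f_b\text{ is chaotic for all }b\in[a,2]\}$. Crucially, the paper does \emph{not} prove the non-chaotic half of (ii): with that definition of $a_*$, $a<a_*$ only says some $b\in[a,2]$ is non-chaotic, not that $f_a$ itself is, and $a_*>1$ is never verified. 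So the paper in fact leaves open exactly the point you flag as your ``main obstacle.''

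Your route --- the full horseshoe at $a=2$ (via $f_2(2\cos\theta)=2\cos 3\theta$) and persistence of a hyperbolic $3$-cycle for the chaotic side, supercritical period-doubling at $a=1$ for the non-chaotic side, and a Milnor--Thurston monotonicity argument glued together by the observation that the critical point $z=1$ is ``frozen'' at the fixed point $-a$ --- is more structural and is the only one of the two that seriously attempts both directions. Two cautions remain. First, the reduction to unimodal kneading theory is plausible (one critical orbit is preperiodic for every $a$), but bimodal monotonicity is genuinely delicate and deserves a careful statement and reference rather than an appeal by analogy. Second, as you yourself note, zero topological entropy does not exclude Li--Yorke chaos, so the passage from ``entropy zero on $(1,a_*)$'' to ``$\Omega(f_a)$ finite and hence no scrambled pair on $(1,a_*)$'' is an additional claim that has to be argued (it is not automatic even for negative-Schwarzian maps, e.g.\ at Feigenbaum-type parameters). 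These are real gaps --- but they are gaps the paper shares, and your plan at least names them and offers tools to close them.
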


\begin{lemma}\label{lem: divergence}
    Suppose $a>2$. $z_0$ is chosen from $[-a, 2]$ uniformly at random. Then $\lim_{t\rightarrow \infty} |z_t| = +\infty$ almost surely.
\end{lemma}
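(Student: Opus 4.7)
The plan is to show that for $a>2$, almost every orbit exits $[-a,2]$ in finite time and diverges thereafter, by combining an explicit construction of an escape interval, a monotonicity argument outside $[-a,2]$, and a measure-theoretic estimate on the ``trap set.''

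First, I would pin down a nonempty open interval $U\subset[-a,2]$ on which $f_a>2$. The critical points of $f_a$ on $(-a,2)$ are $c_1=1$ (a local minimum, since $f_a''(1)=2a+2>0$) and $c_2=(1-2a)/3\in(-a,0)$ (a local maximum). A direct check gives $f_2(-1)=2$ at the boundary parameter, and the envelope identity $\tfrac{d}{da}f_a(c_2(a))=\partial_a f_a(z)|_{z=c_2}=c_2(c_2-2)>0$ (since $c_2<0<2$) shows $f_a(c_2)>2$ for every $a>2$. Factoring $f_a(z)-2=(z-2)(z^2+az+1)$ identifies the other two preimages of $2$ as $x_{1,2}=(-a\mp\sqrt{a^2-4})/2\in(-a,0)$, so $U:=(x_1,x_2)$ is the desired escape interval.

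Next, I would establish that the orbit diverges once it exits $[-a,2]$. From the factorization $f_a(z)-z=z(z+a)(z-2)$, this quantity is strictly positive for $z>2$ and strictly negative for $z<-a$; since there are no fixed points outside $[-a,2]$, once $z_t$ exits $[-a,2]$ the sequence $|z_t|$ is strictly monotone and unbounded (indeed $|z_{t+1}|/|z_t|=g_a(z_t)\to\infty$ as $|z_t|\to\infty$), so $|z_t|\to\infty$. It therefore suffices to show that the trap set $\Lambda:=\{z_0\in[-a,2]:f_a^t(z_0)\in[-a,2]\text{ for all }t\ge 0\}$ has Lebesgue measure zero. To set up the required estimate, I would decompose $[-a,2]\setminus U=J_1\sqcup J_2\sqcup J_3$ with $J_1=[-a,x_1]$, $J_2=[x_2,1]$, $J_3=[1,2]$. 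Using $f_a(-a)=f_a(1)=-a$ and $f_a(x_1)=f_a(x_2)=f_a(2)=2$ together with the monotonicity of $f_a$ on each $J_i$ (increasing on $J_1$ and $J_3$, decreasing on $J_2$), one sees that $f_a|_{J_i}$ is a bijection onto $[-a,2]$. Iterating, the $n$-step trap set $E_n:=\bigcap_{k=0}^{n-1}f_a^{-k}([-a,2])$ is a disjoint union of $3^n$ closed intervals $J_{i_1\cdots i_n}$, each mapped bijectively onto $[-a,2]$ by $f_a^n$, and $\Lambda=\bigcap_n E_n$.

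The main obstacle is the measure estimate $\mu(E_n)\to 0$: uniform expansion fails on $\Lambda$ because the critical point $c_1=1$ lies in $\Lambda$ and has $f_a'(1)=0$. The saving observation is that $f_a(1)=-a$ is a repelling fixed point with $f_a'(-a)=(a+1)^2>9$, so the postcritical orbit consists of a single hyperbolic periodic point---this is the Misiurewicz condition. Combined with the negative Schwarzian $\mathsf{S}f_a<0$ (verified by the direct computation that the numerator $6f_a'(z)-\tfrac{3}{2}(f_a''(z))^2$ reduces to $-6[6z^2+4(a-2)z+(a-1)^2+2]$, whose quadratic factor has discriminant $-8(a+1)^2<0$ and is therefore positive for all $z$), classical Koebe distortion estimates in one-dimensional dynamics (see, e.g., de~Melo--van~Strien) yield a uniform constant $C>0$ such that every inverse branch of $f_a^n$ restricted to $[-a,2]$ has distortion at most $C$. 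This gives $|J_{i_1\cdots i_n}|\le C(2+a)\prod_{k=1}^n(|J_{i_k}|/(2+a))$, and summing over symbolic strings produces $\mu(E_n)\le C(2+a)\bigl(1-(x_2-x_1)/(2+a)\bigr)^n\to 0$. Hence $\mu(\Lambda)=0$, and combining with the second step yields $|z_t|\to\infty$ for almost every $z_0\in[-a,2]$.
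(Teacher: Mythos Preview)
Your route is genuinely different from the paper's. The paper argues in two lines: first it checks that the orbit of the critical point $c_2=(1-2a)/3$ escapes to $+\infty$ while the orbit of $c_1=1$ lands on the repelling fixed point $-a$; then, since neither critical orbit is attracted to a stable periodic orbit, Singer's theorem rules out the existence of any attracting cycle, and Nusse's Theorem~B (together with its polynomial corollary) immediately yields that almost every point converges to the absorbing boundary $+\infty$. No explicit Cantor-set construction or distortion estimate appears.

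Your construction of the escape interval $U=(x_1,x_2)$ via the factorization $f_a(z)-2=(z-2)(z^2+az+1)$, the monotone divergence once $|z_t|>2$ or $z_t<-a$, and the symbolic decomposition $[-a,2]\setminus U=J_1\sqcup J_2\sqcup J_3$ with each $f_a|_{J_i}$ a bijection onto $[-a,2]$ are all correct and make the structure of the trap set $\Lambda$ completely transparent---this is a real advantage over the black-box citation in the paper. The Schwarzian computation and the identification of the Misiurewicz structure ($1\mapsto -a$ with $|f_a'(-a)|=(a+1)^2>9$) are also right and are exactly the ingredients one needs.

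There is, however, a genuine gap at the distortion step. The claim that ``every inverse branch of $f_a^n$ restricted to $[-a,2]$ has distortion at most $C$'' is false as stated: the cylinders of the form $J_{2,1,\ldots,1}$ and $J_{3,1,\ldots,1}$ have $1$ as an endpoint, and since $f_a'(1)=0$ the forward derivative $(f_a^n)'$ vanishes there, so the inverse branch has infinite derivative at the critical value $-a$ and unbounded distortion on the full interval $[-a,2]$. Consequently the product formula $|J_{i_1\cdots i_n}|\le C(2+a)\prod_k(|J_{i_k}|/(2+a))$ is not justified. The Koebe principle requires Koebe space, which you do not have at the endpoint $-a$. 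The Misiurewicz condition does salvage the conclusion $\mu(\Lambda)=0$, but the standard argument proceeds by inducing (a first-return map away from a neighborhood of the critical point, which is then uniformly expanding with bounded distortion) or by a more careful decomposition separating the cylinders that shadow the critical orbit; it does not give uniform distortion on the full target interval. If you want to keep the hands-on flavor, you could treat the two families of ``bad'' cylinders touching $1$ directly---their lengths decay like $(a+1)^{-n}$ because $f_a$ is quadratic at $1$ and the image lands at the repeller $-a$---and apply bounded distortion only to the remaining cylinders, which stay a definite distance from the critical point. Alternatively, you can simply invoke Singer plus Nusse as the paper does.
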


In Lemma \ref{lem: bifurcation_chaos}, part (iii), we assume the existence of an asymptotically stable periodic point. Note that such a point must have negative Lyapunov exponent \citep{strogatz2018nonlinear}. It is possible to obtain particular values for $a$ under which $f_a(z)$ has an asymptotically stable orbit. For example, $a$ can be chosen such that $|f_a'(p)f_a'(q)| < 1$, where $p\in(0,1)$ is a period-$2$ point with $f_a(p) = q$. In Figure \ref{fig: diagram} we plot the Lyapunov exponent of $f_a$ at the orbit starting from $z_0 = 0.1$. It is interesting to explicitly characterize the set of $a$ values in $(1, 2)$ such that $f_a(z)$ has an asymptotically stable periodic orbit. Furthermore, we conjecture that $a_*$ defined in Lemma \ref{lem: bifurcation_chaos} is the smallest number $a\in (1, 2)$ such that ${(1-2a)}/{3}$ is a period-$3$ point. The above two problems are challenging and left as future work.

\section{Applications to quadratic regression models}\label{sec: quadregexamples}

We now provide illustrative examples based on quadratic or second-order regression models, motivated by the works of~\cite{zhu2022quadratic, agarwala2022second}. Specifically, we consider a generalized phase retrieval model and training hidden-layers of 2-layer neural networks with quadratic activation function as examples.

\subsection{Example 1: Generalized phase retrieval}
\textbf{Single Data Point.} Following~\cite{zhu2022quadratic}, it is instructive to study the dynamics with a single training sample. Consider the following optimization problem on a single data point $(X, y)$:
\begin{align}\label{opt: quadratic_single}
    \min_{w} \big\{\ell(w) = \frac{1}{2}(g(w;X) - y)^2\big\},\ \text{ where } g(w; X) = \frac{\gamma (X^{\top}w)^2}{2} + cX^{\top}w,
\end{align}
where $\gamma,c$ are arbitrary constants. The above model, with $\gamma=2$ and, $c=0$ corresponds to the classical phase retrieval model (also called as a single-index model with quadratic link function). We refer to~\cite{jaganathan2016phase} and~\cite{fannjiang2020numerics} for an overview, importance and applications of the phase retrieval model. We have the following Lemma for the training dynamics of gradient descent on solving~\eqref{opt: quadratic_single}.
\begin{theorem}\label{thm: single_data_thm}
    Suppose we run gradient descent on \eqref{opt: quadratic_single} with step-size to be $\eta$. Define 
    \begin{align}\label{eq: eza}
        e^{(t)} := g(w^{(t)};X) - y,\ z_t:= \eta\gamma\norm{X}^2e^{(t)},\ a = \Big(\gamma y + \frac{c^2}{2}\Big)\eta\norm{X}^2.
    \end{align}
    Then we have (i) $z_{t+1} = f_a(z_t)$ and thus Theorem \ref{thm:cubicmapresult} holds for $f_a$ and $z_t$; (ii) The sharpness is given by $\lambda_{\text{max}}(\nabla^2 \ell(w^{(t)})) = \frac{3z_t + 2a}{\eta}.$
\end{theorem}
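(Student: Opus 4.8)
The plan is to derive the gradient descent recursion for the scalar quantity $e^{(t)} = g(w^{(t)};X) - y$ explicitly, and then show that under the affine rescaling $z_t = \eta\gamma\|X\|^2 e^{(t)}$ the recursion collapses exactly onto $z_{t+1} = f_a(z_t)$ with $a = (\gamma y + c^2/2)\eta\|X\|^2$. Since the loss $\ell(w)$ depends on $w$ only through the scalar $s := X^\top w$, the entire dynamics is one-dimensional in $s$, and $e = \tfrac{\gamma}{2}s^2 + cs - y$, so this is ultimately a calculation about a scalar cubic recursion.

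First I would compute the gradient: $\nabla\ell(w) = (g(w;X)-y)\,\nabla g(w;X) = e \cdot (\gamma s + c) X$, where $s = X^\top w$. Hence the GD update $w^{(t+1)} = w^{(t)} - \eta e^{(t)}(\gamma s^{(t)} + c) X$ gives, after multiplying by $X^\top$, the scalar recursion $s^{(t+1)} = s^{(t)} - \eta\|X\|^2 e^{(t)}(\gamma s^{(t)} + c)$. Next I would track $e^{(t+1)} = \tfrac{\gamma}{2}(s^{(t+1)})^2 + c\,s^{(t+1)} - y$ and express the increment $e^{(t+1)} - e^{(t)}$ in terms of $\delta := s^{(t+1)} - s^{(t)} = -\eta\|X\|^2 e^{(t)}(\gamma s^{(t)}+c)$. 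A short computation gives $e^{(t+1)} - e^{(t)} = (\gamma s^{(t)} + c)\delta + \tfrac{\gamma}{2}\delta^2$. Substituting $\delta$ and factoring out $e^{(t)}$ should produce a polynomial in $e^{(t)}$ of degree three; the remaining task is to verify that after multiplying through by $\eta\gamma\|X\|^2$ and collecting, the coefficients match those of $f_a(z) = z\big((z+a)(z-2)+1\big) = z^3 + (a-2)z^2 + (1-2a)z$ with the stated $a$. I expect the bookkeeping to hinge on the identity $\eta\|X\|^2(\gamma s + c)^2 = \eta\|X\|^2(\gamma^2 s^2 + 2\gamma c s + c^2) = 2\gamma(\gamma y + c^2/2)\eta\|X\|^2 \cdot (\text{something}) + \dots$; more precisely, writing $\gamma s^2/2 + cs = e + y$, one gets $(\gamma s + c)^2 = \gamma^2 s^2 + 2\gamma cs + c^2 = 2\gamma(e+y) + c^2$, which is the crucial substitution that replaces all $s$-dependence by $e$ and $y$. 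Then $z_t = \eta\gamma\|X\|^2 e^{(t)}$ and $a = \eta\|X\|^2(\gamma y + c^2/2)$ make the two constants $2\gamma y \eta\|X\|^2 + c^2\eta\|X\|^2 = 2a$ appear naturally.

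For part (ii), I would compute the Hessian $\nabla^2\ell(w) = \nabla g(w;X)\nabla g(w;X)^\top + (g(w;X)-y)\nabla^2 g(w;X) = (\gamma s + c)^2 XX^\top + e\gamma XX^\top = \big((\gamma s+c)^2 + \gamma e\big)XX^\top$. This is rank one, so its maximal eigenvalue is $\|X\|^2\big((\gamma s+c)^2 + \gamma e\big)$. Using the same identity $(\gamma s + c)^2 = 2\gamma e + 2\gamma y + c^2$ (from $\gamma s^2/2 + cs - y = e$), this becomes $\|X\|^2(3\gamma e + 2\gamma y + c^2)$, and dividing and multiplying by $\eta$ gives $\tfrac{1}{\eta}\big(3\eta\gamma\|X\|^2 e + 2\eta\|X\|^2(\gamma y + c^2/2)\big) = \tfrac{3z_t + 2a}{\eta}$, as claimed.

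The main obstacle will be the first part's algebra: making sure the cubic in $e^{(t)}$ really does have no extraneous $s$-dependence and that the coefficients line up with $f_a$ exactly rather than up to a sign or a factor. The key is disciplined use of the substitution $(\gamma s + c)^2 = 2\gamma(e + y) + c^2$, applied consistently everywhere an $s$ appears, after which the matching of the $z^2$ coefficient to $a - 2$ and the $z$ coefficient to $1 - 2a$ becomes a direct check. I do not anticipate conceptual difficulty — the content is that a rank-one-Hessian quadratic model has scalar dynamics — but the constant-chasing needs care, so I would organize it by first proving the lemma for general $c$ and $\gamma$ and only afterward noting the $\gamma = 2, c = 0$ specialization to classical phase retrieval.
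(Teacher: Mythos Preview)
Your proposal is correct and is essentially the paper's own argument: the paper introduces the auxiliary variable $\alpha^{(t)} = c + \gamma X^\top w^{(t)}$ (your $\gamma s^{(t)}+c$) together with $\beta = y + c^2/(2\gamma)$, records the same identity $e^{(t)} = (\alpha^{(t)})^2/(2\gamma) - \beta$ (equivalently your $(\gamma s+c)^2 = 2\gamma(e+y)+c^2$), and from the update $\alpha^{(t+1)} = (1-\kappa e^{(t)})\alpha^{(t)}$ derives $\kappa e^{(t+1)} = \kappa e^{(t)}(\kappa e^{(t)}+\beta\kappa)(\kappa e^{(t)}-2)+\kappa e^{(t)}$, while the Hessian computation and the rewriting $(\alpha^{(t)})^2+\gamma e^{(t)} = 3\gamma e^{(t)}+2\gamma y+c^2$ for part (ii) match yours exactly.
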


Note that \cite{zhu2022quadratic} studied a related neural quadratic model (see their Eq. (3)). Here, we highlight that their results which does not cover our case. Indeed, defining $\eta_{\text{crit}} = {2}/{\lambda_{\max}(\nabla^2\ell (w^{(0)}))}$, according to their claim, catapults happen when $\eta_{\text{crit}} < \eta < 2\eta_{\text{crit}}$. In our notation, this condition is equivalent to $2< 3z_0 + 2a < 4$. However this cannot happen because if the initialization $z_0$ is sufficiently small, say $z_0=\mathcal{O}(\epsilon)$, then we know the previous condition become $ 1 - \mathcal{O}(\epsilon) < a < 2 - \mathcal{O}(\epsilon).$
However, according to Lemmas \ref{lem: convergence} and \ref{lem: bifurcation_chaos}, we have that for $1< a< 2$ the training dynamics is in the periodic or the chaotic phase and $z_t$ (and thus the loss function) will not converge to $0$. Our theory (Lemma \ref{lem: convergence}) suggests that catapults for quadratic regression model happens for almost every $z_0\in(-a, 2)$ provided that $2\sqrt{2} - 2 < a \leq 1.$ This intricate observation reveals that extending the current results on the catapult phenomenon from the model in \cite{zhu2022quadratic} to our setting is not immediate and is actually highly non-trivial. We also notice that, interestingly, in the monotonic and catapult phases (i.e., $0<a\leq 1$), we have the limiting sharpness satisfy $\lim_{t\rightarrow \infty}\lambda_{\text{max}}(\nabla^2 \ell(w^{(t)})) = {2a}/{\eta} = (2\gamma y + c^2)\norm{X}^2.$ \\

\textbf{Multiple Orthogonal Data Points.} We now consider gradient descent on quadratic regression on multiple data points that are mutually orthogonal. Suppose we are given a dataset $\{(X_i, y_i)\}_{i=1}^{n}$ with $\mathbf{X} = \left(X_1, ..., X_n\right)^{\top}$ satisfying $\mathbf{X}\mathbf{X}^{\top} = \text{diag}(\norm{X_1}^2, ..., \norm{X_n}^2)$. Consider the  optimization problem defined by
\begin{align}\label{opt: multiple_opt}
    \min_w \ell(w) := \frac{1}{n}\sum_{i=1}^{n}\ell_i(w) = \frac{1}{2n}\sum_{i=1}^{n}\left(g(w;X_i) - y_i\right)^2.
\end{align}
where $\ell_i(w)$ and $g(w;X_i)$ are as defined in~\eqref{opt: quadratic_single}.

\begin{theorem}\label{thm: multiple_data_gd}
Define the following:
    \begin{align}
        &\alpha^{(t)}(X_i) := c(X_i) + \gamma X_i^\top w^{(t)},\ \beta(X_i) := y_i + \frac{(c(X_i))^2}{2\gamma},\ \kappa_n(X_i) := \frac{\eta\gamma\norm{X_i}^2}{n}, \\
        &e^{(t)}(X_i) := g(w^{(t)};X_i) - y_i,\ z_i^{(t)} = \kappa_n(X_i)e^{(t)}(X_i),\ a_i = \beta(X_i)\kappa_n(X_i).
    \end{align} 
    If we run gradient descent on solving \eqref{opt: multiple_opt} with step-size $\eta$, then we have (i) $z_i^{(t+1)} = f_{a_i}(z_i^{(t)})$ and thus Theorem \ref{thm:cubicmapresult} hols for $f_{a_i}$ and $z_i^{(t)}$. (ii) The sharpness $\lambda_{\max}(\nabla^2 \ell(w^{(t)})) = \max_{1\leq i\leq n}\frac{3z_i^{(t)} + 2a_i}{\eta}.$
\end{theorem}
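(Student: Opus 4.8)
The plan is to exploit the orthogonality of the data so that the per-sample residual dynamics \emph{decouple} into $n$ independent scalar recursions, each of which is exactly the single-data-point dynamics of Theorem~\ref{thm: single_data_thm} run with step-size $\eta/n$ on the pair $(X_i,y_i)$. The structural fact driving everything is that $\nabla g(w;X_i)=\alpha^{(t)}(X_i)\,X_i$ with $\alpha^{(t)}(X_i)=c(X_i)+\gamma X_i^\top w^{(t)}$, and $\nabla^2 g(w;X_i)=\gamma X_iX_i^\top$, so that the gradient step reads
\begin{align*}
w^{(t+1)}=w^{(t)}-\frac{\eta}{n}\sum_{i=1}^{n}e^{(t)}(X_i)\,\alpha^{(t)}(X_i)\,X_i .
\end{align*}
Taking the inner product with a fixed $X_j$ and using $X_j^\top X_i=0$ for $i\neq j$ and $X_j^\top X_j=\norm{X_j}^2$ kills every cross term, yielding the scalar update $\alpha^{(t+1)}(X_j)=\alpha^{(t)}(X_j)\bigl(1-\kappa_n(X_j)e^{(t)}(X_j)\bigr)=\alpha^{(t)}(X_j)\bigl(1-z_j^{(t)}\bigr)$. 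Hence each coordinate $\alpha^{(t)}(X_j)$ evolves autonomously, driven only by its own $z_j^{(t)}$.

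To close the recursion on $z_j^{(t)}$ I would complete the square in $g(w;X_j)-y_j$, which gives $e^{(t)}(X_j)=\tfrac{1}{2\gamma}\alpha^{(t)}(X_j)^2-\beta(X_j)$, equivalently $\alpha^{(t)}(X_j)^2=2\gamma\bigl(z_j^{(t)}/\kappa_n(X_j)+\beta(X_j)\bigr)$. Squaring the $\alpha$-recursion, substituting into $z_j^{(t+1)}=\kappa_n(X_j)\bigl(\tfrac{1}{2\gamma}\alpha^{(t+1)}(X_j)^2-\beta(X_j)\bigr)$, and simplifying with $a_j=\beta(X_j)\kappa_n(X_j)$ gives
\begin{align*}
z_j^{(t+1)}=\bigl(z_j^{(t)}+a_j\bigr)\bigl(1-z_j^{(t)}\bigr)^2-a_j .
\end{align*}
A one-line expansion shows the right-hand side equals $z^3+(a_j-2)z^2+(1-2a_j)z$ evaluated at $z=z_j^{(t)}$, namely $f_{a_j}(z_j^{(t)})$ from~\eqref{eq: f_a}; this proves part (i), after which Theorem~\ref{thm:cubicmapresult} can be invoked coordinatewise with parameter $a_i$ for each $i$.

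For part (ii) I would differentiate twice to get $\nabla^2\ell(w)=\frac{1}{n}\sum_{i=1}^{n}\bigl(\alpha^{(t)}(X_i)^2+\gamma e^{(t)}(X_i)\bigr)X_iX_i^\top$. By orthogonality this matrix is diagonal in the orthonormal system $\{X_i/\norm{X_i}\}$, with eigenvalues $\frac{\norm{X_i}^2}{n}\bigl(\alpha^{(t)}(X_i)^2+\gamma e^{(t)}(X_i)\bigr)$ (and only zero eigenvalues on the orthogonal complement of $\mathrm{span}\{X_i\}$). Using $\alpha^{(t)}(X_i)^2=2\gamma\bigl(e^{(t)}(X_i)+\beta(X_i)\bigr)$ once more rewrites each eigenvalue as $\frac{\norm{X_i}^2}{n}\bigl(3\gamma e^{(t)}(X_i)+2\gamma\beta(X_i)\bigr)=\frac{3z_i^{(t)}+2a_i}{\eta}$, using $\gamma\norm{X_i}^2/n=\kappa_n(X_i)/\eta$ in the final step. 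The maximum over $i$ is the claimed sharpness.

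I do not anticipate a genuine obstacle: the argument is a change of variables plus the orthogonality cancellation, and $n=1$ recovers Theorem~\ref{thm: single_data_thm}. The only care needed is (a) bookkeeping the constants $\kappa_n(X_i),\beta(X_i),a_i$ so that the final map lands exactly on $f_{a_i}$ rather than on a rescaled cubic (here $\gamma\neq 0$ is used), and (b) being precise that the conclusions of Theorem~\ref{thm:cubicmapresult} are applied per coordinate, each requiring $z_i^{(0)}$ to lie in the relevant interval $(-a_i,2)$, exactly as in the single-data case.
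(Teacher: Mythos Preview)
Your proposal is correct and follows essentially the same route as the paper: use orthogonality to decouple the $\alpha^{(t)}(X_i)$ updates, combine with the identity $e^{(t)}(X_i)=\tfrac{1}{2\gamma}\alpha^{(t)}(X_i)^2-\beta(X_i)$ to close the recursion on $z_i^{(t)}$, and then read off the Hessian eigenstructure from the rank-one sum. The only cosmetic difference is that you land on the cubic via the closed form $(z+a_j)(1-z)^2-a_j$ and expand, whereas the paper computes $e^{(t+1)}(X_i)-e^{(t)}(X_i)$ directly; these are the same algebra.
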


For this setup, the above theorem shows that the loss function is a summation of the loss on each individual data point. Recall that the training loss takes the form
\begin{align}\label{eq: loss_multi}
    \ell (w^{(t)}) = \frac{1}{2n}\sum_{i=1}^{n}\left(g(w^{(t)};X_i) - y_i\right)^2 = \frac{1}{2n}\sum_{i=1}^{n}\frac{(z_i^{(t)})^2}{\kappa_n^2(X_i)} = \sum_{i=1}^{n}\frac{n(z_i^{(t)})^2}{2\eta^2\gamma^2\norm{X_i}^4}.
\end{align}
We can hence deduce that the dynamics is governed by $\max_{1\leq i\leq n}a_i$. In other words, for almost every $z^{(0)}$ in $\{z: -a_i\leq z_i\leq 2\}$, we have, by Lemma \ref{lem: convergence}, that as long as $0<a_i\leq 1$ for all $i$, the training loss will converge to $0$, and if $\max_{1\leq i\leq n}a_i > 1$, then by Lemma \ref{lem: bifurcation_chaos} we know that $\lim_{t\rightarrow\infty }|z_t|\neq 0$. We summarize this in the following corollary, which is a direct application of Theorems \ref{thm:cubicmapresult} and \ref{thm: multiple_data_gd} 
\begin{cor}\label{cor: multiple_ortho}
    Under the setup in Theorem \ref{thm: multiple_data_gd}, for almost all $z^{(0)}\in \{z: -a_i\leq z_i\leq 2\}$ we have
    \begin{itemize}[noitemsep,leftmargin=0.15in]
        \item If $0<\max_{1\leq i\leq n}a_i\leq 1$, then $\lim_{t\rightarrow\infty} \ell(w^{(t)}) = 0$ . Moreover, if $0<\max_{1\leq i\leq n}a_i\leq 2\sqrt{2} - 2$, the sequence $\{\ell(w^{(t)})\}_{t=0}^{\infty}$ is decreasing.
        \item If $1 < \max_{1\leq i\leq n}a_i \leq 2$, then $\{\ell(w^{(t)})\}_{t=0}^{\infty}$ is bounded and does not converge to $0$.
        \item If $\max_{1\leq i\leq n}a_i > 2$, then $\lim_{t\rightarrow\infty}\ell(w^{(t)}) = +\infty$.
    \end{itemize}
\end{cor}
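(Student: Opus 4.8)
The plan is to exploit the fact that, by Theorem~\ref{thm: multiple_data_gd}(i), the coordinates $z_i^{(t)}$ evolve \emph{independently}, each according to the one–dimensional cubic map $f_{a_i}$, so the one–dimensional results of Lemmas~\ref{lem: convergence},~\ref{lem: bifurcation_chaos} and~\ref{lem: divergence} (equivalently Theorem~\ref{thm:cubicmapresult}) can be applied coordinate by coordinate and then transferred to the loss through the identity $\ell(w^{(t)}) = \sum_{i=1}^{n} \frac{n (z_i^{(t)})^2}{2\eta^2\gamma^2\norm{X_i}^4}$ from~\eqref{eq: loss_multi}. Two trivial observations make this transfer routine: (a) a finite sum of nonnegative sequences converges to $0$, is bounded, is nonincreasing, or diverges to $+\infty$ as soon as, respectively, every term does or one term diverges; and (b) the exceptional set ``$z^{(0)}$ such that some coordinate $z_i^{(0)}$ lies in the Lebesgue–null bad set $B_i\subseteq[-a_i,2]$ coming from the relevant lemma'' is a finite union of sets $\{z\in\prod_j[-a_j,2] : z_i\in B_i\}$, each of measure $|B_i|\prod_{j\neq i}(2+a_j)=0$, hence null in the box $\prod_i[-a_i,2]$. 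So ``almost every $z^{(0)}$ in the box'' can be read throughout as ``for each $i$, almost every $z_i^{(0)}\in[-a_i,2]$.''

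\textbf{Case $0<\max_i a_i\le 1$.} Then $0<a_i\le 1$ for every $i$, so Lemma~\ref{lem: convergence}(ii) gives $z_i^{(t)}\to 0$ for a.e.\ $z_i^{(0)}$; summing the finitely many terms in~\eqref{eq: loss_multi} gives $\ell(w^{(t)})\to 0$. If in addition $\max_i a_i\le 2\sqrt{2}-2$, then Lemma~\ref{lem: convergence}(ii) also yields $|z_i^{(t+1)}|\le|z_i^{(t)}|$ for all $t$ and all $i$ (a.s.), so each $(z_i^{(t)})^2$ is nonincreasing in $t$, hence so is the weighted sum $\ell(w^{(t)})$.

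\textbf{Case $1<\max_i a_i\le 2$.} For each $i$, either Lemma~\ref{lem: convergence}(i) (if $a_i\le 1$) or Lemma~\ref{lem: bifurcation_chaos}(i) (if $1<a_i\le 2$) shows $z_i^{(t)}\in[-a_i,2]$ for all $t$, so $\ell$ is bounded. It remains to show $\ell(w^{(t)})\not\to 0$. Pick $i^\star\in\argmax_i a_i$, so $a_{i^\star}\in(1,2]$ and $f_{a_{i^\star}}'(0)=1-2a_{i^\star}$ has $|f_{a_{i^\star}}'(0)|>1$: the origin is a repelling fixed point of $f_{a_{i^\star}}$. A standard hyperbolic–repeller argument then shows its stable set in $[-a_{i^\star},2]$ is Lebesgue–null: choosing $\rho>1$ and $\delta>0$ with $|f_{a_{i^\star}}'|\ge\rho$ on $[-\delta,\delta]$, the mean value theorem and $f_{a_{i^\star}}(0)=0$ give $|f_{a_{i^\star}}(z)|\ge\rho|z|$ for $|z|\le\delta$, so any orbit converging to $0$ must actually hit $0$ after finitely many steps; hence $W^s(0)\subseteq\bigcup_{k\ge 0}f_{a_{i^\star}}^{-k}(0)$, which is countable since $f_{a_{i^\star}}$ is a cubic polynomial. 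Therefore for a.e.\ $z_{i^\star}^{(0)}$ we have $\limsup_t|z_{i^\star}^{(t)}|>0$, and since $\ell(w^{(t)})\ge \frac{n(z_{i^\star}^{(t)})^2}{2\eta^2\gamma^2\norm{X_{i^\star}}^4}$, we conclude $\limsup_t\ell(w^{(t)})>0$, i.e.\ $\ell(w^{(t)})\not\to 0$.

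\textbf{Case $\max_i a_i>2$.} Pick $i^\star$ with $a_{i^\star}>2$; Lemma~\ref{lem: divergence} gives $|z_{i^\star}^{(t)}|\to+\infty$ a.s., and then $\ell(w^{(t)})\ge \frac{n(z_{i^\star}^{(t)})^2}{2\eta^2\gamma^2\norm{X_{i^\star}}^4}\to+\infty$ since all other terms in~\eqref{eq: loss_multi} are nonnegative. The only step with content beyond bookkeeping is the claim in the middle case that the basin of the repelling fixed point $0$ is Lebesgue–null; I expect that (or pointing to where it is already contained in the proof of Lemma~\ref{lem: bifurcation_chaos}) to be the main thing to pin down, since everything else is an immediate coordinatewise invocation of the one–dimensional lemmas together with the product–measure remark.
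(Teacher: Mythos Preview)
Your argument is correct and follows the same coordinatewise strategy the paper intends: apply the one–dimensional lemmas to each $z_i^{(t)}$ via Theorem~\ref{thm: multiple_data_gd}(i), then read off the behavior of $\ell(w^{(t)})$ from the sum~\eqref{eq: loss_multi}. The paper does not write out a proof and simply calls the corollary ``a direct application of Theorems~\ref{thm:cubicmapresult} and~\ref{thm: multiple_data_gd}.''

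The one place where you go beyond the paper is the middle case. The paper's discussion just asserts that ``by Lemma~\ref{lem: bifurcation_chaos} we know that $\lim_{t\to\infty}|z_t|\neq 0$,'' but Lemma~\ref{lem: bifurcation_chaos}(iii) (and the corresponding bullets of Theorem~\ref{thm:cubicmapresult}) only gives this \emph{conditionally} on the existence of an asymptotically stable periodic orbit. Your direct repelling–fixed–point argument---$|f_{a_{i^\star}}'(0)|=|1-2a_{i^\star}|>1$ for $a_{i^\star}\in(1,2]$, hence any orbit converging to $0$ must eventually hit $0$, so $W^s(0)\subseteq\bigcup_{k\ge 0} f_{a_{i^\star}}^{-k}(0)$ is countable---is correct, unconditional, and cleanly fills that gap. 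Everything else (the product–measure remark for the exceptional sets, and the monotonicity/boundedness/divergence bookkeeping in the other two cases) is straightforward and matches what the paper intends.
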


\subsection{Example 2: Neural network with quadratic activation}\label{sec: multiindex_model}
In this section, we consider the following two layer neural networks with its loss function on data point $(X_i, y_i)$ defined as:
\begin{align}
g(u,v ; X_i) = \frac{1}{\sqrt{m}}\sum_{j = 1}^m v_j \sigma \Big(\frac{1}{\sqrt{d}} u_j^\top X_i\Big),\ \ell_i = \frac{1}{2}\left(g(u,v ; X_i) - y_i\right)^2
\end{align}
where the hidden-layer weights $u_i \in \mathbb{R}^d$ are to be trained and outer-layer weights $v_i \in \mathbb{R}$ are held constant, which corresponds to the feature-learning setting for neural networks. Also $m$ is the width of the hidden layer and $\sigma$ is the activation function. Define $\mathbf{U} \coloneqq \left(u_1, ..., u_m\right)$. When the activation function is quadratic and $v_i=1$ for all $i$, the loss function becomes
\begin{align}\label{eq: multi_index_gd}
    \min_{\mathbf{U}} \ell(\mathbf{U}) := \frac{1}{n}\sum_{j=1}^{n}\ell_j(\mathbf{U}) = \frac{1}{2n}\sum_{j=1}^{n}\Big(\frac{1}{\sqrt{m}d}\sum_{i=1}^{m}(X_j^{\top}u_i)^2 - y_j\Big)^2. 
\end{align}
As in the previous example, we assume $\mathbf{X}\mathbf{X}^{\top} = \text{diag}(\norm{X_1}^2, ..., \norm{X_n}^2)$. We then have the following result on the gradient descent dynamics of the above problem.
\begin{theorem}\label{thm:dnnexample}
    Define the following:
    \begin{align}
        e_i^{(t)} = \frac{1}{\sqrt{m}d}\sum_{j=1}^{m}(X_i^{\top}u_j^{(t)})^2 - y_i,\ z_i^{(t)} = \frac{2\eta\norm{X_i}^2e_i^{(t)}}{\sqrt{m}dn},\ a_i = \frac{2\eta\norm{X_i}^2y_i}{\sqrt{m}dn}
    \end{align}
    If we run gradient descent on solving problem \eqref{eq: multi_index_gd} with step-size $\eta$, we have $z_i^{(t+1)} = f_{a_i}(z_i^{(t)})$ and thus Theorem \ref{thm:cubicmapresult} and Corollary \ref{cor: multiple_ortho} hold for $\ell (\mathbf{U}^{(t)})$.
\end{theorem}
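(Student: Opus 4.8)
The plan is to reduce the gradient descent iteration on the weight matrix $\mathbf{U}^{(t)}$ to $n$ decoupled scalar recursions by exploiting the orthogonality of the $X_i$, exactly as in the proof of Theorem~\ref{thm: multiple_data_gd}, to identify each scalar recursion with the cubic map $f_{a_i}$, and then to quote Theorem~\ref{thm:cubicmapresult} and Corollary~\ref{cor: multiple_ortho}.

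First I would compute the gradient of the loss in~\eqref{eq: multi_index_gd}. Differentiating $\ell_j$ in the $k$-th hidden weight $u_k$ gives $\nabla_{u_k}\ell_j(\mathbf{U}^{(t)}) = \tfrac{2}{\sqrt{m}d}\,e_j^{(t)}\,(X_j^\top u_k^{(t)})\,X_j$ with $e_j^{(t)}$ as in the statement, hence $\nabla_{u_k}\ell(\mathbf{U}^{(t)}) = \tfrac{2}{\sqrt{m}dn}\sum_{j=1}^n e_j^{(t)}(X_j^\top u_k^{(t)})X_j$. Writing out the update $u_k^{(t+1)} = u_k^{(t)} - \eta\nabla_{u_k}\ell(\mathbf{U}^{(t)})$, taking the inner product with $X_i$, and using $X_i^\top X_j = \norm{X_i}^2\mathbf{1}\{i=j\}$, all cross terms vanish and one is left with
\[
X_i^\top u_k^{(t+1)} \;=\; \bigl(X_i^\top u_k^{(t)}\bigr)\bigl(1 - z_i^{(t)}\bigr), \qquad 1\le i\le n,\ \ 1\le k\le m,
\]
where $z_i^{(t)} = \tfrac{2\eta\norm{X_i}^2}{\sqrt{m}dn}e_i^{(t)}$ is precisely the quantity in the statement.

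Next I would square this identity and sum over $k$. Setting $Q_i^{(t)} := \tfrac{1}{\sqrt{m}d}\sum_{k=1}^m (X_i^\top u_k^{(t)})^2 = e_i^{(t)} + y_i$, the display above yields $Q_i^{(t+1)} = Q_i^{(t)}(1-z_i^{(t)})^2$, i.e.\ $e_i^{(t+1)} + y_i = (e_i^{(t)}+y_i)(1-z_i^{(t)})^2$; multiplying through by $\tfrac{2\eta\norm{X_i}^2}{\sqrt{m}dn}$ and using $a_i = \tfrac{2\eta\norm{X_i}^2 y_i}{\sqrt{m}dn}$ turns this into $z_i^{(t+1)} + a_i = (z_i^{(t)}+a_i)(1-z_i^{(t)})^2$. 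It then only remains to verify the elementary polynomial identity $(z+a)(1-z)^2 - a = z\bigl(z^2+(a-2)z+1-2a\bigr) = f_a(z)$, which is immediate upon expansion; this proves the first assertion, $z_i^{(t+1)} = f_{a_i}(z_i^{(t)})$. For the stated consequences I would observe, as in~\eqref{eq: loss_multi}, that $\ell(\mathbf{U}^{(t)}) = \tfrac{1}{2n}\sum_{i=1}^n (e_i^{(t)})^2 = \tfrac{1}{2n}\sum_{i=1}^n \bigl(\tfrac{\sqrt{m}dn}{2\eta\norm{X_i}^2}\bigr)^2 (z_i^{(t)})^2$, so the behavior of the loss is governed coordinatewise by the $z_i^{(t)}$ and hence, as in Corollary~\ref{cor: multiple_ortho}, by $\max_i a_i$; applying Theorem~\ref{thm:cubicmapresult} to each scalar dynamics gives the phase classification.

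I do not expect a serious obstacle: the argument is the same orthogonality-driven decoupling used for Theorem~\ref{thm: multiple_data_gd}, combined with one elementary polynomial identity and the bookkeeping of the scaling constants. The only point that needs genuine (if routine) care is the "almost every initialization'' clause, since the $z_i^{(0)}$ are not free parameters: $z_i^{(0)} = \tfrac{2\eta\norm{X_i}^2}{\sqrt{m}dn}\bigl(\tfrac{1}{\sqrt{m}d}\sum_k (X_i^\top u_k^{(0)})^2 - y_i\bigr)$ depends on $\mathbf{U}^{(0)}$ only through its projections onto the mutually orthogonal directions $X_1,\dots,X_n$. Because the map $(X_i^\top u_1^{(0)},\dots,X_i^\top u_m^{(0)})\mapsto\sum_k(X_i^\top u_k^{(0)})^2$ pushes Lebesgue measure forward to an absolutely continuous measure on $[0,\infty)$, Lebesgue-null sets in $(z_1^{(0)},\dots,z_n^{(0)})$-space pull back to Lebesgue-null sets in $\mathbf{U}^{(0)}$-space (a Fubini/pushforward argument over the orthogonal coordinates), so the almost-sure conclusions of Theorem~\ref{thm:cubicmapresult} and Corollary~\ref{cor: multiple_ortho} transfer to almost every $\mathbf{U}^{(0)}$ lying in the region $\{\,z_i^{(0)}\in[-a_i,2]\ \text{for all } i\,\}$, outside of which the loss diverges.
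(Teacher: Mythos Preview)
Your proposal is correct and follows essentially the same decoupling-via-orthogonality argument as the paper, with only cosmetic differences in bookkeeping. Where the paper packages the update in matrix form as $\mathbf{U}^{(t+1)} = (I-\mathbf{A}^{(t)})\mathbf{U}^{(t)}$ with $\mathbf{A}^{(t)} = \tfrac{2\eta}{\sqrt{m}dn}\sum_j e_j^{(t)}X_jX_j^\top$ and then computes $e_j^{(t+1)}-e_j^{(t)}$ via $X_j^\top\bigl(\mathbf{U}^{(t+1)}(\mathbf{U}^{(t+1)})^\top-\mathbf{U}^{(t)}(\mathbf{U}^{(t)})^\top\bigr)X_j$, you instead track each scalar projection $X_i^\top u_k^{(t)}$, obtaining the clean multiplicative factor $(1-z_i^{(t)})$ and then closing with the polynomial identity $(z+a)(1-z)^2-a=f_a(z)$. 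Both routes invoke orthogonality at the same step and yield the same recursion; yours is arguably a touch more transparent because it avoids expanding the outer product. The paper's proof does not address the almost-every-initialization issue you raise in your final paragraph, so your pushforward/Fubini remark is a genuine addition rather than a restatement.
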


The orthogonal assumption that  $\mathbf{X}\mathbf{X}^{\top} = \text{diag}(\norm{X_1}^2, ..., \norm{X_n}^2)$, helps decouple the loss function across the samples and makes the evolution of the overall loss non-interacting (across the training samples). In order to relax this assumption, it is required to analyze bifurcation analysis of interacting dynamical systems, which is extremely challenging and not well-explored~\citep{xu2021stability}. In Section~\ref{sec:nonorthoplots}, we present empirical results showing that similar phases exists in the general non-orthogonal setting as well. Theoretically characterizing this is left as an open problem.

\section{Experimental investigations}\label{sec:expmaindraft}

\subsection{Gradient descent dynamics with orthogonal data for model~\eqref{eq: multi_index_gd} }\label{sec:m25maindraft}

\textbf{Experimental setup.} We now conduct experiments to evaluate the developed theory. We consider gradient descent for training the hidden layers of a two-layer neural network with orthogonal training data, described in Section~\ref{sec: multiindex_model}. Recall that $d, m$, and $n$ represents the dimension, hidden-layer width, and number of data points respectively. We set $d=100, m\in \{5, 10, 25\}, n=80$. We generate the ground-truth matrix $\mathbf{U}^*\in \R^{d\times m}$ where each entry is sampled from the standard normal distribution. The training data points collected in the data matrix, denoted as $\mathbf{X}\in \R^{n\times d}$, are the first $n$ rows of a randomly generated orthogonal matrix. The labels are generated via the model in Section \ref{sec: multiindex_model}, i.e., $y_i = \frac{1}{\sqrt{m}d}\sum_{j=1}^{m}\left(X_i^{\top}u_j\right)^2 + \varepsilon_i$ where $\varepsilon_i$ is scalar noise sampled from a zero-mean normal distribution, with variances equal to $0, 0.25, 1$ in different experiments. 

We set the step-size $\eta$ such that $\max_{1\leq i\leq n}a_i$ defined in Theorem \ref{thm: multiple_data_gd} belongs to the intervals of the first four phases. In particular, we choose $0.3, 0.9, 1, 1.2, 1.8$ for $m=5, 10$ and $0.3, 0.9, 1, 1.2, 1.6$ for $m=25$ (for each $m$, $0.9$ and $1$ are both in the catapult phase, and we pick $1$ since it is the largest step-size choice allowed in the catapult phase). The numbers $0, 1, 2, 3, 4$ of the plot labels correspond to these step-size choices respectively. In Figure \ref{fig: 25_index_orthogonal} we present the training loss curves in log scale and the sharpness curves for $m=25$. The horizontal axes denote the number of steps of gradient descent. In Section~\ref{sec:additionalorthoplot}, we also provide additional simulation results for different hidden-layer widths. From the training loss curves (left column) and the sharpness curves (middle column) we can clearly observe the four phases\footnote{Here, we do not plot the divergent phase here for simplicity.} thereby confirming our theoretical results.  


\subsection{Prediction based on ergodic trajectory averaging}\label{sec:ergodicavg}

\begin{wrapfigure}{l}{8cm}
    \centering
     \includegraphics[width=0.5\textwidth]{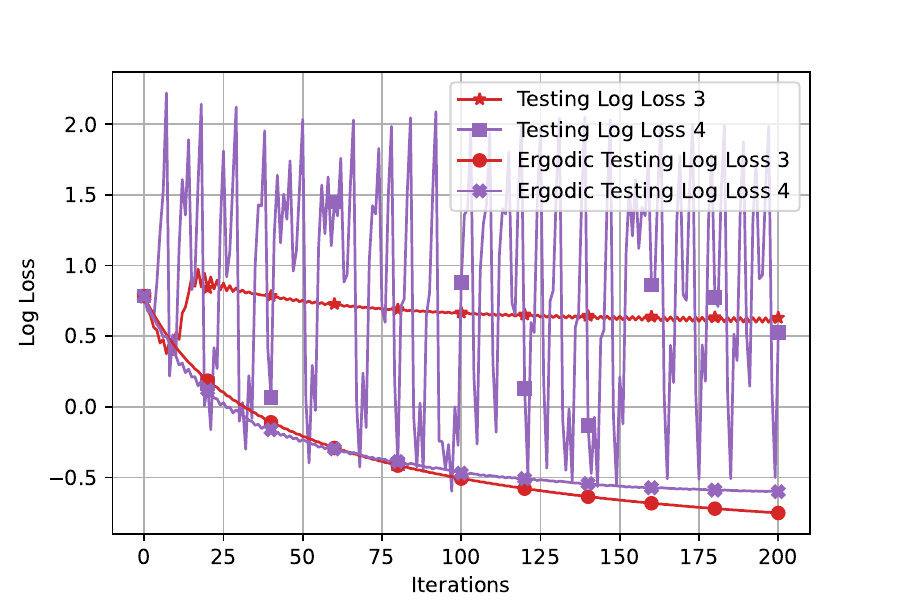}
     \caption{Test loss with and without averaging.}
     \label{fig: nonavg_vs_avg}
\end{wrapfigure}

A main take-away from our analysis and experiments so far is that gradient descent with large step-size behaves like \emph{stochastic} gradient descent, except the randomness here is with respect to the orbit it converges to (in the non-monotonic phases). Recall that this viewpoint is also put-forward is several works, in particular~\cite{kong2020stochasticity}. Hence, a natural approach is to do perform ergodic trajectory averaging to reduce the fluctuations (see right column in Figure~\ref{fig: 25_index_orthogonal}). For any give point $X\in\mathbb{R}^d$, and any training iteration count $t$, the prediction $\hat y$ for the point $X$ is given by $\hat y \coloneqq \frac{1}{t}\sum_{i=1}^t g(w^{(i)};X)$, where $w^{(i)}$ corresponds to the training trajectory of the gradient descent algorithm trained with step-size $\eta$. Another way to think about the above prediction strategy is that the ergodic average approximates, in the limit, expectation with respect to the invariant distribution (supported on the orbit to which the trajectory converges to). In particular, Figure~\ref{fig: 25_index_orthogonal} right column, for the orthogonal setup, we see that as the noise increases, training in the chaotic regime and performing ergodic trajectory averaging provides a fast decay of training loss. A disadvantage of the ergodic averaging based prediction strategy described above is the test-time computational cost increases by $\mathcal{O}(t)$, per test point.

Figure~\ref{fig: nonavg_vs_avg} plots the testing loss for the model in~\eqref{eq: multi_index_gd}, when trained with two values of large step-sizes ($\eta = 48, 60$). We observe from the figure that the ergodic trajectory averaging prediction smoothens the more chaotic testing loss. However, we also remark that from the plots in Figure~\ref{fig: 25_index_nonorthogonal}\footnote{Figure~\ref{fig: 25_index_nonorthogonal} provides a detailed comparison across various step-sizes, for different noise variances.}, operating with slightly smaller step-size choice ($\eta = 36$) achieves the best testing error curves. See Section~\ref{sec:nonorthoplots} for additional observations. In the literature, ways of artificially inducing \emph{controlled chaos} in the gradient descent trajectory has been proposed to obtain improved testing accuracy; see, for example,~\cite{lim2022chaotic}. We believe the ergodic trajectory averaging based prediction methodology discussed above may prove to be fruitful to stabilize the testing loss in such cases as well. A detailed investigation of provable benefits of the ergodic trajector averaging predictor, is beyond the scope of the current work, and we leave it as intriguing future work.

\subsection{Additional Experiments}
We provide the following additional simulation results in the appendix:

\begin{itemize}[noitemsep,leftmargin=0.15in]
    \item Section~\ref{sec:nonorthoplots} corresponds to non-orthogonal training data. We also include testing loss plots. 
    \item Section~\ref{sec:reluexp} corresponds to training the hidden-layer weights of a two-layer neural network with ReLU activation functions and non-orthogonal inputs.
\end{itemize}

\begin{figure}[t]
    \centering
    \subfigure{\includegraphics[width=0.32\textwidth]{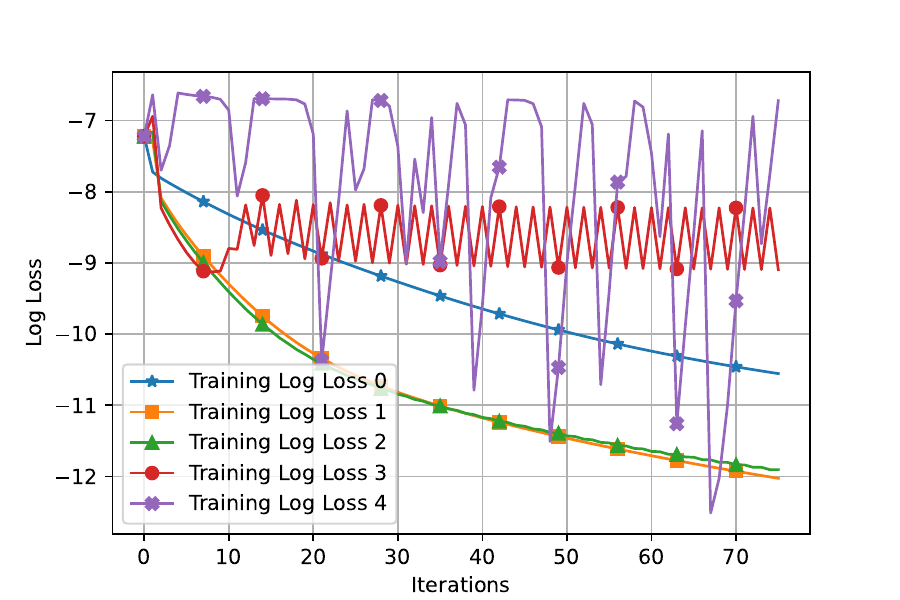}}
    \subfigure{\includegraphics[width=0.32\textwidth]{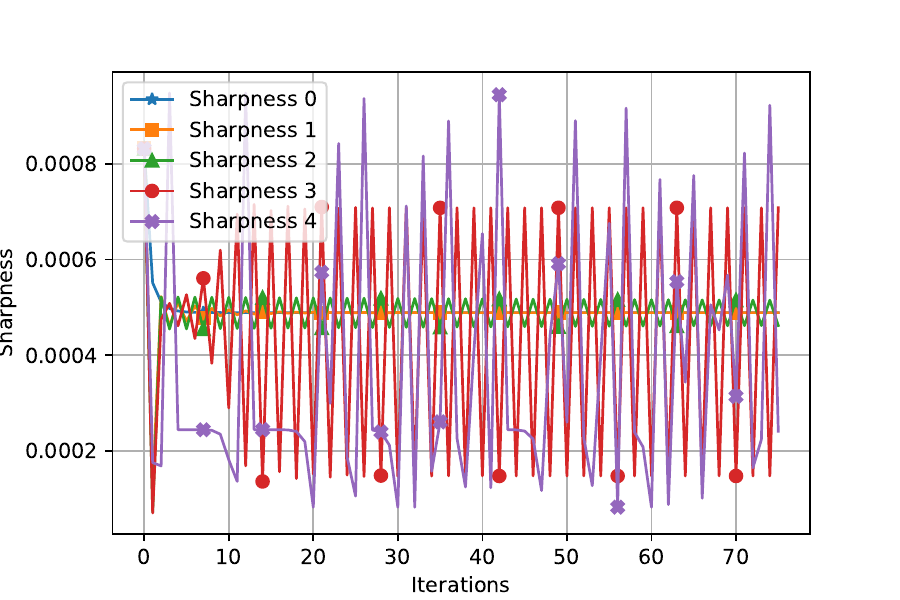}}
    \subfigure{\includegraphics[width=0.32\textwidth]{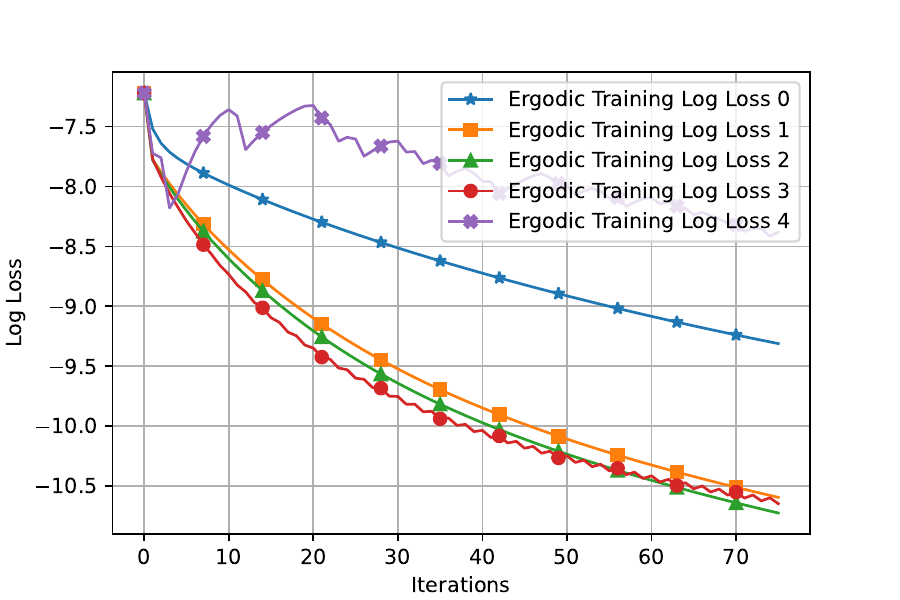}}

    \subfigure{\includegraphics[width=0.32\textwidth]{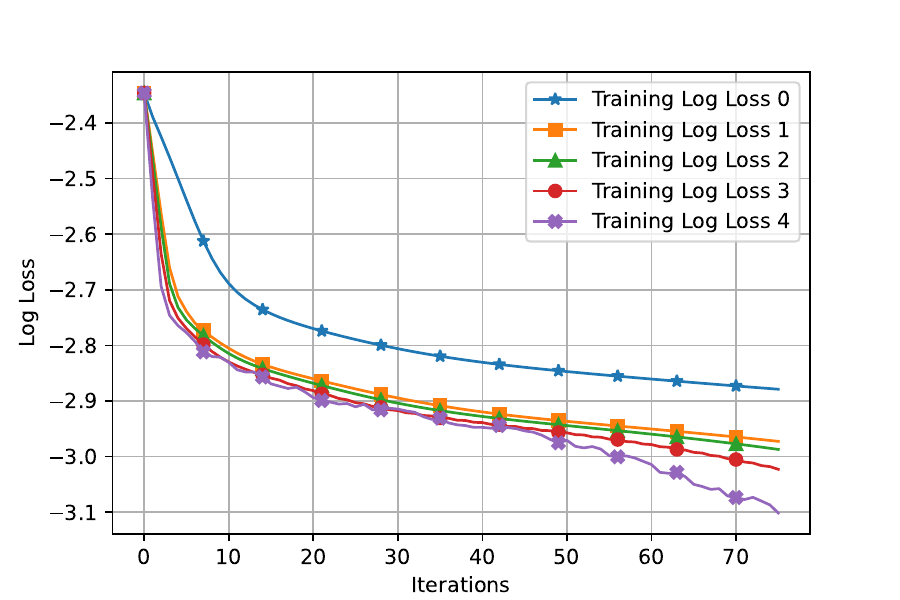}}
    \subfigure{\includegraphics[width=0.32\textwidth]{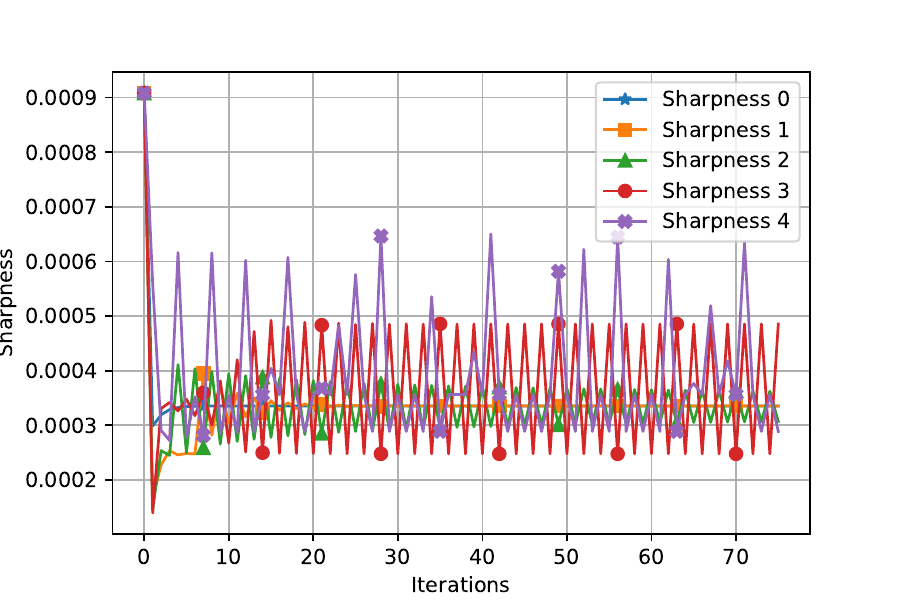}}
    \subfigure{\includegraphics[width=0.32\textwidth]{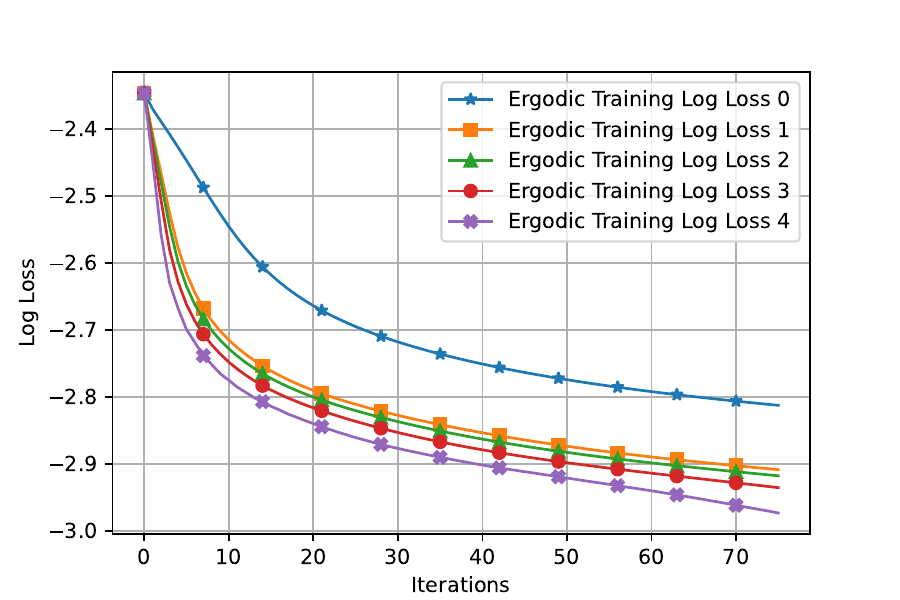}}

    \subfigure{\includegraphics[width=0.32\textwidth]{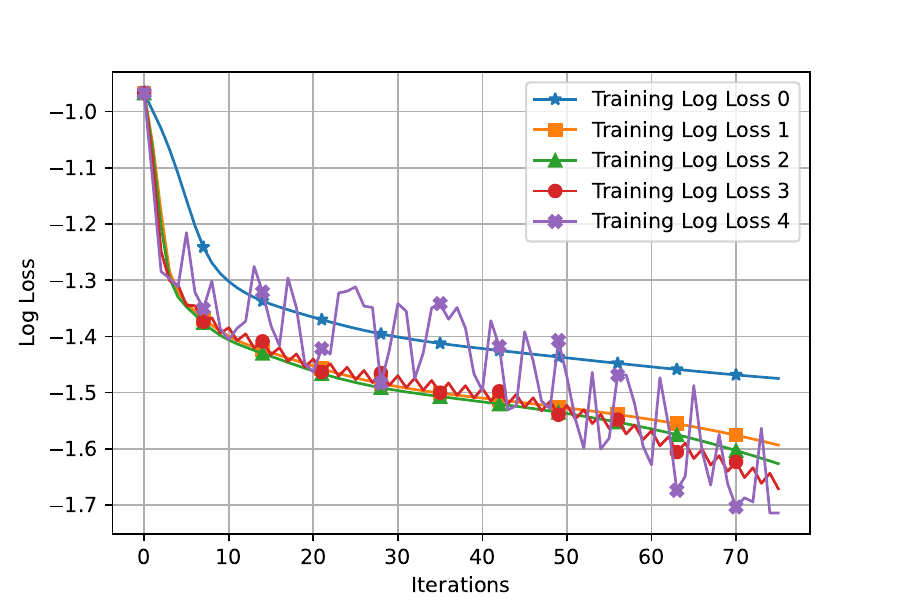}}
    \subfigure{\includegraphics[width=0.32\textwidth]{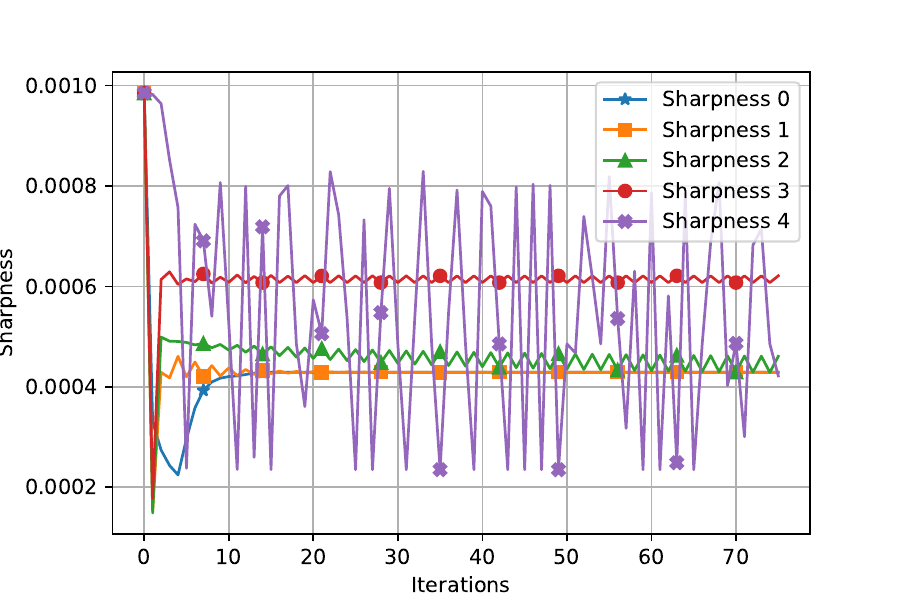}}
    \subfigure{\includegraphics[width=0.32\textwidth]{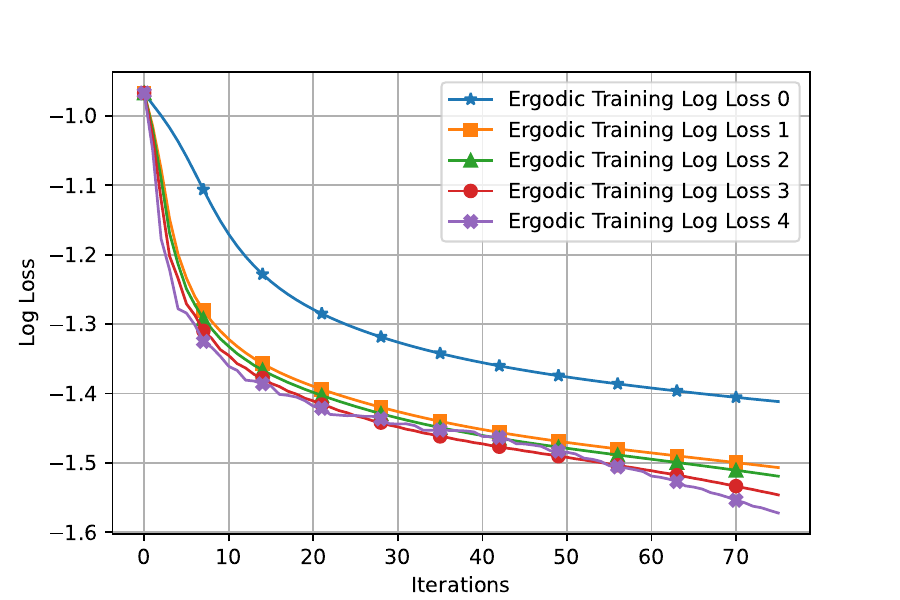}}
    \caption{Hidden layer width = 25, with orthogonal data points. Rows from top to bottom represent different levels of noise -- mean-zero normal distribution with variance $0, 0.25, 1$ respectively. The vertical axes are in log scale for the training loss curves. The second column is about the sharpness of the training loss functions.}
    \label{fig: 25_index_orthogonal}
\end{figure}

\section{Conclusion}
Unstable and chaotic behavior is frequently observed when training deep neural networks with large-order step-sizes. Motivated by this, we presented a fine-grained theoretical analysis of a cubic-map based dynamical system. We show that the gradient descent dynamics is fully captured by this dynamical system, when training the hidden layers of a two-layer neural networks with quadratic activation functions with orthogonal training data. Our analysis shows that for this class of models, as the step-size of the gradient descent increases, the gradient descent trajectory has five distinct phases (from being monotonic to chaotic and eventually divergent). We also provide empirical evidence that show similar behavior occurs for generic non-orthogonal data. We empirically examine the impact of training in the different phases, on the generalization error, and observe that training in the phases of periodicity and chaos provides the highest test accuracy.  

Immediate future works include: (i) developing a theoretical characterization of the training dynamics with generic non-orthogonal training data, which involves undertaking non-trivial bifurcation analysis of interacting dynamical systems, (ii) moving beyond quadratic activation functions and two-layer neural networks, and (iii) developing tight generalization bounds when training with large-order step-sizes. 

Overall, our contributions make concrete steps towards developing a fine-grained understanding of the gradient descent dynamics when training neural networks with iterative first-order optimization algorithms with large step-sizes.

\bibliographystyle{abbrvnat}
\bibliography{references}

\appendix
\section{Proofs of Main Results}

\subsection{Proofs of results in Section \ref{sec: cubic_map}}

We first present several technical results required to prove our main results.
\begin{lemma}\label{lem: poly_schwarzian}
    Let $f(x)$ be a polynomial. If all the roots of $f'(x)$ are real and distinct, then we have
    \begin{align}
        \mathsf{S}f(x) = \frac{f'''(x)}{f'(x)} - \frac{3}{2}\left(\frac{f''(x)}{f'(x)}\right)^2 < 0\ \text{ for all } x\in I \text{ with } f'(x)\neq 0.
    \end{align}
\end{lemma}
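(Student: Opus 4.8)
The plan is to prove the claimed negativity of the Schwarzian derivative by factoring $f'$ using its roots and then computing $\mathsf{S}f$ in terms of a sum over the roots. First I would write $f'(x) = c\prod_{i=1}^{n}(x - r_i)$ where $r_1, \dots, r_n$ are the distinct real roots of $f'$ and $c\neq 0$ (here $n = \deg f - 1$). Taking the logarithmic derivative, $\frac{f''(x)}{f'(x)} = \sum_{i=1}^{n}\frac{1}{x - r_i}$ at any point $x$ with $f'(x)\neq 0$ (so $x \neq r_i$ for all $i$). Differentiating once more gives $\frac{d}{dx}\left(\frac{f''}{f'}\right) = \frac{f'''}{f'} - \left(\frac{f''}{f'}\right)^2 = -\sum_{i=1}^{n}\frac{1}{(x - r_i)^2}$.

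Next I would combine these two identities to isolate the Schwarzian. Since
\begin{align}
\mathsf{S}f(x) = \frac{f'''(x)}{f'(x)} - \frac{3}{2}\left(\frac{f''(x)}{f'(x)}\right)^2 = \left(\frac{f'''}{f'} - \left(\frac{f''}{f'}\right)^2\right) - \frac{1}{2}\left(\frac{f''}{f'}\right)^2,
\end{align}
substituting the two sums yields
\begin{align}
\mathsf{S}f(x) = -\sum_{i=1}^{n}\frac{1}{(x - r_i)^2} - \frac{1}{2}\left(\sum_{i=1}^{n}\frac{1}{x - r_i}\right)^2.
\end{align}
Both terms on the right are manifestly non-positive for every $x$ with $f'(x)\neq 0$, and the first term is strictly negative as long as $n \geq 1$ (which holds since $f'$ has at least one real root). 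Hence $\mathsf{S}f(x) < 0$ for all such $x$, which is exactly the claim.

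The argument is essentially a short computation once the factorization is in place, so there is no serious obstacle; the only point requiring a word of care is the bookkeeping of the logarithmic-derivative identities (ensuring $x$ avoids all roots so that division is legitimate) and the observation that distinctness of the roots is not even needed for negativity — it merely guarantees $f'$ has simple roots, though the displayed formula works with multiplicities too provided one still has $n\geq 1$. I would present the factorization, the two derivative identities, and the final combined formula, then conclude by inspection of signs.
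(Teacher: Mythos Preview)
Your proof is correct and is precisely the standard argument: the paper does not prove this lemma in-line but refers to Proposition~11.2 of Devaney's \emph{An Introduction to Chaotic Dynamical Systems}, whose proof is exactly the logarithmic-derivative computation you wrote out. Your remark that distinctness is inessential (repeated roots just contribute multiplicities in the sums) is also accurate.
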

\begin{proof}
    See, e.g., the proof of Proposition 11.2 in \cite{devaney1989introduction}.
\end{proof}

\begin{lemma}\label{lem: monotone}
    Suppose we are given a real-valued continuous function $f(x):\R\rightarrow \R$ and a bounded closed interval $I\subseteq \R$ with $x_0\in I$. Define $x_k := f^{(k)}(x_0)$. If the sequence $\{x_k\}_{k=0}^{\infty}$ is monotonic, then one of the following holds.
    \begin{itemize}
        \item (i) $\{x_k\}_{k=0}^{\infty}\subsetneq I$, i.e., there exists $x_t\notin I$ for some $t$.
        \item (ii) $\{x_k\}_{k=0}^{\infty}\subseteq I$, and $\lim_{t\rightarrow\infty }f^{(t)}(x_0)$ exists and is a fixed point of $f(x)$ in $I$.
    \end{itemize}
\end{lemma}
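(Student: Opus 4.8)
The statement to prove is Lemma~\ref{lem: monotone}: a monotonic orbit of a continuous map on a bounded closed interval either escapes the interval or converges to a fixed point inside it.

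\textbf{Proof plan.} The plan is to argue by dichotomy on whether the entire orbit stays inside $I$. Suppose case (i) fails, so $x_k = f^{(k)}(x_0) \in I$ for every $k \geq 0$. Then $\{x_k\}$ is a monotonic sequence that is also bounded, since $I$ is a bounded interval. By the monotone convergence theorem for real sequences, the limit $L := \lim_{t\to\infty} x_t$ exists and is finite. Because $I$ is closed and every $x_k \in I$, the limit $L$ also lies in $I$. It remains to show $L$ is a fixed point of $f$. Here I would use continuity of $f$: since $x_{k+1} = f(x_k)$ for all $k$, taking limits on both sides and using that $f$ is continuous at $L$ gives $L = \lim_{k\to\infty} x_{k+1} = \lim_{k\to\infty} f(x_k) = f\big(\lim_{k\to\infty} x_k\big) = f(L)$. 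Hence $L$ is a fixed point of $f$ in $I$, which is exactly conclusion (ii).

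\textbf{Main obstacle.} There is essentially no serious obstacle here; the result is a standard consequence of the monotone convergence theorem together with continuity. The only points requiring a word of care are (a) confirming that staying in $I$ forces boundedness of the sequence — immediate since $I$ is a bounded interval — and (b) that the limit remains in $I$ — immediate since $I$ is closed. One should also note the logical structure: the two alternatives (i) and (ii) are not mutually exclusive in their phrasing, but the intended reading is "if not (i), then (ii)," which is what the argument above establishes. I would keep the write-up to a few lines, invoking the monotone convergence theorem and continuity explicitly, and not belabor the elementary topological facts about $I$.
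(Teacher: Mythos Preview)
Your proposal is correct and follows essentially the same approach as the paper: assume (i) fails, apply the monotone convergence theorem to obtain a limit $L\in I$ (using that $I$ is bounded and closed), and then pass to the limit in $x_{k+1}=f(x_k)$ using continuity of $f$ to conclude $f(L)=L$. There is nothing to add.
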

\begin{proof}
    If (i) holds, then the conclusion is true. When (i) does not hold, then $\{x_k\}_{k=0}^{\infty}\subseteq I$. Since this sequence is monotonic and included in a bounded closed interval, we know its limit exists and is in $I$. Moreover, we have
    \begin{align}
        \lim_{t\rightarrow\infty}x_t = \lim_{t\rightarrow\infty}x_{t+1} = \lim_{t\rightarrow\infty}f(x_t) = f(\lim_{t\rightarrow\infty}x_t),
    \end{align}
    where the last equality holds since $f$ is continuous. Clearly $\lim_{t\rightarrow\infty}x_t$ is a fixed point of $f$.
\end{proof}

The following lemma characterizes the basic properties of the cubic function $f_a$ defined in \eqref{eq: f_a}.

\begin{lemma}\label{lem: fa}
    Suppose $a > 0$. Then $f_a(z)$ has the following properties.
    \begin{itemize}
        \item (i) The local minimum and maximum of $f_a(z)$ are at $z = 1$ and $z = \frac{1-2a}{3}$ respectively, and 
        \begin{align}\label{eq: local_minmax}
            f_a(1) = -a,\ f_a\left(\frac{1-2a}{3}\right) = \frac{(2a-1)(2a^2 + 7a - 4)}{27} = \frac{4a^3 + 12a^2 - 15a + 4}{27}.
        \end{align}
        \item (ii) $f_a(z)$ is monotonically increasing on $[-a, \frac{1-2a}{3}]$, monotonically decreasing on $[\frac{1-2a}{3}, 1]$, and monotonically increasing on $[1, 2]$.
        \item (iii) For any $-a\leq z\leq 2$, we have $-a\leq f_a(z)\leq \max\left\{f_a\left(\frac{1-2a}{3}\right), 2\right\}$. Moreover, $f_a\left(\frac{1-2a}{3}\right)\leq 2$ if and only if $a\leq 2$.
    \end{itemize}
\end{lemma}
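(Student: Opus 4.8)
\textbf{Proof plan for Lemma~\ref{lem: fa}.}

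The plan is to treat this as a routine calculus exercise about a specific cubic polynomial, carried out in three steps matching the three parts. First I would expand $f_a(z) = z g_a(z) = z\big(z^2 + (a-2)z + 1 - 2a\big) = z^3 + (a-2)z^2 + (1-2a)z$ and compute $f_a'(z) = 3z^2 + 2(a-2)z + (1-2a)$. I would then observe that $f_a'(z)$ factors nicely: since $f_a'(1) = 3 + 2(a-2) + 1 - 2a = 0$, the value $z=1$ is a root, and dividing out gives $f_a'(z) = (z - 1)\big(3z - (1-2a)\big) = 3(z-1)\big(z - \tfrac{1-2a}{3}\big)$. Because $a > 0$, we have $\tfrac{1-2a}{3} < 1$, so the two critical points are distinct, with $f_a' > 0$ outside $[\tfrac{1-2a}{3}, 1]$ and $f_a' < 0$ inside it; hence $z = \tfrac{1-2a}{3}$ is the local maximum and $z = 1$ is the local minimum. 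This immediately gives part (ii): $f_a$ is increasing on $[-a, \tfrac{1-2a}{3}]$ (note $-a < \tfrac{1-2a}{3}$ since $a>0$ rearranges to $-3a < 1-2a$), decreasing on $[\tfrac{1-2a}{3}, 1]$, and increasing on $[1, 2]$.

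For part (i), I would substitute the two critical values into the expanded form of $f_a$. Evaluating $f_a(1) = 1 + (a-2) + (1-2a) = -a$ is immediate. Evaluating $f_a\big(\tfrac{1-2a}{3}\big)$ is a direct (if slightly tedious) polynomial substitution; I would plug $z = \tfrac{1-2a}{3}$ into $z^3 + (a-2)z^2 + (1-2a)z$, clear the common denominator $27$, and collect terms to obtain $\tfrac{4a^3 + 12a^2 - 15a + 4}{27}$, then verify the factorization $4a^3 + 12a^2 - 15a + 4 = (2a-1)(2a^2 + 7a - 4)$ by noting $a = \tfrac12$ is a root of the cubic and performing the division. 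This is pure bookkeeping with no conceptual content.

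For part (iii), the monotonicity structure from (ii) pins down the extremes of $f_a$ on $[-a, 2]$: on each of the three monotone pieces the extreme values occur at the endpoints $-a, \tfrac{1-2a}{3}, 1, 2$, so $f_a$ on $[-a,2]$ ranges between $\min\{f_a(-a), f_a(1)\}$ and $\max\{f_a(\tfrac{1-2a}{3}), f_a(2)\}$. A short computation gives $f_a(-a) = -a g_a(-a) = -a\big((-a+a)(-a-2)+1\big) = -a$ and $f_a(2) = 2 g_a(2) = 2\big((2+a)(0)+1\big) = 2$, using the form $g_a(z) = (z+a)(z-2)+1$ directly; combined with $f_a(1) = -a$ this yields the lower bound $-a$ and the upper bound $\max\{f_a(\tfrac{1-2a}{3}), 2\}$. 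Finally, the inequality $f_a\big(\tfrac{1-2a}{3}\big) \le 2$ is equivalent, after multiplying by $27$ and rearranging, to $4a^3 + 12a^2 - 15a + 4 \le 54$, i.e. $4a^3 + 12a^2 - 15a - 50 \le 0$; I would check that $a = 2$ is a root (giving $32 + 48 - 30 - 50 = 0$) and factor as $(a-2)(4a^2 + 20a + 25) = (a-2)(2a+5)^2 \le 0$, which for $a > 0$ holds exactly when $a \le 2$.

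There is essentially no obstacle here — the only mild risk is an arithmetic slip in the substitution for $f_a\big(\tfrac{1-2a}{3}\big)$ or in the two cubic factorizations, so I would double-check those by evaluating both sides at a convenient value of $a$ (e.g. $a = 1$).
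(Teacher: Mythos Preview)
Your proposal is correct and follows essentially the same approach as the paper: both compute and factor $f_a'(z) = (z-1)(3z + 2a - 1)$, read off the monotonicity intervals, evaluate $f_a$ at the endpoints $-a,\tfrac{1-2a}{3},1,2$, and handle the final inequality by factoring $4a^3 + 12a^2 - 15a - 50 = (a-2)(2a+5)^2$. The only cosmetic difference is that the paper classifies the critical points via the second derivative while you use the sign of $f_a'$ directly; these are equivalent one-line observations.
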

\begin{proof}
    Note that we have
    \begin{align}\label{eq: f_prime}
        f_a'(z) = 3z^2 + 2(a-2)z + (1-2a) = (z - 1)(3z + 2a-1).
    \end{align}
    which implies $1$ and $\frac{1-2a}{3}$ are critical points of $f_a(z)$. Moreover, by $f_a''(z) = 6z + 2a-4$ we know $f_a''(1)>0$ and $f_a''(\frac{1-2a}{3})<0$. Hence, they are local minimum and maximum respectively. The rest of (i) is true by calculation. (ii) is true by noticing the expression of $f_a'(z)$ in \eqref{eq: f_prime}. (iii) is a direct conclusion of (i) and (ii) since for $-a\leq z\leq 2$ we have
    \begin{align}
        -a = \min\left\{f_a(1), f_a(-a)\right\} \leq f_a(z) \leq \max\left\{f_a\left(\frac{1-2a}{3}\right), f_a(2)\right\}.
    \end{align}
    By (i) and some calculation we know
    \begin{align}
        f_a\left(\frac{1-2a}{3}\right) - 2 = \frac{4a^3 + 12a^2 - 15a - 50}{27} = \frac{(2a+5)^2(a-2)}{27}.
    \end{align}
    This proves the rest of (iii).
\end{proof}
\begin{lemma}\label{lem: catapult_dynamics}
    Suppose $2\sqrt{2} - 2<a\leq 1$. Define five subintervals of $[-a,2]$ as follows.
    \begin{align}
        &I_1 = \left[-a, \frac{2-a-\sqrt{a^2 + 4a}}{2}\right],\ I_2 = \left[\frac{2-a-\sqrt{a^2 + 4a}}{2}, 0\right],\\
        &I_3 = \left[0, 0.25\right],\ I_4 = \left[0.25, \frac{2-a+\sqrt{a^2 + 4a}}{2}\right],\ I_5 = \left[\frac{2-a+\sqrt{a^2 + 4a}}{2}, 2\right].
    \end{align}
    Then we have
    \begin{itemize}
        \item (i) $f_a(I_1)\subseteq I_1 = I_2,\ f_a(I_4) = I_1\cup I_2,\ f_a(I_5) = I_3\cup I_4\cup I_5.$
        \item (ii) $f_a(I_2)\subseteq I_3,\ f_a(I_3)\subseteq I_2.$
    \end{itemize}
\end{lemma}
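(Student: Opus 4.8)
The plan is to prove everything by direct computation, exploiting the monotonicity structure of $f_a$ established in Lemma~\ref{lem: fa}(ii). The crucial preliminary observation is that the numbers $\tfrac{2-a\pm\sqrt{a^2+4a}}{2}$ are precisely the roots of the quadratic $g_a(z) = z^2 + (a-2)z + 1 - 2a$ — indeed the discriminant is $(a-2)^2 - 4(1-2a) = a^2 + 4a$, so the roots are $\tfrac{(2-a)\pm\sqrt{a^2+4a}}{2}$. Since $f_a(z) = z\,g_a(z)$, these two points together with $0$ are exactly the three roots of $f_a$. Call them $r_- := \tfrac{2-a-\sqrt{a^2+4a}}{2}$ and $r_+ := \tfrac{2-a+\sqrt{a^2+4a}}{2}$, so that $I_1 = [-a, r_-]$, $I_2 = [r_-, 0]$, $I_3 = [0, 1/4]$, $I_4 = [1/4, r_+]$, $I_5 = [r_+, 2]$. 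First I would check the ordering $-a < r_- < 0 < 1/4 < r_+ < 2$ and locate the critical points $\tfrac{1-2a}{3}$ and $1$ relative to these: since $0 < a \le 1$ one has $\tfrac{1-2a}{3} \in (-1/3, 1/3)$, and one verifies $r_- < \tfrac{1-2a}{3}$ and, using $a > 2\sqrt2 - 2$, that $1/4 < 1 < r_+$ (the value $a = 2\sqrt2 - 2$ is exactly where $f_a(1/4)$ equals the relevant endpoint; this is where that hypothesis enters).

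Next, for each of the five claimed inclusions I would use Lemma~\ref{lem: fa}(ii) to say on which monotone branch the interval sits, evaluate $f_a$ at the endpoints, and compare. For part (i): $f_a(I_1)$ — on $[-a, r_-]$ the map first increases up to $\tfrac{1-2a}{3}$ then (if $\tfrac{1-2a}{3} < r_-$, else it is monotone) — we have $f_a(-a) = 0$ and $f_a(r_-) = 0$ (both roots), and the max over $I_1$ is $f_a(\tfrac{1-2a}{3})$ from Lemma~\ref{lem: fa}(i); so $f_a(I_1) \subseteq [0, f_a(\tfrac{1-2a}{3})]$ and it remains to check $f_a(\tfrac{1-2a}{3}) \le r_-$... wait, more carefully: $f_a(I_1) = [\min, \max]$ where $\min = \min\{f_a(-a), f_a(r_-)\} = 0$ — but actually $I_1 \subseteq [-a, \tfrac{1-2a}{3}]$ is where $f_a$ is \emph{increasing}, so $f_a(I_1) = [f_a(-a), f_a(r_-)] = [0,0]$? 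No — one must check whether $\tfrac{1-2a}{3} \le r_-$. I would resolve this by the inequality $\tfrac{1-2a}{3} \le r_-$, equivalently $2(1-2a) \le 3(2-a-\sqrt{a^2+4a})$, i.e. $3\sqrt{a^2+4a} \le 4 + a$, i.e. $9(a^2+4a) \le (4+a)^2$, i.e. $8a^2 + 28a - 16 \le 0$, i.e. $2a^2 + 7a - 4 \le 0$, i.e. $a \le 1/2$. So for $a > 1/2$ the critical point $\tfrac{1-2a}{3}$ lies \emph{inside} $I_1$ and $f_a(I_1) = [0, f_a(\tfrac{1-2a}{3})]$, and I must then verify $f_a(\tfrac{1-2a}{3}) \le r_-$; for $a \le 1/2$, $I_1$ is on the increasing branch and $f_a(I_1) = \{0\} \subseteq I_1$ trivially. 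The bulk of the proof is exactly this kind of endpoint bookkeeping: $f_a(1/4) = \tfrac14 g_a(\tfrac14)$ and I would compute $g_a(1/4) = \tfrac{1}{16} + \tfrac{a-2}{4} + 1 - 2a = \tfrac{9}{16} - \tfrac{7a}{4}$, so $f_a(1/4) = \tfrac{9}{64} - \tfrac{7a}{16}$, which at $a = 2\sqrt2-2$ equals $-a$ (the left endpoint of $I_1$) — this is the identity pinning down the hypothesis. Then $f_a(I_4) = [f_a(1/4), f_a(r_+)] $ on the appropriate branches equals $[-a, 0] = I_1 \cup I_2$; $f_a(I_5)$: on $[r_+, 2]$, $f_a$ is increasing (as $r_+ > 1$), $f_a(r_+) = 0$, $f_a(2) = 2$, giving $[0,2] = I_3 \cup I_4 \cup I_5$. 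For part (ii): $f_a(I_2)$ with $I_2 = [r_-, 0]$; since $r_- < \tfrac{1-2a}{3} \le 0$ when $a \ge 1/2$ (and the picture on $[\tfrac{1-2a}{3},0]$ is decreasing), $f_a(I_2) \subseteq [0, f_a(\tfrac{1-2a}{3})] \subseteq [0, 1/4]$, the last containment being $f_a(\tfrac{1-2a}{3}) \le 1/4$, which I would reduce to a polynomial inequality in $a$ valid on $(2\sqrt2-2, 1]$ using the closed form $f_a(\tfrac{1-2a}{3}) = \tfrac{4a^3+12a^2-15a+4}{27}$ from \eqref{eq: local_minmax}. Finally $f_a(I_3) = f_a([0,1/4]) = [f_a(1/4), f_a(0)] = [\tfrac{9}{64}-\tfrac{7a}{16}, 0]$ (decreasing branch, since $1/4 \le 1$ and $\tfrac{1-2a}{3} \le 0 < 1/4$... actually on $[0,1/4] \subseteq [\tfrac{1-2a}{3},1]$ when $a \ge 1/2$, so $f_a$ is decreasing there), and one checks $\tfrac{9}{64} - \tfrac{7a}{16} \ge r_-$, i.e. this interval sits inside $I_2 = [r_-,0]$; again a polynomial inequality in $a$.

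The main obstacle I anticipate is purely the density of case analysis: each inclusion bifurcates according to whether a critical point ($\tfrac{1-2a}{3}$ or $1$) falls inside the interval in question, and the threshold $a = 1/2$ (and the endpoint $a = 2\sqrt2 - 2 \approx 0.828$) keep recurring. I would organize the write-up by first recording, once and for all, the ordering of the seven points $-a, r_-, \tfrac{1-2a}{3}, 0, 1/4, 1, r_+, 2$ as functions of $a$ on the range $(2\sqrt2-2, 1]$, together with the four algebraic identities $f_a(\pm a\text{-roots}) = 0$, $f_a(1/4) = \tfrac{9}{64} - \tfrac{7a}{16}$, $f_a(1) = -a$, and the closed form for $f_a(\tfrac{1-2a}{3})$; after that, every claimed set identity becomes "$f_a$ is monotone on this piece, evaluate the endpoints, compare to a known point," and the only real work is verifying a handful of cubic inequalities in the single variable $a$ on a bounded interval, each of which factors nicely (as the lemma statement's factorizations like $(2a+5)^2(a-2)$ already suggest). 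No step is conceptually deep; the care required is in not dropping a case.
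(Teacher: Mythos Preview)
Your strategy --- locate the two critical points relative to the interval endpoints and read off each image interval from monotonicity and endpoint values --- is exactly the paper's, and most of your endpoint computations are right.

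Two slips need fixing. First, $f_a(-a) = -a$, not $0$: the point $-a$ is a \emph{fixed point} of $f_a$ (since $g_a(-a) = 1$), not a root. Once this is corrected your confusion over $f_a(I_1)$ evaporates: the hypothesis already forces $a > 2\sqrt2 - 2 > \tfrac12$, so by your own derivation $r_- < \tfrac{1-2a}{3}$, hence $f_a$ is purely increasing on $I_1$ and $f_a(I_1) = [f_a(-a), f_a(r_-)] = [-a, 0] = I_1 \cup I_2$, exactly as the paper obtains --- and your case split at $a = \tfrac12$ is vacuous in the stated range. Second, your claim that $f_a(1/4) = -a$ at $a = 2\sqrt2 - 2$, and that this ``is the identity pinning down the hypothesis,'' is false (plug in and check). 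The lower bound $2\sqrt2 - 2$ is not sharp for this particular lemma; it is inherited from the catapult-phase boundary, and all the argument here actually uses is $a > \tfrac12$, which guarantees $r_- < 0$ so that the five subintervals are well-ordered.
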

\begin{proof}
    We first prove (i). By Lemma \ref{lem: fa} we know $f_a(z)$ is increasing on $I_1$, achieving its local minimum at $z=1$ on $I_4$, increasing on $I_5$, then we know
    \begin{align}
        &f_a(I_1) = \left[f_a(-a), f_a\left(\frac{2-a-\sqrt{a^2 + 4a}}{2}\right)\right] = [-a, 0] = I_1\cup I_2. \\
        &f_a(I_4) = \left[f_a(1), \max\left\{f_a(0.25), f_a\left(\frac{2-a+\sqrt{a^2 + 4a}}{2}\right)\right\}\right] = [-a, 0] = I_1\cup I_2. \\
        &f_a(I_5) = \left[f_a\left(\frac{2-a+\sqrt{a^2 + 4a}}{2}\right), f_a(2)\right] = [0, 2] = I_3\cup I_4\cup I_5.
    \end{align}
    This completes the proof of (i). 
    
    To prove (ii), observe that when $a\in (2\sqrt{2}-2, 1]$ we have $\frac{2-a-\sqrt{a^2 + 4a}}{2}< \frac{1-2a}{3}<0$. By Lemma \ref{lem: fa} we know the local maximum of $f_a$ over $I_2 = \left[\frac{2-a-\sqrt{a^2 + 4a}}{2}, 0\right]$ is achieved at $\frac{1-2a}{3}$, this together with the fact that $f_a(0) = f_a\left(\frac{2-a-\sqrt{a^2+4a}}{2}\right) = 0$ implies 
        \begin{align}
            f_a\left(I_2\right) = \left[f_a(0), f_a\left(\frac{1-2a}{3}\right)\right] = \left[0, \frac{4a^3 + 12a^2 - 15a + 4}{27}\right] \subseteq [0, 0.25],
        \end{align}
        where the last subset inclusion is true since
        \begin{align}
            (4a^3 + 12a^2 - 15a + 4)' = 12a^2 + 24a - 15 > 0,\ \forall a\in (2\sqrt{2} -2, 1].
        \end{align}        
        This implies when $a\in(2\sqrt{2}-2, 1]$,
        \begin{align}
            \frac{4a^3 + 12a^2 - 15a + 4}{27}\leq \frac{(4a^3 + 12a^2 - 15a + 4)|_{a=1}}{27} = \frac{5}{27} < 0.25.
        \end{align}
        On the other hand, we know from Lemma \ref{lem: fa} that on $I_3 = [0, 0.25](\subseteq \left[\frac{1-2a}{3}, 1\right])$ $f_a$ is decreasing. Hence,
        \begin{align}
            f_a(I_3) = [f_a(0.25), f_a(0)] = \left[-\frac{7}{16}a + \frac{9}{16}, 0\right] \subseteq \left[\frac{2-a-\sqrt{a^2 + 4a}}{2}, 0\right] = I_2.
        \end{align}
        where the last subset inclusion is true since
        \begin{align}
            f_a(0.25) = -\frac{7}{16}a + \frac{9}{16} > \frac{2-a-\sqrt{a^2 + 4a}}{2},\ \forall a\in (2\sqrt{2}-2, 1].
        \end{align}
        This completes the proof of (ii).
\end{proof}
See Figure \ref{fig: f1} for a visualization of the subintervals $I_1, ..., I_5$ for $a=1$ and an example of the orbit on it.

\begin{figure}[ht]
    \centering
    \subfigure[]{\label{fig: f1}\includegraphics[width=0.23\textwidth]{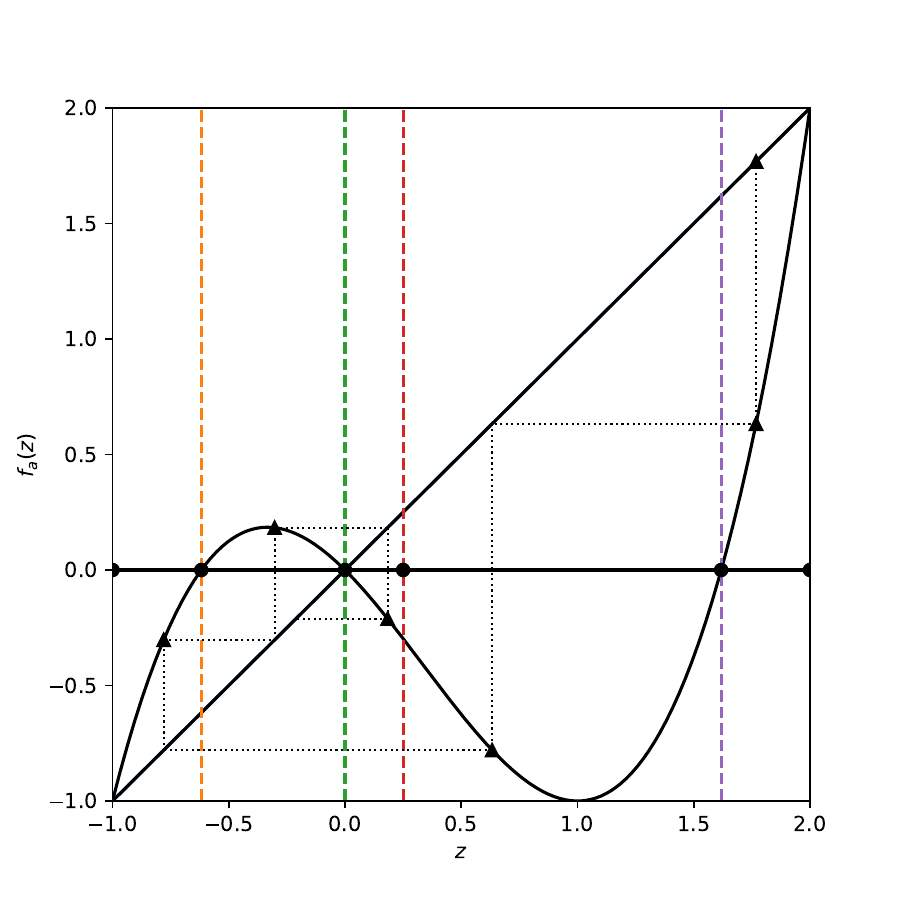}}
    \subfigure[]{\label{fig: period2}\includegraphics[width=0.23\textwidth]{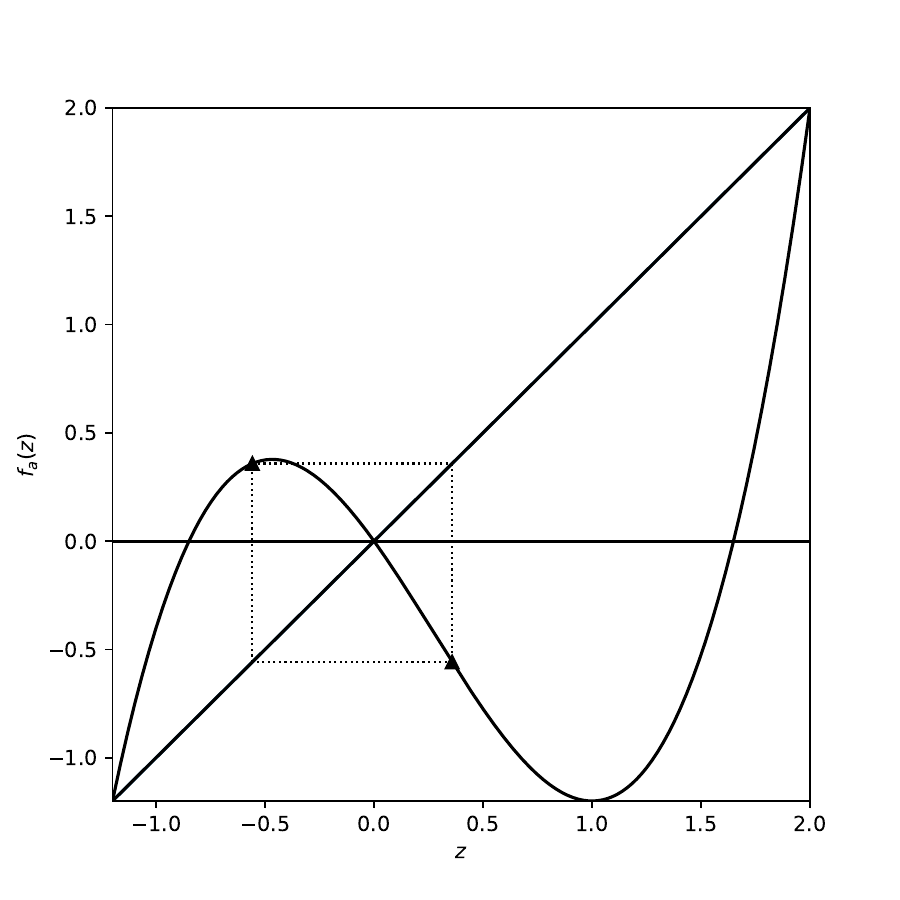}}
    \subfigure[]{\label{fig: period3}\includegraphics[width=0.23\textwidth]{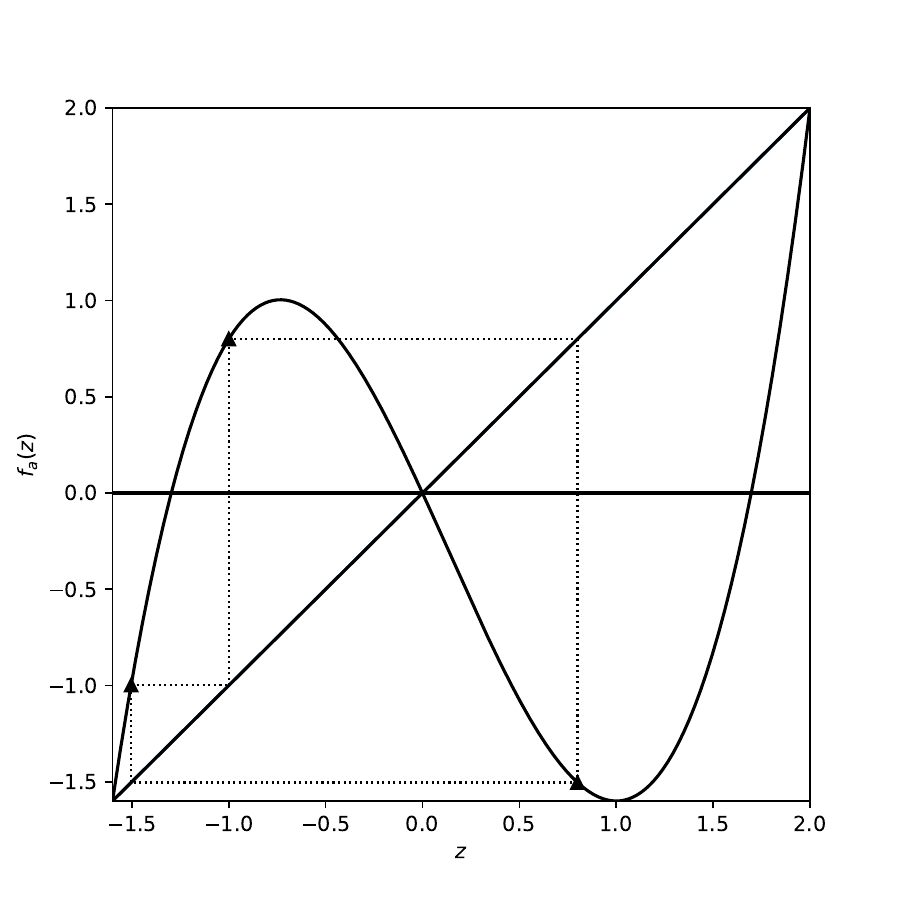}}
    \subfigure[]{\label{fig: divergence}\includegraphics[width=0.23\textwidth]{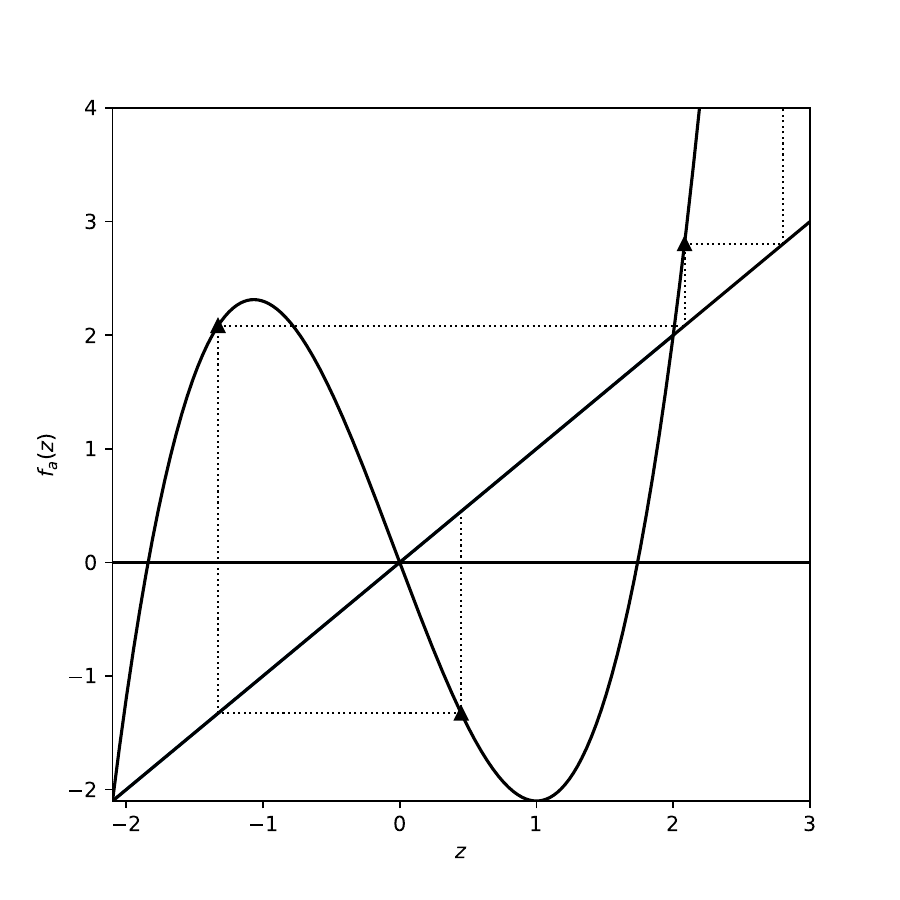}}
    \caption{From left to right: cubic function $f_1(z)$ with different regions diveded by subintervals and a trajectory of $\{z_i\}_{i=0}^{5}$, cubic function $f_{1.2}(z)$ with two period-$2$ point, cubic function $f_{1.6}(z)$ with a period-$3$ point, and cubic function $f_{2.1}(z)$ with a diverging orbit. We have the cubic curve and the identical mapping line as the solid curves. We use four colored dashed lines in Figure \ref{fig: f1} to represent the boundaries that are orthogonal to the endpoints of $I_2$ and $I_4$ defined in Lemma \ref{lem: catapult_dynamics} respectively. The triangle markers represent some terms of a certain orbit, in which horizontal and vertical dotted lines visualize the transitioning trajectory between consecutive terms in an orbit.}
\end{figure}

\begin{lemma}
    Suppose $0<a\leq 1$ and $-a\leq z_0\leq 2$. Then we have
    \begin{itemize}
        \item (i) $-a\leq z_t\leq 2$ for any $t$, and $f_a$ does not have a period-$2$ point on $[-a, 2]$.
        \item (ii) If $z_0$ is chosen from $[-a, 2]$ uniformly at random, then $\lim_{t\rightarrow \infty}z_t = 0$ almost surely. Moreover, if $0 < a \leq 2\sqrt{2} - 2$, then almost surely $|z_{t+1}|\leq |z_t|$ for all $t$. If $2\sqrt{2} - 2<a\leq 2$, then almost surely $\{|z_t|\}_{t=0}^{\infty}$ has catpults. 
    \end{itemize}
\end{lemma}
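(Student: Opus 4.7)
The plan is to establish (i) first and then split (ii) into the monotonic and catapult regimes separated by $a = 2\sqrt{2}-2$. Invariance in (i) follows directly from Lemma~\ref{lem: fa}(iii): since $a \le 1 \le 2$ we have $f_a((1-2a)/3) \le 2$, so $f_a([-a,2]) \subseteq [-a,2]$, and induction gives $z_t \in [-a,2]$ for all $t$. For the absence of period-$2$ points, I exploit three factorizations obtained by extracting the known roots $0, -a, 2$ of the relevant polynomials:
\[
f_a(z) - z = z(z+a)(z-2), \quad f_a(z) + a = (z+a)(z-1)^2, \quad f_a(z) - 2 = (z-2)(z^2+az+1).
\]
Substituting into $f_a^{(2)}(z) - z = f_a(z)(f_a(z)+a)(f_a(z)-2) + (f_a(z) - z)$ and factoring out $(f_a(z)-z)$ yields
\[
f_a^{(2)}(z) - z = (f_a(z) - z)\, Q_a(z), \qquad Q_a(z) := g_a(z)(z-1)^2(z^2+az+1) + 1,
\]
so period-$2$ points correspond exactly to zeros of $Q_a$ on $[-a,2] \setminus \{0, -a, 2\}$. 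Since $z^2 + az + 1 > 0$ (its discriminant is $a^2 - 4 < 0$) and $(z-1)^2 \ge 0$, a zero of $Q_a$ forces $g_a(z) < 0$, confining $z$ to $(z_-, z_+)$ with $z_\pm = (2 - a \pm \sqrt{a^2 + 4a})/2$. On this interval I bound $|g_a(z)|(z-1)^2(z^2+az+1) < 1$ using $\max_{[-a,2]}(-g_a) = a + a^2/4$ (attained at $z = (2-a)/2$) combined with direct estimates on the remaining factors; the critical case $a = 1$ admits the clean identity $Q_1(z) = z^2(z^4 - 2z^3 + 2)$, whose second factor has minimum $5/16 > 0$ on $[-1,2]$ (at $z = 3/2$), leaving $z = 0$ (a fixed point) as the only zero.

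For (ii) in the monotonic regime $0 < a \le 2\sqrt{2}-2$, the key observation is that $|g_a| \le 1$ on $[-a,2]$: indeed $g_a(-a) = g_a(2) = 1$ (the maximum) and the minimum is $1 - (2+a)^2/4$ at $z = (2-a)/2$, with the condition $1 - (2+a)^2/4 \ge -1$ being exactly $a \le 2\sqrt{2} - 2$. Hence $|z_{t+1}| = |z_t|\,|g_a(z_t)| \le |z_t|$, so $|z_t|$ is bounded and monotone non-increasing, converging to some $L \ge 0$. If $z_t$ itself converges to some $p$, continuity of $f_a$ makes $p$ a fixed point, hence $p \in \{0, -a, 2\}$; otherwise the subsequential limits $\pm L$ with $L > 0$ form a period-$2$ orbit of $f_a$, contradicting (i). Thus the orbit converges to a fixed point, and the orbits terminating at $-a$ or $2$ lie in the countable Lebesgue-null set $\bigcup_t f_a^{-t}(\{-a, 2\})$, so $z_t \to 0$ almost surely.

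For the catapult regime $2\sqrt{2}-2 < a \le 1$, $|z_t|$ is no longer monotone, so I combine (i) with Sharkovskii's theorem: since $2$ is second-to-last in the Sharkovskii ordering, the absence of period-$2$ points rules out every period $k \ne 1$, so the only periodic orbits of $f_a$ on $[-a, 2]$ are the fixed points $\{0, -a, 2\}$. Because $f_a'(z) = (z-1)(3z + 2a - 1)$ has two distinct real roots, Lemma~\ref{lem: poly_schwarzian} gives $\mathsf{S}f_a < 0$; Singer's theorem then forces every attracting periodic orbit to capture a critical point or an endpoint in its immediate basin. Since $|f_a'(-a)| = (1+a)^2 > 1$ and $|f_a'(2)| = 2a + 5 > 1$ render $-a$ and $2$ repelling, and $0$ is attracting (either by $|f_a'(0)| = |1 - 2a| < 1$ for $a < 1$, or at $a = 1$ by the second-iterate expansion $f_1^{(2)}(z) = z - 4z^3 + O(z^4)$), the only attracting cycle is $\{0\}$. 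Combined with the trapping structure of Lemma~\ref{lem: catapult_dynamics}, which routes orbits into subintervals clustering around $0$, this gives $z_t \to 0$ almost surely. The catapult assertion then follows by noting that the expansion region $E^c = \{z \in [-a, 2] : |g_a(z)| > 1\} = (z_{--}, z_{++})$, with $z_{\pm \pm} = (2 - a \pm \sqrt{a^2 + 4a - 4})/2$, is a nonempty open subinterval; any orbit avoiding $E^c$ is monotone in $|z_t|$ and converges to a fixed point by the argument above, so except for orbits tending to $\{-a, 2\}$ (a null set) or approaching $0$ strictly through the contraction region $E$, every orbit traverses $E^c$ at some time and thereby produces a catapult.

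\textbf{Main obstacle.} The chief difficulty is establishing almost-sure convergence and almost-sure catapults in the catapult regime. While Sharkovskii's and Singer's theorems together forbid non-trivial periodic orbits and pin every attracting cycle to critical/endpoint data, one must still exclude exotic $\omega$-limit behavior (Cantor-type minimal sets permitted for cubic maps in principle) by carefully exploiting the trapping structure of Lemma~\ref{lem: catapult_dynamics}. Moreover, showing that almost every orbit visits $E^c$ requires a delicate basin analysis near $0$, since for $a$ just above $2\sqrt{2}-2$ the contraction set $\{|g_a| \le 1\}$ contains a full neighborhood of $0$ and can a priori trap positive-measure families of orbits. A secondary hurdle is the polynomial-positivity verification $Q_a(z) > 0$ on $[-a, 2]$ uniformly over $a \in (0, 1]$, which will likely require a monotonicity-in-$a$ argument or a Sturm-style case split rather than a single closed-form inequality.
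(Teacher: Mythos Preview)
Your factorization $Q_a(z)=g_a(z)(z-1)^2(z^2+az+1)+1$ via the identities $f_a(z)+a=(z+a)(z-1)^2$ and $f_a(z)-2=(z-2)(z^2+az+1)$ is correct, but the positivity you flag as a ``secondary hurdle'' is the heart of (i), and your proposed route (bounding $\max(-g_a)$ against the remaining factors) will not close: the factor $(z-1)^2(z^2+az+1)$ reaches values near $2$ at $z\approx z_-$ for $a$ close to $1$, so the product of individual suprema exceeds $1$ and a genuinely pointwise argument is unavoidable. The paper instead factors the same degree-$6$ polynomial as
\[
Q_a(z)=\bigl(z^2+(a-1)z+1-a\bigr)\bigl(z^4+(a-3)z^3+(3-3a)z^2+(2a-2)z+2\bigr),
\]
observes that the quadratic has discriminant $(1-a)(-3-a)<0$ for $a<1$ (with the unique root $z=0$ at $a=1$), and handles the quartic by rewriting it as $z(z-1)(z-2)\bigl(a+z+\tfrac{1}{z}+\tfrac{1}{(z-1)(z-2)}\bigr)$ and checking signs on each of $(-a,0)$, $(0,1)$, $(1,2)$. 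This factoring is the key algebraic step you are missing.

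For convergence in the catapult regime, Sharkovskii plus Singer is heavier than needed, and the worry about Cantor-type $\omega$-limit sets is misplaced. Once (i) rules out period-$2$ points, Lemma~\ref{lem: catapult_dynamics} already gives $f_a^{(2)}(I_3)\subseteq I_3$ with $0$ as the \emph{unique} fixed point there (any other would be period-$2$ for $f_a$), so orbits in $I_3$ converge to $0$ under $f_a^{(2)}$ by a one-line monotonicity argument; Singer is not needed. The paper makes this even more direct by reusing its factorization: $g_a(z)g_a(zg_a(z))-1=(z+a)(z-2)\,Q_a(z)\le 0$ on $(-a,2)$ yields $|z_{t+2}|\le|z_t|$ on $I_2\cup I_3$ immediately. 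Finally, your concern about the catapult assertion is well-founded and is not resolvable as stated: for $2\sqrt{2}-2<a<1$ one has $z_{--}=(2-a-\sqrt{a^2+4a-4})/2>0$, so a full two-sided neighborhood of $0$ lies in the contraction set $\{|g_a|\le 1\}$, and orbits starting there --- a set of positive measure --- satisfy $|z_{t+1}|<|z_t|$ for all $t$. The paper's argument at this step implicitly uses $z_{--}\le 0$, which holds only at $a=1$.
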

\begin{proof}
    The boundedness of each iterate (i.e., $z_t\in [-a, 2]$) can be proved by using simple induction and Lemma \ref{lem: fa}, $0< a\leq 1$, and $-a\leq z_0\leq 2$. To prove the rest of (i), by \eqref{eq: f_a} we know a period-$2$ point is a solution of 
    \begin{align}
        f_a^{(2)}(z) = z,\ f_a(z) \neq z
    \end{align}
    which are equivalent to
    \begin{align}\label{eq: lem2a_equiv}
        g_a(z)g_a(zg_a(z)) = 1, z\notin \{-a,0, 2\}.
    \end{align}
    Hence it suffices to prove \eqref{eq: lem2a_equiv} do not have a solution. Define 
    \begin{align}
        h_a(z) = g_a(z) - 1 = (z+a)(z-2) < 0,\ \forall z\in (-a, 2).
    \end{align}
    We have
    \begin{align}
        &g_a(z)g_a(zg_a(z)) - 1 \\
        = &h_a(z) + h_a(z)h_a(zg_a(z)) + h_a(zg_a(z)) \\
        = &h_a(z)(1 + h_a(zg_a(z))) + (z+a+zh_a(z))(z-2 + zh_a(z)) \\
        = &h_a(z)(1 + h_a(zg_a(z))) + h_a(z) + (z(z-2) + z(z+a))h_a(z) + z^2h_a^2(z) \\
        = &h_a(z)(h_a(zg_a(z)) + z^2h_a(z) + 2z^2 + (a-2)z + 2). \label{eq: gg_1}
    \end{align}
    We have
    \begin{align}
        &h_a(zg_a(z)) + z^2h_a(z) + 2z^2 + (a-2)z + 2 \\
        = &(zg_a(z) + a)(zg_a(z) - 2) + z^2(z+a)(z-2) + 2z^2 + (a-2)z + 2 \\
        = &z^2(z^2 + (a-2)z + 1-2a)^2 + (a-2)z(z^2 + (a-2)z + 1-2a) - 2a \\
        &+ z^2(z+a)(z-2) + 2z^2 + (a-2)z + 2 \\
        = &z^6 + (2a-4)z^5 + (a^2 - 8a + 7)z^4 - (4a^2 - 12a + 8)z^3 + (5a^2 - 10a + 7)z^2 \\
        &- (2a^2 - 6a + 4)z + 2 - 2a \\
        = & (z^2 + (a-1)z + 1-a)(z^4 + (a-3)z^3 + (3-3a)z^2 + (2a-2)z + 2). \label{eq: degree6_fac}
    \end{align}
    Observe that
    \begin{align}\label{ineq: degree2}
        z^2 + (a-1)z + (1-a) \geq (1-a) - \frac{(a-1)^2}{4} = \frac{(3+a)(1-a)}{4}\geq 0,\ \forall a\in (0,1].
    \end{align}
    The equalities hold if and only if $z = 0,\ a= 1$. We also have
    \begin{align}
        &z^4 + (a-3)z^3 + (3-3a)z^2 + (2a-2)z + 2 > 0,\ \forall z\in \{0, 1, 2\} \\
        &z^4 + (a-3)z^3 + (3-3a)z^2 + (2a-2)z + 2 \\
        = &z(z-1)(z-2)\left(a + z + \frac{1}{z} + \frac{1}{z^2 - 3z + 2}\right), \forall z\notin\{0,1,2\}.
    \end{align}
    For different $z$ we can verify the following inequalities via basic algebra or Young's inequality:
    \begin{align}
        &z(z-1)(z-2) < 0,\ \left(a + z + \frac{1}{z} + \frac{1}{z^2 - 3z + 2}\right) < 1 + 2 + \frac{1}{2} + \frac{1}{-0.25} < 0,\ &\forall z\in (1,2). \\
        &z(z-1)(z-2) > 0,\ \left(a + z + \frac{1}{z} + \frac{1}{z^2 - 3z + 2}\right) > 0 + 1 + 1 + 0 > 0,\ &\forall z\in (0,1). \\
        &z(z-1)(z-2) < 0,\ \left(a + z + \frac{1}{z} + \frac{1}{z^2 - 3z + 2}\right) < 1 - 1 - 1 + \frac{1}{2} < 0,\ &\forall z\in (-a, 0).
    \end{align}
    Thus we may conclude that 
    \begin{align}\label{ineq: degree4}
        z^4 + (a-3)z^3 + (3-3a)z^2 + (2a-2)z + 2 > 0,\ \forall z\in (-a, 2).
    \end{align}
    By \eqref{eq: gg_1}, \eqref{eq: degree6_fac}, \eqref{ineq: degree2}, \eqref{ineq: degree4}, we know $g_a(z)g_a(zg_a(z)) - 1 \neq 0$ if $z\notin \{-a, 0, 2\}$. Hence $f_a$ does not have a period-$2$ point on $[-a, 2]$.

    To prove the first part in (ii) (the limit converges to 0 almost surely), we will prove
    \begin{align}\label{eq: subproof}
        (1) \lim_{t\rightarrow \infty} z_t \in \{-a, 0, 2\},\ (2) \text{ The set $S$ such that the orbit with $z_0\in S$ has measure $0$.}
    \end{align}
    We now consider two cases -- $a\in(0, 2\sqrt{2} - 2]$ and $a\in (2\sqrt{2} - 2, 1]$.\\
    
\noindent\textbf{Case 1: $a\in (0, 2\sqrt{2} - 2]$.} Note that we have
        \begin{align}
            |g_a(z_t)| = |z_t^2 + (a-2)z_t + 1-2a| \leq \max\left(|g_a(-a)|, |g_a(2)|, |g_a\left(1-\frac{a}{2}\right)|\right) = 1,
        \end{align}
        where the last equality holds since $g_a(-a) = g_a(2) = 1$ and $|g_a\left(1 - \frac{a}{2}\right)| = \frac{a^2 + 4a}{4}\leq 1$ for any $a\in (0, 2\sqrt{2}-2]$. Hence, we know
        \begin{align}\label{ineq: decreasing}
            |z_{t+1}| = |f_a(z_t)| = |z_tg_a(z_t)|\leq |z_t|,\ \forall z_t\in[-a, 2]
        \end{align}
        Hence $\lim_{t\rightarrow \infty} |z_t|$ exists.
        \begin{align}
            \lim_{t\rightarrow \infty}|z_t| = \lim_{t\rightarrow \infty}|z_{t+1}| = \lim_{t\rightarrow \infty}|z_t||g_a(z_t)|
        \end{align}
        Hence, we know
        \begin{align}
            \lim_{t\rightarrow \infty}|z_t| = 0, \text{ or } \lim_{t\rightarrow \infty}|z_t| \neq 0,\ \lim_{t\rightarrow \infty}|g_a(z_t)| = 1.
        \end{align}
        If $\lim_{t\rightarrow\infty}|z_t| \neq 0$, then we have two subcases
        \begin{itemize}
            \item Sub-case 1: $\lim_{t\rightarrow\infty} z_t$ exists. We can verify that 
            \begin{align}
                \lim_{t\rightarrow \infty}z_t = \lim_{t\rightarrow \infty}z_{t+1} = f_a(\lim_{t\rightarrow \infty}z_t)
            \end{align}
            and thus $\lim_{t\rightarrow \infty} z_t$ is one of the fixed points of $f_a(z)$
            $\in \{-a, 0, 2\}$.
            \item Sub-case 2: $\lim_{t\rightarrow\infty} z_t$ does not exist. Since $\lim_{t\rightarrow\infty} |z_t|$ exists, we know there exists an infinite subsequence (denoted as $A_1$) of $\{z_t\}_{t=0}^{\infty}$ with some limit $c$ and the complement of the sequence, as another infinite subsequence (denoted as $A_2$), has limit $-c$ for some constant $c > 0$. Hence, we can pick a sequence of the subscripts $k_1<k_2<...<k_n<...$ such that $z_{k_1}, ..., z_{k_n}, ...$ belong to $A_1$ and $z_{k_1 + 1}, ..., z_{k_n + 1}, ...$ belong to $A_2$. Moreover, we have
            \begin{align}\label{eq: subseq_trick}
                c = \lim_{i\rightarrow\infty}z_{k_i} = - \lim_{i\rightarrow\infty}z_{k_i + 1} = - \lim_{i\rightarrow\infty}z_{k_i}g_a(z_{k_i}) = -cg_a(c)
            \end{align}
            This implies that $g_a(c) = -1$, i.e.,
            \begin{align}
                c^2 + (a-2)c + 2-2a = 0.
            \end{align}
            From its discriminant $(a-2)^2 - 4(2-2a) = a^2 + 4a - 4 \leq 0$ for $a\in(0, 2\sqrt{2}-2]$ where equality holds only at $2\sqrt{2}-2$, we know $a = 2\sqrt{2}-2$ and thus $c=2-\sqrt{2}$. However, we can apply the similar trick and pick another sequence $\tilde k_1<\tilde k_2<...<\tilde k_n<...$ such that $z_{\tilde k_1}, ..., z_{\tilde k_n}, ...$ belong to $A_2$ and $z_{\tilde k_1 + 1}, ..., z_{\tilde k_n + 1}, ...$ belong to $A_1$. This implies
            \begin{align}
                -c = \lim_{i\rightarrow\infty}z_{\tilde k_i} = - \lim_{i\rightarrow\infty}z_{\tilde k_i + 1} = - \lim_{i\rightarrow\infty}z_{k_i}g_a(z_{k_i}) = -(-c)g_a(-c)
            \end{align}
            which gives
            \begin{align}
                c^2 - (a-2)c + 2-2a = 0.
            \end{align}
            This contradicts with $a =2\sqrt{2} -2$ and $c=2-\sqrt{2}$. This means case 2 does not exist.
        \end{itemize}
        Hence, we know $|z_t|$ is decreasing (not necessarily strictly) and $\lim_{t\rightarrow\infty} z_t \in \{-a, 0, 2\}$.\\

\noindent\textbf{Case 2: $a\in (2\sqrt{2} - 2, 1]$.} We divide the interval $[-a, 2]$ into the following five parts:
        \begin{align}
            &I_1 = \left[-a, \frac{2-a-\sqrt{a^2 + 4a}}{2}\right],\ I_2 = \left[\frac{2-a-\sqrt{a^2 + 4a}}{2}, 0\right],\\
            &I_3 = \left[0, 0.25\right],\ I_4 = \left[0.25, \frac{2-a+\sqrt{a^2 + 4a}}{2}\right],\ I_5 = \left[\frac{2-a+\sqrt{a^2 + 4a}}{2}, 2\right].
        \end{align}
        Recall that by Lemma \ref{lem: catapult_dynamics} we have:
        \begin{align}\label{eq: catapult_dynamics}
            f_a(I_1)= I_1\cup I_2,\ f_a(I_2)\subseteq I_3,\ f_a(I_3)\subseteq I_2,\ f_a(I_4)= I_1\cup I_2,\ f_a(I_5)= I_3\cup I_4\cup I_5.
        \end{align}
        We have the following conclusion. Observe that $f_a$ is continuous, and
        \begin{align}
            &z_{t+1} - z_t = f_a(z_t) - z_t = z_t(z_t+a)(z_t-2) \geq 0,\ \forall z_t\in I_1 = \left[-a, \frac{2-a-\sqrt{a^2 + 4a}}{2}\right],\\
            &z_{t+1} - z_t = f_a(z_t) - z_t = z_t(z_t+a)(z_t-2) \leq 0,\ \forall z_t\in I_5 = \left[\frac{2-a+\sqrt{a^2 + 4a}}{2}, 2\right].
        \end{align}
        We know if the sequence $\{z_t\}_{t=0}^{\infty}$ visits $I_5$, by Lemma \ref{lem: monotone} we know either $\lim_{t\rightarrow\infty}z_t=2$ or there exists $M>0$ such that $z_t\notin I_5$ for any $t\geq M$. Then if the sequence visits $I_1$ then by Lemma \ref{lem: monotone} either $\lim_{t\rightarrow\infty}z_t=-a$ or there exists $\tilde M>M>0$ such that $z_t\in I_2\cup I_3$ for any $t\geq \tilde M$, since $f_a(I_1)\subseteq I_1\cup I_2$ and $f_a(I_2\cup I_3)\subseteq I_2\cup I_3$. Hence, the proof is reduced to the case when $z_0\in I_2\cup I_3$. For the case when $z_0\in I_2\cup I_3 = \left[\frac{2-a-\sqrt{a^2 + 4a}}{2}, 0.25\right]$. The key observation is to show that in this interval
        \begin{align}\label{ineq: subseqdecrease}
            |z_{t+2}| \leq |z_t|.
        \end{align}
        Recall that by Lemma \ref{lem: catapult_dynamics} (ii) we have
        \begin{align}\label{subset: 2}
            f_a\left(I_2\right) \subseteq I_3,\ f_a(I_3)\subseteq I_2.
        \end{align}
        To prove \eqref{ineq: subseqdecrease}, we know it holds when $z_t = 0$. When $z_t \neq 0$, by \eqref{subset: 2} we know $f_a^{(2)}(z_t)$ and $z_t$ have the same sign provided $z_t\in I_2\cup I_3 = \left[\frac{2-a-\sqrt{a^2 + 4a}}{2}, 0.25\right]$. This together with
        \begin{align}
            f_a^{(2)}(z) = f_a(z)g_a(f_a(z)) = zg_a(z)g_a(zg_a(z))
        \end{align}
        implies that $g_a(z)g_a(zg_a(z)) \geq 0$ when $z\in \left[\frac{2-a-\sqrt{a^2 + 4a}}{2}, 0\right)\cup\left(0, 0.25\right]$. Thus 
        we know
        \begin{align}
            |z_{t+2}| = |z_tg_a(z_t)g_a(z_tg_a(z_t))| = |z_t|g_a(z_t)g_a(z_tg_a(z_t)).
        \end{align}
        Thus to prove \eqref{ineq: subseqdecrease} it suffices to show $g_a(z)g_a(zg_a(z))- 1\leq 0$, which is true by combining \eqref{eq: gg_1}, \eqref{eq: degree6_fac}, \eqref{ineq: degree2}, and \eqref{ineq: degree4}. This completes the proof of (1) in \eqref{eq: subproof}. To prove (2) in \eqref{eq: subproof}, we first notice that $f_a(z) - z = z(z+a)(z-2)> 0$ for any $z\in(-a, 0)$, and thus $z_{t+1} > z_t$ for any $z_t$ near $-a$. Hence, $\lim_{t\rightarrow\infty}z_t = -a$ if and only if there exists $t$ such that $z_t = -a$. This implies that $f_a^{(t)}(z_0) = -a$ for some $t$. Similarly, $f_a(z) - z < 0$ for any $z\in (0, 2)$, which implies $z_{t+1} < z_t$ for any $z_t$ near $2$. Hence, $\lim_{t\rightarrow \infty} z_t = 2$ if and only if $z_0=2.$ Define
    \begin{align}
        S = \bigcup_{n=0}^{\infty}f_a^{(-n)}(-a)\cup \{2\}
    \end{align}
    where $f_a^{(-n)}(-a)$ denotes the preimage of $-a$ under $f_a^{(n)}$. Clearly, each preimage is a finite set, and thus $S$ is countable.
    Hence, we know as long as $z_0\in [-a, 2] \backslash S $, we have $\lim_{t\rightarrow\infty}z_t = 0.$ Since $S$ is a countable set and $z_0$ is chosen uniformly at random, we know $\lim_{t\rightarrow\infty}z_t=0$ almost surely.

    For the rest of (ii), we have already proved in \eqref{ineq: decreasing} that $\{|z_t|\}_{t=0}^{\infty}$ is decreasing when $0 < a\leq 2\sqrt{2} - 2$. To see $\{|z_t|\}_{t=0}^{\infty}$ has catapults when $2\sqrt{2} - 2 < a \leq 1$, we consider the following intervals
    \begin{align}
        J_1 = [-a, 0] = I_1\cup I_2,\ J_2 = \left[0, \min\left\{\frac{2-a+\sqrt{a^2 + 4a - 4}}{2}, 0.25\right\}\right] \subseteq I_3,
    \end{align}
    where we have $a^2 + 4a - 4 > 0$ for $a > 2\sqrt{2} - 2$ so $J_2$ is well-defined. Notice that
    \begin{align}
        0< z< \frac{2-a+\sqrt{a^2 + 4a - 4}}{2} \Leftrightarrow g_a(z) < -1,\ z > 0. 
    \end{align}
    Hence we know for any $z_t\in J_2$, we will have
    \begin{align}\label{ineq: catapult_points}
        |z_{t+1}| = |z_tg_a(z_t)| > |z_t|.
    \end{align}
    On the other hand, notice that $0$ is in the orbit if and only if $z_0\notin S_0$, where $S_0$ is defined as
    \begin{align}
        S_0 = \bigcup_{n=0}^{\infty}f_a^{(-n)}(0)
    \end{align}
    where $f_a^{-n}(z)$ denotes the set of preimage of $z$ under $f_a^{(n)}$. Note that each preimage is finite and thus $S_0$ is countable. Hence, we know almost surely the orbit will not contain $0$, and recall that by Lemma \eqref{lem: catapult_dynamics} (ii) and $\lim_{t\rightarrow\infty} z_t = 0$, we know there are infinitely many $t$ such that $t\in J_2$, and thus \eqref{ineq: catapult_points} holds for infinitely many $t$ almost surely. By definition \ref{def: 5phases}, we know $\{|z_t|\}$ has catapults almost surely.
\end{proof}

The following theorem indicates that, $f_a$ is chaotic provided that $a > a_*$ where $a_*\in (1, 2)$

\begin{lemma}
    Suppose $1 < a \leq 2$ and $-a\leq z_0\leq 2$. Then we have
    \begin{itemize}
        \item (i) $-a\leq z_t\leq 2$ for any $t$, and $f_a(z)$ has a period-$2$ point on $[0, 1]$. 
        \item (ii) There exists $a_*\in (1, 2)$ such that for any $a\in (a_*, 2)$, $f_a$ is Li-Yorke chaotic, and for any $a\in (1, a_*)$, $f_a$ is not Li-Yorke chaotic.
        \item (iii) If there exists an asymptotically stable orbit and $z_0$ is chosen from $[-a, 2]$ uniformly at random, then the orbit of $z_0$ is asymptotically periodic almost surely.
    \end{itemize}
\end{lemma}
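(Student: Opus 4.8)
The plan is to prove the three parts in order, leaning heavily on the structural facts about $f_a$ already established in Lemma~\ref{lem: fa} and on classical tools from one-dimensional dynamics (the Li--Yorke period-three theorem, Singer's theorem on the negative Schwarzian derivative, and the standard fact that the union of immediate basins of attracting periodic orbits has full measure when $\mathsf{S}f<0$). I would first dispatch part~(i). Boundedness $z_t\in[-a,2]$ follows by induction exactly as in the previous lemma: by Lemma~\ref{lem: fa}(iii), $f_a([-a,2])\subseteq[-a,\max\{f_a((1-2a)/3),2\}]$, and $f_a((1-2a)/3)\le 2$ iff $a\le 2$. For the period-$2$ point, I would reuse the factorization already computed in the proof of the previous lemma: a period-$2$ point solves $g_a(z)g_a(zg_a(z))=1$ with $z\notin\{-a,0,2\}$, and from \eqref{eq: gg_1}--\eqref{eq: degree6_fac} this reduces to
\begin{align}
\bigl(z^2+(a-1)z+1-a\bigr)\bigl(z^4+(a-3)z^3+(3-3a)z^2+(2a-2)z+2\bigr)=0.
\end{align}
Now when $a>1$ the quadratic factor $z^2+(a-1)z+(1-a)$ has discriminant $(a-1)^2-4(1-a)=(a-1)(a+3)>0$ and product of roots $1-a<0$, so it has exactly one root in $(0,1)$ — one checks $h(0)=1-a<0$ and $h(1)=1>0$. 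Verifying this root is genuinely period-$2$ (not a fixed point, i.e.\ not in $\{-a,0,2\}$, and lands back in $[0,1]$ under $f_a$) is a short computation; this gives (i).

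For part~(ii), the existence of $a_*$ separating the chaotic from the non-chaotic regime is where the Li--Yorke theorem enters. By the Li--Yorke theorem \citep{li1975period}, if $f_a$ has a point of period~$3$ on an invariant compact interval, then $f_a$ is Li--Yorke chaotic; conversely, on an interval with $\mathsf{S}f_a<0$ (which holds here by Lemma~\ref{lem: poly_schwarzian}, since $f_a'$ has the two distinct real roots $1$ and $(1-2a)/3$), the absence of period-$3$ points rules out Li--Yorke chaos via the Sharkovskii ordering together with the Singer-type control of the dynamics by critical orbits. So I would define
\begin{align}
a_* := \inf\{a\in(1,2): f_a \text{ has a period-3 point in } [-a,2]\},
\end{align}
and then show (a) this set is nonempty — e.g.\ by exhibiting a period-$3$ window, most cleanly by checking that at $a=2$ (or some explicit $a$ close to $2$) the map has a period-$3$ orbit, using that $f_a((1-2a)/3)$ reaches the top of the interval as $a\uparrow2$ so the map becomes "full" and a period-$3$ point must exist by the intermediate value theorem applied to $f_a^{(3)}$; and (b) the set is an interval of the form $(a_*,2]$, i.e.\ once a period-$3$ point appears it persists — this monotonicity is the delicate point and I would argue it by tracking how the critical value $f_a((1-2a)/3)$ (the relevant turning point) moves monotonically in $a$, forcing the graph of $f_a^{(3)}$ to keep the three-cycle once created. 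For $a<a_*$, no period-$3$ point exists by definition, and then non-chaos follows because a negative-Schwarzian interval map with no period-$3$ point has, by Sharkovskii's theorem, only periods that are powers of $2$, and such maps are not Li--Yorke chaotic (the "zero topological entropy $\Leftrightarrow$ no Li--Yorke scrambled set of the required kind" dichotomy for $\mathsf{S}f<0$ maps).

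Part~(iii) is the most classical: I would invoke Singer's theorem. Since $\mathsf{S}f_a<0$ on $[-a,2]$ away from critical points (Lemma~\ref{lem: poly_schwarzian}), each asymptotically stable periodic orbit attracts at least one critical point of $f_a$ in its immediate basin; $f_a$ has only two critical points ($1$ and $(1-2a)/3$), so there are at most two attracting periodic orbits. Moreover, for a $C^3$ interval map with negative Schwarzian derivative, the union of the basins of attraction of its (finitely many) attracting periodic orbits, together with the — here measure-zero — set of points attracted to the repelling endpoints $\{-a,2\}$ and to the unstable fixed point $0$, has full Lebesgue measure; this is the standard "no wandering intervals + finitely many attractors capture a.e.\ point" package for $\mathsf{S}f<0$ maps. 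Combining: if an asymptotically stable periodic orbit exists, then for a.e.\ $z_0\in[-a,2]$ the orbit of $z_0$ converges to some attracting periodic orbit, hence is asymptotically periodic. One must also exclude escape to $\pm\infty$, but part~(i) already confines the orbit to $[-a,2]$, and the preimage set of $\{-a,0,2\}$ under all iterates is countable, as in the previous lemma, so it is null.

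The main obstacle I expect is the monotonicity claim in part~(ii)(b): showing that the set of $a$ for which a period-$3$ point exists is exactly an interval $(a_*,2]$ rather than a more complicated set (in general, period-$3$ windows in parameterized families need not be connected). For the specific cubic $f_a$ this should follow from the unimodal-like structure of the relevant branch together with the monotone dependence of the critical value $f_a((1-2a)/3)=(4a^3+12a^2-15a+4)/27$ on $a$, but making that rigorous — essentially a kneading-theory / monotonicity-of-entropy argument for this one-parameter cubic family — is the technically substantive step, and I would expect the authors' proof to either restrict the claim (as the theorem statement does, splitting only at a single $a_*$) or to supply an explicit algebraic certificate for the period-$3$ orbit at a chosen parameter and a separate argument that no period-$3$ point exists below it.
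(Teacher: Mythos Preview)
For part~(i) your algebraic route via the quadratic factor $z^2+(a-1)z+(1-a)$ is valid, though the paper takes a shorter path: it simply evaluates $g_a(z)g_a(zg_a(z))$ at $z=0$ and $z=1$, obtaining $(1-2a)^2>1$ and $-a<-1$, and applies the intermediate value theorem to locate a period-$2$ point in $(0,1)$. For part~(iii) your outline matches the paper's in spirit (negative Schwarzian via Lemma~\ref{lem: poly_schwarzian}, then Singer, then a full-measure basin statement); the paper additionally notes that the critical point $1$ is irrelevant because $f_a(1)=-a$ lands on an unstable fixed point, leaving only $(1-2a)/3$, and then cites Nusse's results (Theorems~\ref{thm: as_converge} and~\ref{thm: as_converge_cor}) rather than the generic ``no wandering intervals'' package.

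For part~(ii) your plan both diverges from the paper and contains a genuine error. The paper does not hunt for period-$3$ points. Instead it fixes the unique $c\in(1,2)$ with $f_c((1-2c)/3)=1$, so that the critical orbit runs $(1-2c)/3\mapsto 1\mapsto -c$; it then takes a pre-image $z_0\in(-c,(1-2c)/3)$ of the critical point and verifies the Li--Yorke inequality $f_c^{(3)}(z_0)\le z_0<f_c(z_0)<f_c^{(2)}(z_0)$ of Theorem~\ref{thm: period3} directly. Because $f_a((1-2a)/3)$ is strictly increasing in $a$, the same construction (with intermediate pre-images $y_0,x_0$) goes through for every $a>c$. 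The paper then simply \emph{defines} $a_*$ as the infimum of those $a$ for which $f_b$ is chaotic for all $b\in[a,2]$, and does not supply a separate non-chaos argument on $(1,a_*)$.

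Your proposed non-chaos argument below $a_*$ is where the gap lies: the assertion that ``a negative-Schwarzian interval map with no period-$3$ point has, by Sharkovskii's theorem, only periods that are powers of $2$'' is false. In Sharkovskii's ordering $3$ sits at the top, so the absence of period~$3$ is entirely compatible with the presence of periods $5,7,9,\dots$ or $2\cdot3,2\cdot5,\dots$, any of which already forces positive entropy and Li--Yorke chaos. Hence defining $a_*$ via the first appearance of period~$3$ cannot, by itself, deliver the ``not chaotic for $a<a_*$'' half of the statement; you would need to rule out \emph{all} non-power-of-$2$ periods (equivalently, establish zero topological entropy) below $a_*$, which is a much stronger and different claim. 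Your monotonicity concern is well placed, but note that the paper sidesteps it through its definition of $a_*$ rather than via a kneading-theory argument.
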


\begin{proof}
    The boundedness of $z_t$ is a direct result of Lemma \ref{lem: fa} (iii).
    To prove the rest of (i), we notice that for $a\in(1, 2]$
    \begin{align}
        g_a(0)g_a(0g_a(0)) = (1-2a)^2 > 1,\ g_a(1)g_a(1 g_a(1)) = -a < -1.
    \end{align}
    By continuity of $g_a(zg_a(z))$ we know there exists a point $z_0\in (0, 1)$ such that $g_a(z_0g_a(z_0)) = 1$. This indicates that $f^{(2)}(z_0) = z_0g_a(z_0g_a(z_0)) = z_0$ but clearly $f_a(z_0) \neq z_0$ since $(0,1)$ does not contain any fixed point of $f_a$. 
    
    To prove (ii), notice that 
    \begin{align}
        f_1\left(\frac{1-2\times 1}{3}\right) = \frac{5}{27} < 1 < 2 = f_2\left(\frac{1-2\times 2}{3}\right).
    \end{align}
    By continuity of $f_a\left(\frac{1-2a}{3}\right)$ (with respect to $a$) there exists $c\in (1, 2)$ such that
    \begin{align}\label{eq: chaos_lower_bound}
        f_c\left(\frac{1-2c}{3}\right) = \frac{(2c-1)(2c^2 + 7c - 4)}{27} = 1.
    \end{align}
    Moreover we have 
    \begin{align}
        f_c(-c) = -c < \frac{1-2c}{3},\ f_c\left(\frac{1-2c}{3}\right) = 1 > \frac{1-2c}{3}.
    \end{align}
    Hence by continuity of $f_c(z)$, we can pick $z_0\in \left(-c, \frac{1-2c}{3}\right)$ such that $f_c(z_0) = \frac{1-2c}{3}$. We have
    \begin{align}\label{ineq: a1}
        -c< z_0 < \frac{1-2c}{3} = f_c(z_0). 
    \end{align}    
    By \eqref{eq: chaos_lower_bound}, \eqref{ineq: a1}, and Lemma \ref{lem: fa} (i), we have
    \begin{align}
        &f_c^{(3)}(z_0) = f_c^{(2)}\left(\frac{1-2c}{3}\right) = f_c(1) = -c \leq z_0, \label{ineq: a2}\\
        &f_c(z_0) = \frac{1-2c}{3} < 1 = f_c(1) = f_c^{(2)}\left(z_0\right). \label{ineq: a3}
    \end{align}
    Combining $\eqref{ineq: a1}, \eqref{ineq: a2}, \eqref{ineq: a3}$ we can easily verify that 
    \begin{align}
        f_c^{(3)}(z_0) \leq z_0 < f_c(z_0) < f_c^{(2)}(z_0).
    \end{align}
    By Theorem \ref{thm: period3} (i.e., Theorem 1 in \cite{li1975period}), we know $f_c$ is Li-Yorke chaotic. Moreover, for any $a\in (c, 2]$, we know
    \begin{align}
        f_a\left(\frac{1-2a}{3}\right) = \frac{(2a-1)(2a^2 + 7a - 4)}{27} > \frac{(2c-1)(2c^2 + 7c - 4)}{27} = f_c\left(\frac{1-2c}{3}\right) = 1,
    \end{align}
    which together with $f_a(0) = 0 < 1$ implies we can pick $y_0$ such that
    \begin{align}\label{eq: y0}
        \frac{1-2a}{3}< y_0 < 0,\ f_a(y_0) = 1.
    \end{align}
    Similarly, we have
    \begin{align}
        f_a(-a) = -a < \frac{1-2a}{3} < y_0,\ f_a\left(\frac{1-2a}{3}\right) > 1 > y_0
    \end{align}
    which implies we can pick $x_0$ such that
    \begin{align}\label{eq: x0}
        -a < x_0 < \frac{1-2a}{3},\ f_a(x_0) = y_0.
    \end{align}
    Now we know
    \begin{align}
        f_a^{(3)}(x_0) < x_0 < f_a(x_0) < f_a^{(2)}(x_0).
    \end{align}
    By Theorem \ref{thm: period3} (i.e., Theorem 1 in \cite{li1975period}), we know $f_a$ is Li-Yorke chaotic. Hence, we know $c$ defined in \eqref{eq: chaos_lower_bound} satisfies that for any $a\in (c, 2]$, $f_a$ is Li-Yorke chaotic. Hence, we know
    \begin{align}
        a_* = \inf_{a\in (1, 2)}\left\{a: f_b \text{ is Li-Yorke chaotic for any } b\in [a, 2].\right\}
    \end{align}
    where the set is not empty, since we have proven $c$ belongs to the above set. This completes the proof of (ii).
    
    To prove (iii), we notice that if $f_a(z)$ has an asymptotically stable periodic orbit, by Theorem \ref{thm: singer_thm} (i.e., Theorem 2.7 in \cite{singer1978stable}) and the fact that $f_a(x)$ has negative Schwarzian derivative at non-critical points (Lemma \ref{lem: poly_schwarzian}) and we know there exists a critical point $c$ of $f_a(z)$ such that the orbit of $c$ converges to this asymptotically stable orbit. Notice that by Lemma \ref{lem: fa} we know $c=1$ or $\frac{1-2a}{3}$. $c=1$ can be excluded since $f_a(1) = -a$, and $-a$ is an unstable period-$1$ point. Hence, we know $c=\frac{1-2a}{3}$ is asymptotically periodic. By Theorems \ref{thm: as_converge} and \ref{thm: as_converge_cor} (i.e., Theorem B and Corollary in \cite{nusse1987asymptotically}), we know almost surely $z_0$ will not converge to any periodic orbit if $z_0$ is chosen from $[-a, 2]$ uniformly at random. This completes the proof.

\end{proof}

{\bf Remarks:}

\begin{itemize}[noitemsep, leftmargin=0.15in]
    \item See Figure \ref{fig: period2} for a pair of period-$2$ points when $a=1.2$, and Figure \ref{fig: period3} for a period-3 orbit when $a=1.6$. The triangle markers denote the periodic points.
    \item By Theorem \ref{thm: singer_thm} (i.e., Theorem 2.7 in \cite{singer1978stable}) and the fact that $-a$ is an unstable period-$1$ point we know $f_a(z)$ has at most one asymptotically stable periodic orbit. 
\end{itemize}


\begin{lemma}
    Suppose $a>2$. $z_0$ is chosen from $[-a, 2]$ uniformly at random. Then $\lim_{t\rightarrow \infty} |z_t| = +\infty$ almost surely.
\end{lemma}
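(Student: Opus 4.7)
The plan is to establish the claim in two stages: first, that any iterate landing outside $[-a, 2]$ produces an orbit diverging in absolute value; second, that the set of initial conditions whose orbit remains in $[-a, 2]$ for all time has Lebesgue measure zero.

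For the first stage, suppose $z_t > 2$. Since $a > 2$ gives both $z_t + a > 0$ and $z_t - 2 > 0$, the increment $z_{t+1} - z_t = z_t(z_t + a)(z_t - 2)$ is strictly positive, so $z_s$ is strictly increasing for $s \geq t$. Iterating and using that $f_a(z) \sim z^3$ as $z \to \infty$ yields $z_s \to +\infty$. Symmetrically, if $z_t < -a$, then $(z_t + a)(z_t - 2) > 0$ while $z_t < 0$, so $z_{t+1} < z_t$ and $z_s \to -\infty$. Hence it suffices to show that the set $\Lambda := \{z_0 \in [-a, 2] : f_a^{(k)}(z_0) \in [-a, 2] \text{ for all } k \geq 0\}$ has Lebesgue measure zero.

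For the second stage, I set up a branch decomposition. Since $a > 2$, Lemma \ref{lem: fa}(iii) gives $f_a((1-2a)/3) > 2$. Factoring $f_a(z) - 2 = (z - 2)(z^2 + az + 1)$ shows that $f_a(z) = 2$ has roots $z = 2$ and $z = p_{1,2} := (-a \mp \sqrt{a^2 - 4})/2$, with $p_1 < (1 - 2a)/3 < p_2 < 1$. The open gap $U := (p_1, p_2)$ is an ``escape set'' on which $f_a(z) > 2$, while the three closed intervals $J_1 := [-a, p_1]$, $J_2 := [p_2, 1]$, $J_3 := [1, 2]$ are mapped bijectively and monotonically onto $[-a, 2]$ by $f_a$ (by Lemma \ref{lem: fa}(ii)). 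Defining $K_n := \bigcap_{k = 0}^{n} f_a^{-k}([-a, 2]) \cap [-a, 2]$, iteratively pulling $[-a, 2]$ back through the three inverse branches shows that $K_n$ is a disjoint union of $3^n$ closed intervals, and $\Lambda = \bigcap_{n \geq 0} K_n$. The goal is then to show $\mathrm{meas}(K_n) \to 0$.

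The main obstacle is that $f_a$ is not uniformly expanding on $\Lambda$: the critical point $z = 1$ lies on the shared boundary of $J_2$ and $J_3$, and $f_a'(1) = 0$, so the inverse branches contract negligibly on short neighborhoods of $z = 1$. This is circumvented by the observation that $z = 1$ is pre-fixed to the repelling fixed point $-a$, where $|f_a'(-a)| = (a+1)^2 > 9$. Using the local expansion $f_a(z) = -a + (a+1)(z - 1)^2 + O((z - 1)^3)$ near $z = 1$ and then iterating through the strong hyperbolic expansion near $-a$, the set of $z$ within distance $\delta$ of $z = 1$ whose orbit remains in $[-a, 2]$ for an additional $k$ steps has length $O((a + 1)^{-k})$. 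Outside a fixed small neighborhood of $z = 1$, a uniform lower bound $|f_a'| \geq \lambda > 1$ holds on all three branches. Combining these two regimes via a standard distortion argument yields $\mathrm{meas}(K_n) \leq C \rho^n$ for some $\rho \in (0, 1)$, so $\mathrm{meas}(\Lambda) = 0$. Alternatively, since $f_a$ has negative Schwarzian derivative (Lemma \ref{lem: poly_schwarzian}) and both critical points are non-recurrent (one pre-fixed to the repelling fixed point $-a$, the other escaping to infinity), classical results on interval maps imply $\Lambda$ is a hyperbolic Cantor repeller of Lebesgue measure zero (see, e.g., \cite{de2012one}).
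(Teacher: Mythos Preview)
Your route is quite different from the paper's. The paper's argument is essentially a direct appeal to black-box theorems: it notes that the critical point $(1-2a)/3$ escapes to $+\infty$ while the critical point $z=1$ lands on the repelling fixed point $-a$, invokes Singer's theorem (Theorem~\ref{thm: singer_thm}) to rule out any asymptotically stable periodic orbit, and then applies Nusse's results (Theorems~\ref{thm: as_converge} and~\ref{thm: as_converge_cor}) to conclude that almost every point is absorbed by the boundary $+\infty$. Your explicit three-branch Cantor construction is more geometric and yields more structural information about the non-escaping set $\Lambda$; your closing alternative (non-recurrent critical points plus negative Schwarzian, citing de~Melo--van~Strien) is the same flavor as the paper's appeal to Nusse, just with a different reference.

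There is, however, a genuine gap in the expansion step of your direct argument. The assertion that $|f_a'| \geq \lambda > 1$ holds on all three branches away from a neighborhood of $z=1$ is false when $a$ is only slightly larger than $2$. As $a \downarrow 2$ the branch endpoints $p_1, p_2$ both tend to $-1$, which is the limiting position of the \emph{other} critical point $(1-2a)/3$; a Taylor expansion about that critical point gives $|f_a'(p_i)| \approx 2(a+1)\sqrt{a-2}$, and this is strictly below $1$ once $a-2$ is small (roughly for $a \lesssim 2.03$). Hence $f_a$ is not uniformly expanding on $J_1$ near $p_1$ nor on $J_2$ near $p_2$, and the bound $\mathrm{meas}(K_n) \leq C\rho^n$ does not follow from your sketch as written. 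The repair is exactly the mechanism you already use at $z=1$: since $f_a(p_1) = f_a(p_2) = 2$ and $f_a'(2) = 2a+5 > 9$, the weak first-step expansion near $p_1, p_2$ is compensated one iterate later by strong expansion near the repelling fixed point $2$. You therefore need to treat small neighborhoods of $p_1$ and $p_2$ by the same pre-periodic argument you apply near $z=1$; once all three near-critical regions are handled this way, the Misiurewicz-type measure-zero argument can be completed.
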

\begin{proof}
    Notice that by Lemma \ref{lem: fa} we know
    \begin{align}
        f_a\left(\frac{1-2a}{3}\right) = \frac{4a^3 + 12a^2 - 15a + 4}{27} >\frac{(4a^3 + 12a^2 - 15a + 4)|_{a=2}}{27} = 2,\ \forall a>2,
    \end{align}
    where the inequality holds since $4a^3 + 12a^2 - 15a + 4$ is increasing on $(2,\infty)$. Moreover, we have
    \begin{align}
        f_a(z) - z = z(z+a)(z-2) > 0,\ \forall z\in (2,\infty).
    \end{align}
    Hence we know for the initialization at the critical point $z_0 = \frac{1-2a}{3}$, we have $z_1 > 2$, and the whole sequence is increasing. On the other hand, all fixed points of $f_a(z)$ are no greater than $2$, we know $z_t$ will diverge to $+\infty$. For another critical point $z_0 = 1$ we know its orbit converges to the periodic orbit of $z_0 = -a$, which is an unstable period-$1$ point. Hence, we know from Theorem \ref{thm: singer_thm} (i.e., Theorem 2.7 in \cite{singer1978stable}) that there does not exist an asymptotically stable periodic orbit, otherwise the orbit of one critical point must converge to it. Hence, by Theorems \ref{thm: as_converge} and \ref{thm: as_converge_cor} (i.e., Theorem B and Corollary in \cite{nusse1987asymptotically}) we know $\lim_{t\rightarrow \infty}|z_t| = +\infty$ almost surely provided $z_0$ uniformly chosen from $(-a, 2)$, i.e., almost all points in $[-a, 2]$ converge to the absorbing boundary point $+\infty$.
\end{proof}

\subsection{Proofs of results in Section \ref{sec: quadregexamples}}

\begin{proof}[Proof of Theorem \ref{thm: single_data_thm}]
Define
\begin{align}
    \alpha^{(t)} := c + \gamma X^\top w^{(t)},\ \beta := y + \frac{c^2}{2\gamma},\ \kappa := \eta \gamma \norm{X}^2. 
\end{align}
    To prove (i), we observe that
\begin{align}
\nabla_w g(w;X) = (c + \gamma (X^\top w)) X
\end{align}
Let weights at time $t$ be $w^{(t)}$. Thus, the gradient descent takes the form
\begin{align}
w^{(t+1)} = w^{(t)} - \eta (g(w^{(t)};X) - y)(c + \gamma X^\top w^{(t)} ) X = w^{(t)} - \eta e^{(t)}\alpha^{(t)}X.
\end{align}

Simple calculation gives
\begin{align}\label{eq: e_alpha_beta}
e^{(t)} = \frac{(\alpha^{(t)})^2}{2\gamma} - \beta
\end{align}
and
\begin{align}\label{eq: alpha_e}
\alpha^{(t+1)} = (1 - \eta \gamma \norm{X}^2 e^{(t)})\alpha^{(t)} = (1 - \kappa e^{(t)})\alpha^{(t)}.
\end{align}
Hence
\begin{align}
    e^{(t+1)} - e^{(t)} = \frac{1}{2\gamma}\left((\alpha^{(t+1)})^2 - (\alpha^{(t)})^2\right) = \left((1-\kappa e^{(t)})^2 - 1\right)\frac{(\alpha^{(t)})^2}{2\gamma}
\end{align}
which together with \eqref{eq: e_alpha_beta} implies
\begin{align}\label{eq: cubic_for_quadratic}
    \kappa e^{(t+1)} =  \kappa e^{(t)} (\kappa e^{(t)}+\beta\kappa)\left(\kappa e^{(t)}-2\right) + \kappa e^{(t)}.
\end{align}
By definition of $a$ and $z_t$ in \eqref{eq: eza} we know $a = \beta \kappa$ and $z_t = \kappa e^{(t)}$. We know (i) holds.

To compute the largest eigenvalue of the Hessain matrix (i.e., the sharpness defined in EoS literature) of the loss in (ii), we notice that the gradient of the loss function takes the form
\begin{align}
    \nabla \ell(w) = (g(w;X) - y)\nabla_w g(w;X).
\end{align}
Hence
\begin{align}
    \nabla^2 \ell(w) = \nabla_w g(w;X)\nabla_w g(w;X)^{\top} + (g(w;X) - y)\nabla_w^2 g(w;X) = (\alpha^2 + \gamma e)XX^{\top},
\end{align}
where we overload the notation and define 
\begin{align*}
    \alpha = c + \gamma X^{\top}w, \, e = g(w;X) - y.
\end{align*} 
The sharpness is given by
\begin{align}
    \lambda_{\text{max}}(\nabla^2 \ell(w^{(t)})) &= ((\alpha^{(t)})^2 + \gamma e^{(t)})\norm{X}^2 = (3\gamma e^{(t)} + 2\gamma y + c^2)\norm{X}^2 = \frac{3z_t + 2a}{\eta}.
\end{align}
\end{proof}

\begin{proof}[Proof of Theorem \ref{thm: multiple_data_gd}]
The gradient descent takes the form
\begin{align}
    w^{(t+1)} = w^{(t)} - \frac{\eta}{2n}\sum_{i=1}^{n}\nabla \ell_i(w^{(t)}) = w^{(t)} - \frac{\eta}{n}\sum_{i=1}^{n}e^{(t)}(X_i)\alpha^{(t)}(X_i)X_i.
\end{align}
Similarly to \eqref{eq: e_alpha_beta}, for each error term $e^{(t)}(X_i)$ we have
\begin{align}\label{eq: e_alpha_beta_i}
    e^{(t)}(X_i) =\frac{(\alpha^{(t)}(X_i))^2}{2\gamma} - \beta(X_i),
\end{align}
and 
\begin{align}
    &\alpha^{(t+1)}(X_i) =\gamma X_i^{\top} w^{(t+1)} + c(X_i) \\
    = &\gamma \left(X_i^{\top}w^{(t)} - \frac{\eta}{n}\sum_{j=1}^{n}e^{(t)}(X_j)\alpha^{(t)}(X_j)X_i^{\top}X_j\right) + c(X_i) \\
    = &\alpha^{(t)}(X_i) - \frac{\gamma\eta}{n}\sum_{j=1}^{n}e^{(t)}(X_j)\alpha^{(t)}(X_j)X_i^{\top}X_j \\
    = &\alpha^{(t)}(X_i) - \frac{\gamma\eta}{n}\sum_{j=1}^{n}\left(\frac{\alpha^{(t)}(X_j)^3}{2\gamma} - \beta(X_j)\alpha^{(t)}(X_j)\right)X_i^{\top}X_j 
\end{align}
We overload the notation and set
\begin{align}
    \mathbf{X} = \left(X_1, ..., X_n\right)^{\top},\ \#(\mathbf{X}) = \left(\#(X_1), ..., \#(X_n)\right)^{\top},\ \forall \# \in \{\alpha^{(t)}, e^{(t)}, a, \beta\}.
\end{align}
We can obtain 
\begin{align}\label{eq: alpha_update}
    \alpha^{(t+1)}(\mathbf{X}) = \alpha^{(t)}(\mathbf{X}) - \frac{\eta}{n}\mathbf{X}\mathbf{X}^{\top}\left(\frac{\alpha^{(t)}(X)^3}{2} - \gamma \beta(X)\odot\alpha^{(t)}(X)\right),
\end{align}
where $\odot$ denotes the Hadamard product.

As $\mathbf{X}\mathbf{X}^{\top} = \text{diag}(\norm{X_1}^2, ..., \norm{X_n}^2)$, we can rewrite \eqref{eq: alpha_update} as the following non-interacting version for each data point:
\begin{align}
    \alpha^{(t+1)}(X_i) = &\alpha^{(t)}(X_i) - \frac{\eta\norm{X_i}^2}{2n}\left(\alpha^{(t)}(X_i)^3 - 2\gamma \beta(X_i)\alpha^{(t)}(X_i)\right) \\
    = &\left(1 - \frac{\gamma\eta\norm{X_i}^2}{n} e^{(t)}(X_i)\right)\alpha^{(t)}(X_i).
\end{align}
This together with \eqref{eq: e_alpha_beta_i} implies
\begin{align}
    &e^{(t+1)}(X_i) - e^{(t)}(X_i) = \frac{1}{2\gamma}\left((\alpha^{(t+1)}(X_i))^2 - (\alpha^{(t)}(X_i))^2\right) \\
    = &\left(- \frac{2\gamma\eta\norm{X_i}^2}{n} e^{(t)}(X_i) + \frac{\gamma^2\eta^2\norm{X_i}^4}{n^2} (e^{(t)}(X_i))^2\right)\left(e^{(t)}(X_i) + \beta(X_i)\right) \\
    = & \kappa_n(X_i)e^{(t)}(X_i)\left(\kappa_n(X_i)e^{(t)}(X_i) - 2\right)\left(e^{(t)}(X_i) + \beta(X_i)\right)
\end{align}

By definition of $z_i^{(t)}$ and $a_i$ we know
\begin{align}
    z_i^{(t+1)} = z_i^{(t)}(z_i^{(t)} + a_i)(z_i^{(t)} - 2) + z_i^{(t)} = f_{a_i}(z_i^{(t)}).
\end{align}
The sharpness is given by
\begin{align}
    \nabla^2 \ell(w^{(t)}) = &\frac{1}{n}\sum_{i=1}^{n}\left(\nabla_w g(w^{(t)};X_i)\nabla_w g(w^{(t)};X_i)^{\top} + (g(w^{(t)};X_i) - y_i)\nabla_w^2 g(w^{(t)};X_i)\right) \\
    = &\frac{1}{n}\sum_{i=1}^{n}\left((\alpha^{(t)}(X_i))^2 + \gamma e^{(t)}(X_i)\right)X_iX_i^{\top} \\
    = &\frac{1}{n}\sum_{i=1}^{n}(3\gamma e^{(t)}(X_i) + 2\gamma y_i + c^2(X_i))X_iX_i^{\top}.
\end{align}
Therefore we know
\begin{align}
    \nabla^2\ell(w^{(t)})X_i = \frac{1}{n}(3\gamma e^{(t)}(X_i) + 2\gamma y_i + c^2(X_i))\norm{X_i}^2X_i = \frac{3z_i^{(t)} + 2a_i}{\eta}X_i,\ \text{ for all }1\leq i\leq n.
\end{align}
which means we find $n$ eigenvalues and eigenvectors pairs $\left(\frac{3z_1^{(t)} + 2a_1}{\eta}, X_1\right), ..., \left(\frac{3z_n^{(t)} + 2a_n}{\eta}, X_n\right)$. Note that $\nabla^2\ell (w^{(t)})$ is a sum of $n$ rank-$1$ matrices, and we have found $n$ orthogonal eigenvalues. Hence we know $\lambda_{\max}(\nabla^2\ell (w^{(t)})) = \max_{1\leq i\leq n}\frac{3z_i^{(t)} + 2a_i}{\eta}$. This completes the proof.
\end{proof}


\begin{proof}[Proof of Theorem \ref{thm:dnnexample}]
    Define
    \begin{align}
        \mathbf{A}^{(t)} = \frac{2\eta}{\sqrt{m}dn}\sum_{j=1}^{n}e_j^{(t)}X_jX_j^{\top}.
    \end{align}
    Note that we have
    \begin{align}\label{eq: grad_ell_U}
        \nabla \ell_j^{(t)}(\mathbf{U}^{(t)}) = \left(\frac{1}{\sqrt{m}d}\sum_{i=1}^{m}(X_j^{\top}u_i^{(t)})^2 - y_j\right)\left(\frac{2}{\sqrt{m}d}X_jX_j^{\top}\mathbf{U}^{(t)}\right) = \frac{2}{\sqrt{m}d}e_j^{(t)}X_jX_j^{\top}\mathbf{U}^{(t)}.
    \end{align}
    This implies that the gradient descent update takes the form
    \begin{align}
        \mathbf{U}^{(t+1)} = \mathbf{U}^{(t)} - \frac{\eta}{n}\sum_{j=1}^{n}\nabla \ell_j^{(t)}(\mathbf{U}^{(t)}) =\mathbf{U}^{(t)} - \frac{2\eta}{\sqrt{m}dn}\sum_{j=1}^{n}e_j^{(t)}X_jX_j^{\top}\mathbf{U}^{(t)} = \left(I - \mathbf{A}^{(t)}\right)\mathbf{U}^{(t)}.
    \end{align}
    Also we have
    \begin{align}
        e_j^{(t+1)} - e_j^{(t)} = &\frac{1}{\sqrt{m}d}\sum_{i=1}^{m}\left((X_j^{\top}u_i^{(t+1)})^2 - (X_j^{\top}u_i^{(t)})^2\right) \\
        = &\frac{1}{\sqrt{m}d}X_j^{\top}\left(\mathbf{U}^{(t+1)}(\mathbf{U}^{(t+1)})^{\top} -  \mathbf{U}^{(t)}(\mathbf{U}^{(t)})^{\top}\right)X_j
    \end{align}
    and 
    \begin{align}
        \mathbf{U}^{(t+1)}(\mathbf{U}^{(t+1)})^{\top} = \left(I - \frac{2\eta}{\sqrt{m}dn}\sum_{j=1}^{n}e_j^{(t)}X_jX_j^{\top}\right)\mathbf{U}^{(t)}(\mathbf{U}^{(t)})^{\top}\left(I - \frac{2\eta}{\sqrt{m}dn}\sum_{j=1}^{n}e_j^{(t)}X_jX_j^{\top}\right).
    \end{align}
    Hence we know
    \begin{align}
        &e_j^{(t+1)} - e_j^{(t)} \\
        = &\frac{1}{\sqrt{m}d}\left(X_j^{\top}\mathbf{A}^{(t)}\mathbf{A}^{(t)}(\mathbf{A}^{(t)})^{\top}\mathbf{A}^{(t)}X_j - 2X_j^{\top}\mathbf{A}^{(t)}\mathbf{U}^{(t)}(\mathbf{U}^{(t)})^{\top}X_j\right) \\
        =& \frac{1}{\sqrt{m}d}\left(\frac{4\eta^2}{md^2n^2}(e_j^{(t)})^2\norm{X_j}^4X_j^{\top}\mathbf{U}^{(t)}(\mathbf{U}^{(t)})^{\top}X_j - \frac{4\eta}{\sqrt{m}dn}e_j^{(t)}\norm{X_j}^2X_j^{\top}\mathbf{U}^{(t)}(\mathbf{U}^{(t)})^{\top}X_j\right) \\
        = &\left(\frac{4\eta^2\norm{X_j}^4}{md^2n^2}(e_j^{(t)})^2 - \frac{4\eta\norm{X_j}^2}{\sqrt{m}dn}e_j^{(t)}\right)\left(e_j^{(t)} + y_j\right),
    \end{align}
    where the second equality uses $\mathbf{X}\mathbf{X}^{\top} = \text{diag}(\norm{X_1}^2, ..., \norm{X_n}^2)$. By definition of $z_i^{(t)}$ and $a_i$ we know
    \begin{align}
        z_i^{(t+1)} = f_{a_i}(z_i^{(t)}).
    \end{align}
    Hence we know the training dynamics of this model can be captured by the cubic map as well. 
\end{proof}

\section{Auxiliary results}

\begin{theorem}[Theorem 1 in \cite{li1975period}]\label{thm: period3}
    Let $I$ be a compact interval and let $f: I\rightarrow I$ be continuous. Assume there is a point $a\in I$ for which the points $b = f(a),\ c = f^{(2)}(a)$ and $d=f^{(3)}(a)$ satisfy 
    \begin{align*}
        d\leq a<b<c\ (\text{or } d\geq a > b> c).
    \end{align*}   
    Then $f$ is Li-Yorke chaotic.
\end{theorem}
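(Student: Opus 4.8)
The plan is to reproduce the classical argument of \cite{li1975period}, organized entirely around covering relations between two subintervals. First I would reduce to the case $d\le a<b<c$; the reversed case follows by the symmetric argument (equivalently, by conjugating $f$ with an orientation-reversing affine self-map of $I$). Set $K_0=[a,b]$ and $K_1=[b,c]$. Using $f(a)=b$, $f(b)=c$, $f(c)=d$ together with the intermediate value theorem,
\begin{align*}
    f(K_0)\supseteq[b,c]=K_1,\qquad f(K_1)\supseteq[d,c]\supseteq[a,c]=K_0\cup K_1,
\end{align*}
so, writing $A\to B$ for the relation $B\subseteq f(A)$, we have $K_0\to K_1$, $K_1\to K_0$, and $K_1\to K_1$. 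The only facts about continuous interval maps I will invoke are the two standard lemmas (proved in a line via the IVT and a nested-intervals argument, e.g.\ as in \cite{devaney1989introduction}): (L1) if $A\to A$ for a compact interval $A$ then $f$ has a fixed point in $A$; and (L2) if $A_0\to A_1\to\cdots\to A_{n-1}\to A_0$ is a cycle of compact intervals then there is $p\in A_0$ with $f^{(i)}(p)\in A_i$ for $0\le i\le n-1$ and $f^{(n)}(p)=p$, and moreover an infinite chain $A_0\to A_1\to\cdots$ admits a point whose orbit stays in the $A_i$.

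For the first clause of Definition~\ref{sec:liyorke} (a periodic point of prime period $k$ for every $k$): the case $k=1$ is immediate from $K_1\to K_1$ and (L1). For $k\ge 2$, apply (L2) to the length-$k$ cycle $\underbrace{K_1\to\cdots\to K_1}_{k-1}\to K_0\to K_1$, obtaining $p$ with $f^{(k)}(p)=p$, $f^{(i)}(p)\in K_1$ for $0\le i\le k-2$, and $f^{(k-1)}(p)\in K_0$. To pin the prime period to $k$: if it were $m\mid k$ with $m<k$ then $m\le k/2\le k-1$, so $(k-1)\bmod m\in\{0,\dots,k-2\}$ and hence $f^{(k-1)}(p)=f^{((k-1)\bmod m)}(p)\in K_0\cap K_1=\{b\}$; then $p=f^{(k)}(p)=f(b)=c$, which for $k\ge 3$ forces $f(p)=d<b$, contradicting $f(p)\in K_1$, while for $k=2$ the only possibility $m=1$ gives $p=b$, contradicting $f(b)=c\neq b$.

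The substantive part, and the step I expect to be the main obstacle, is producing the uncountable scrambled set $S$. Here I would run symbolic dynamics on admissible $\{0,1\}$-itineraries — those containing no two consecutive $0$'s — each of which is realized by (L2)/nested intervals as a nonempty compact set of starting points whose orbit lies in $K_{\omega_t}$ at time $t$. I would then select an uncountable family $\{\omega^{(\alpha)}:\alpha\in(0,1)\}$ of such itineraries, each made of isolated $0$'s separated by blocks of $1$'s whose lengths increase to infinity, with the block-length pattern encoding $\alpha$, arranged so that: (i) any two $\omega^{(\alpha)},\omega^{(\beta)}$ agree on arbitrarily long $1$-blocks at aligned positions, confining the two orbits to a common nested interval and yielding $\liminf_t|f^{(t)}(x_\alpha)-f^{(t)}(x_\beta)|=0$; while (ii) they disagree at infinitely many times, one itinerary showing a $0$ (orbit in $K_0$) where the other is in the interior of a long $1$-block (orbit in a subinterval of $K_1$ bounded away from $b$), yielding $\limsup_t|f^{(t)}(x_\alpha)-f^{(t)}(x_\beta)|>0$. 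The one genuinely delicate point is that ``deep inside $K_1$'' must not collapse onto the overlap point $b\in K_0\cap K_1$; this is handled as in \cite{li1975period} by tracking which side of $b$ the relevant nested intervals sit on and using that $f(K_1)\supseteq[a,c]$ has $b$ in its interior. Finally I would verify: $\alpha\mapsto x_\alpha$ is injective after deleting a countable exceptional set, so $S$ is uncountable; every $x_\alpha$ is aperiodic, since its itinerary is, so $S$ contains no periodic point; and for any periodic $q$ and any $\alpha$, $\limsup_t|f^{(t)}(x_\alpha)-f^{(t)}(q)|>0$, because $q$ has a finite orbit while the orbit of $x_\alpha$ re-enters both $K_0$ and the interior of $K_1$ in a non-eventually-periodic pattern and so cannot shadow a finite set. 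Combining this construction with the periodic-point count gives Li-Yorke chaos in the sense of Definition~\ref{sec:liyorke}.
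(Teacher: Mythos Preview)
The paper does not give its own proof of this statement: Theorem~\ref{thm: period3} is stated in the ``Auxiliary results'' appendix as a direct citation of \cite{li1975period} and is then invoked as a black box in the proof of Lemma~\ref{lem: bifurcation_chaos}. So there is nothing in the paper to compare your argument against.

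That said, your proposal is a faithful outline of the original Li--Yorke proof: the covering relations $K_0\to K_1$, $K_1\to K_0$, $K_1\to K_1$ with $K_0=[a,b]$, $K_1=[b,c]$, the use of the itinerary lemma (your (L2)) to produce periodic points of every prime period via the cycle $K_1\to\cdots\to K_1\to K_0$, and the symbolic-dynamics construction of the scrambled set from aperiodic $\{0,1\}$-itineraries with no two consecutive $0$'s are all exactly as in \cite{li1975period}. Your prime-period verification is correct, and you have correctly flagged the one genuinely delicate point in the scrambled-set construction, namely preventing the ``interior of $K_1$'' subintervals from collapsing onto the overlap point $b$. For the purposes of this paper, simply citing \cite{li1975period} (or the account in \cite{devaney1989introduction}) would suffice, which is what the authors do.
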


\begin{theorem}[Theorem 2.7 in \cite{singer1978stable}]\label{thm: singer_thm}
    Let $I$ be a compact interval and let $f: I\rightarrow I$ be a three times continuously differentiable function. If the Schwarzian derivative of $f$ satisfies
    \begin{align}
        \mathsf{S}f(x) = \frac{f'''(x)}{f'(x)} - \frac{3}{2}\left(\frac{f''(x)}{f'(x)}\right)^2 < 0\ \text{ for all } x\in I \text{ with } f'(x)\neq 0.
    \end{align}
    Then the stable set of every asymptotically stable orbit of $f$ contains a critical point of $f$.
\end{theorem}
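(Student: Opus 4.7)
The plan is to prove Singer's theorem by combining three classical ingredients: (i) the composition-invariance of the sign of the Schwarzian derivative, (ii) a ``minimum principle'' forbidding $|f'|$ from having positive local minima at non-critical points, and (iii) a topological analysis of the immediate basin of attraction. I would first reduce to the case of a single fixed point, then derive a contradiction from the hypothesis that the basin is critical-point-free.

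For ingredient (i), I would verify the cocycle identity $\mathsf{S}(f\circ g)(x) = \mathsf{S}f(g(x))(g'(x))^2 + \mathsf{S}g(x)$ and deduce by induction that $\mathsf{S}f^{(n)} < 0$ wherever $(f^{(n)})' \neq 0$. Since a critical point of $f^{(n)}$ is the preimage under $f^{(k)}$ for some $0 \le k \le n-1$ of a critical point of $f$, locating a critical point of $f^{(n)}$ in the stable set of a fixed point of $f^{(n)}$ automatically yields a critical point of $f$ in the stable set of the corresponding period-$n$ orbit. So it suffices to treat the case where the orbit is a single asymptotically stable fixed point $p$ of $f$.

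For ingredient (ii), I would set $\phi(x) = |f'(x)|^{-1/2}$ on an interval where $f' \neq 0$. A direct computation gives $\phi''/\phi = -\tfrac{1}{2}\,\mathsf{S}f$, so the hypothesis $\mathsf{S}f < 0$ makes $\phi$ strictly convex. Strict convexity precludes interior local maxima of $\phi$, which in turn precludes interior positive local minima of $|f'|$ at non-critical points. For ingredient (iii), let $B = (\alpha, \beta)$ denote the connected component of the immediate basin of attraction containing $p$, and suppose for contradiction that $B$ contains no critical point of $f$. Then $f|_B$ is a local diffeomorphism; passing to $f^{(2)}$ if necessary to handle orientation reversal, the endpoints $\alpha, \beta$ either lie in $\partial I$ or constitute a fixed point or period-$2$ cycle mapping into $\{\alpha, \beta\}$. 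Because $\alpha, \beta$ lie outside the basin of $p$, any such orbital endpoint must satisfy $|f'(\alpha)|, |f'(\beta)| \geq 1$ (otherwise it would itself attract a neighborhood, contradicting that it is not in the basin of $p$). Combined with $|f'(p)| \leq 1$ from asymptotic stability, this forces $|f'|$ to attain a positive interior local minimum in $B$ at a non-critical point, contradicting the minimum principle. Hence $B$ must contain a critical point, which lies in the stable set of $p$ by construction.

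The main obstacle is the boundary analysis in the last step: one must carefully enumerate the cases (endpoints in $\partial I$, fixed points of $f$, or a period-$2$ cycle of $f$) and justify the lower bound on the derivative at each orbital endpoint, all while keeping track of orientation. The orientation-reversing case requires passing from $f$ to $f^{(2)}$ and checking that both the Schwarzian hypothesis and the minimum principle transfer to $f^{(2)}$ --- which they do thanks to ingredient (i) --- so that the contradiction argument can be carried out uniformly.
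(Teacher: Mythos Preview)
Your proposal is a correct outline of the classical proof of Singer's theorem, following essentially the same route as Singer's original 1978 argument: the chain rule for the Schwarzian, the minimum principle via the convexity of $|f'|^{-1/2}$, and the endpoint analysis of the immediate basin. The sketch is sound, and you have correctly identified the one genuinely delicate step (the case-by-case analysis of the basin endpoints and the passage to $f^{(2)}$ in the orientation-reversing case).

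However, you should be aware that the paper does \emph{not} prove this statement at all. It is listed in the appendix under ``Auxiliary results'' and is simply quoted verbatim as Theorem~2.7 of \cite{singer1978stable}, to be used as a black box in the proofs of Lemma~\ref{lem: bifurcation_chaos}(iii) and Lemma~\ref{lem: divergence}. So there is no ``paper's own proof'' to compare against: your proposal supplies an argument where the paper deliberately defers to the literature. If the intent of the exercise was to reproduce what the paper does, the appropriate response is a one-line citation rather than a proof; if the intent was to actually establish the result, your outline is the right one and only needs the boundary cases fleshed out.
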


\begin{theorem}[Theorem B in \cite{nusse1987asymptotically}]\label{thm: as_converge}
    Let $I$ be an interval and let $f: I\rightarrow I$ be a three times continuously differentiable function having at least one aperiodic point on $I$ and satisfying:
    \begin{itemize}[noitemsep,leftmargin=0.15in]
        \item (i) $f$ has a nonpositive Schwarzian derivative, i.e.,
        \begin{align}
            \mathsf{S}f(x) = \frac{f'''(x)}{f'(x)} - \frac{3}{2}\left(\frac{f''(x)}{f'(x)}\right)^2\leq 0\ \text{ for all } x\in I \text{ with } f'(x)\neq 0.
        \end{align}
        \item (ii) The set of points, whose orbits do not converge to an (or the) absorbing boundary point(s) of $I$ for $f$ is a nonempty compact set.
        \item (iii) The orbit of each critical point for $f$ converges to an asymptotically stable periodic orbit of $f$ or to an (or the) absorbing boundary point(s) of $I$ for $f$.
        \item (iv) The fixed points of $f^{(2)}$ are isolated.
    \end{itemize}
    Then we have 
    \begin{itemize}[noitemsep,leftmargin=0.15in]
        \item (1) The set of points whose orbits do not converge to an asymptotically stable periodic orbit of $f$ or to an (or the) absorbing boundary point(s) of $I$ for $f$ has Lebesgue measure $0$;
        \item (2) There exists a positive integer $p$ such that almost every point $x$ in $I$ is asymptotically periodic with $f^{(p)}(x) = x$, provided that $f(I)$ is bounded.
    \end{itemize}
\end{theorem}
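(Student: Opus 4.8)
This statement is quoted verbatim from \cite{nusse1987asymptotically} (their Theorem B), so a ``proof'' here means reconstructing the argument of that paper, which lives entirely inside one-dimensional real dynamics and rests on two pillars: Singer's theorem on basins of asymptotically stable orbits (Theorem \ref{thm: singer_thm} above) and the Koebe-type distortion estimates that follow from a nonpositive Schwarzian derivative. First I would combine Singer's theorem with hypothesis (iii): the immediate basin of every asymptotically stable periodic orbit of $f$ contains a critical point, and since a $C^3$ map on an interval has only finitely many critical points, $f$ has at most finitely many asymptotically stable periodic orbits, say with periods $p_1,\dots,p_k$. Hypotheses (ii) and (iv) are exactly what is needed to make this finiteness, and the measure argument below, run without degeneracies at the boundary or at the fixed points of $f^{(2)}$.

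The core of the proof is to show that $\Lambda$, the set of points whose orbit converges neither to one of these finitely many stable orbits nor to an absorbing boundary point, is Lebesgue-null. Two inputs feed this. (a) \emph{No wandering intervals}: a map with nonpositive Schwarzian and finitely many critical points has no wandering interval (Guckenheimer; de Melo--van Strien), proved via the cross-ratio inequalities that $\mathsf{S}f\le 0$ supplies; hence $\Lambda$ contains no interval. (b) \emph{Bounded distortion along good pull-backs}: if $f^{(n)}$ is monotone on an interval $\widehat J$ strictly containing $J$ and $\widehat J$ avoids the critical points, then the Koebe principle (again a consequence of $\mathsf{S}f\le 0$) bounds the nonlinearity of $f^{(n)}|_J$ uniformly.

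With (a) and (b) I would argue by contradiction. If $|\Lambda|>0$, pick a Lebesgue density point $x_0$ of $\Lambda$; by hypothesis (iii), any point whose orbit eventually enters a small neighborhood of a critical orbit is swept into a stable orbit or to the boundary, hence lies outside $\Lambda$, so the orbit of $x_0$ returns infinitely often to a region bounded away from the critical points. Along such returns one extracts intervals shrinking to $x_0$ that iterates of $f$ carry, with uniformly bounded distortion by (b), onto intervals of definite length; transporting the full density of $\Lambda$ at $x_0$ forward would then make $\Lambda$ dense in a fixed interval, contradicting (a). This yields conclusion (1). Conclusion (2) is then bookkeeping: almost every $x$ is asymptotically periodic toward one of the $k$ stable orbits, and with $p=\mathrm{lcm}(p_1,\dots,p_k)$ every such orbit is fixed pointwise by $f^{(p)}$, so when $f(I)$ is bounded almost every $x$ meets the asymptotic-periodicity condition with the single exponent $p$.

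The genuine obstacle is part (b) together with the density-point transport: making ``bounded distortion along pull-backs that avoid critical points'' quantitative requires the full cross-ratio/Koebe machinery for $C^3$ maps with $\mathsf{S}f\le 0$, plus careful treatment of orbits that pass close to---but are not captured by---the critical points (this is where hypothesis (iii) is genuinely used). Input (a) is itself a deep theorem; a self-contained account would either cite it or spend most of its length re-deriving it from the Schwarzian hypothesis. The finiteness of stable orbits and the final $\mathrm{lcm}$ step are, by comparison, routine.
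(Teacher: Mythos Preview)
The paper does not prove this statement at all: Theorem~\ref{thm: as_converge} sits in the ``Auxiliary results'' appendix as a verbatim citation of Theorem~B in \cite{nusse1987asymptotically}, with no proof, no sketch, and no discussion of the argument. It is invoked as a black box in the proofs of Lemmas~\ref{lem: bifurcation_chaos} and~\ref{lem: divergence}. So there is nothing in the paper to compare your proposal against; you have correctly identified that the only ``proof'' possible here is a reconstruction of Nusse's original argument, and you have sketched such a reconstruction.

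As a standalone outline of the one-dimensional-dynamics machinery behind Nusse's result, your sketch is broadly on target (Singer plus no-wandering-intervals plus Koebe distortion plus a density-point argument is indeed the shape of such proofs). One slip worth flagging: your sentence ``since a $C^3$ map on an interval has only finitely many critical points'' is false as stated---a $C^3$ (even $C^\infty$) map can have infinitely many critical points. Finiteness of critical points is either an additional hypothesis or must come from elsewhere; in the present paper this is never an issue because the only $f$ to which the theorem is applied is the cubic $f_a$, which has exactly two critical points (and this is also why the paper immediately follows Theorem~\ref{thm: as_converge} with the polynomial corollary, Theorem~\ref{thm: as_converge_cor}). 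If you want your reconstruction to stand on its own for the general $C^3$ statement, you would need to revisit that step.
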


\begin{theorem}[Corollary in \cite{nusse1987asymptotically}]\label{thm: as_converge_cor}
    Assume that $f: \R \rightarrow \R $ is a polynomial function having at least one aperiodic point and satisfying the following conditions:
    \begin{itemize}[noitemsep,leftmargin=0.15in]
        \item (i) The orbit of each critical point of $f$ converges to an asymptotically stable periodic orbit of $f$ or to an (or the) absorbing boundary point(s) for f;
        \item (ii) Each critical point of $f$ is real. 
    \end{itemize}
    Then $f$ satisfies the assumptions (i)-(iv) of Theorem \ref{thm: as_converge}.
\end{theorem}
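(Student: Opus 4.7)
The plan is to verify each of the four assumptions (i)–(iv) of Theorem~\ref{thm: as_converge} for the polynomial $f$ separately. Condition (iii) of Theorem~\ref{thm: as_converge} is identical to hypothesis (i) of the Corollary, so it holds immediately. Condition (iv) is also straightforward: $f^{(2)} - \mathrm{id}$ is itself a polynomial, hence has only finitely many real roots, so the fixed points of $f^{(2)}$ form a finite and therefore isolated set.

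For condition (i), I would exploit the factorization of $f'$. Since every critical point of $f$ is real by hypothesis (ii) of the Corollary, we may write $f'(x) = c\prod_{i=1}^{k}(x - \alpha_i)^{m_i}$ with all $\alpha_i\in\R$. Using the logarithmic derivative, $f''/f' = \sum_i m_i/(x - \alpha_i)$ at non-critical points, and one computes $f'''/f' = (f''/f')^2 + (f''/f')'$. Plugging into the Schwarzian and simplifying gives the explicit expression
\begin{align}
    \mathsf{S}f(x) \;=\; -\sum_{i} \frac{m_i}{(x-\alpha_i)^2} \;-\; \frac{1}{2}\left(\sum_i \frac{m_i}{x - \alpha_i}\right)^2,
\end{align}
which is manifestly nonpositive whenever $f'(x)\neq 0$, verifying (i). This extends the distinct-roots case recorded in Lemma~\ref{lem: poly_schwarzian} without new ideas.

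Condition (ii) is where the real content lies. Let $B\subseteq\R$ denote the set of points whose orbits do not converge to an absorbing boundary point of $\R$ for $f$. Since $f$ is a non-constant polynomial, $|f(x)|/|x|\to\infty$ as $|x|\to\infty$, so one can fix $M>0$ such that $|f(x)|\geq 2|x|$ whenever $|x|\geq M$; iterating shows any orbit that enters $\{|x|>M\}$ diverges monotonically to $\pm\infty$, identifying the absorbing boundary as a subset of $\{+\infty,-\infty\}$. This gives $B\subseteq[-M,M]$, so $B$ is bounded. The complementary divergent set is open because the escape condition "$|f^{(k)}(x)|>M$ for some $k$" is an open condition in $x$ by continuity of each iterate $f^{(k)}$; hence $B$ is closed, and by boundedness $B$ is compact. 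Nonemptiness follows from the assumed existence of an aperiodic point, which by the definition recalled in Section~\ref{sec: cubic_map} has a bounded orbit and therefore lies in $B$.

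The main subtlety is condition (ii): one must be careful about which of $\pm\infty$ qualifies as an absorbing boundary (this depends on the degree parity and leading coefficient of $f$, and in even-degree leading-coefficient-positive cases only $+\infty$ is absorbing in the one-period sense) and to argue that the escape estimate $|f(x)|\geq 2|x|$ really forces convergence to the appropriate boundary symbol in the sense of the definition in Section~\ref{sec: cubic_map}. The remaining steps are bookkeeping: the Schwarzian formula above is derived only off the critical set, but that is exactly what assumption (i) of Theorem~\ref{thm: as_converge} requires, so no separate treatment at the $\alpha_i$ is needed. Beyond these details, no substantive obstacle is expected, since the Corollary is essentially a packaging of well-known polynomial-dynamics facts into a form convenient for the preceding arguments.
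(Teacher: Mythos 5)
The paper does not actually prove this statement: it is imported verbatim as an auxiliary result, attributed to the Corollary in \cite{nusse1987asymptotically}, and used as a black box in the proofs of Lemmas~\ref{lem: bifurcation_chaos} and~\ref{lem: divergence}. So there is no in-paper argument to compare yours against; what you have written is a self-contained reconstruction, and it is essentially correct. Condition (iii) of Theorem~\ref{thm: as_converge} is indeed verbatim hypothesis (i) of the corollary. Your Schwarzian computation $\mathsf{S}f = u' - \tfrac12 u^2$ with $u = f''/f' = \sum_i m_i/(x-\alpha_i)$ is right and correctly extends Lemma~\ref{lem: poly_schwarzian} to repeated real roots (only nonpositivity is needed for Theorem~\ref{thm: as_converge}, so this is the right level of generality; the degenerate case $f$ linear gives $\mathsf{S}f\equiv 0$, which is also fine). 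The compactness argument for condition (ii) — escape estimate $|f(x)|\ge 2|x|$ outside a compact set, openness of the escaping set, nonemptiness from the bounded orbit of the assumed aperiodic point — is the standard and correct route, and you rightly flag that the only care needed is matching the divergent orbits to the period-$1$ or period-$2$ notion of absorbing boundary point from Section~\ref{sec: cubic_map}.

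Two small points you should make explicit. First, for condition (iv) you assert that $f^{(2)}-\mathrm{id}$ has finitely many roots, which fails if $f^{(2)}\equiv\mathrm{id}$; you should note that this forces $\deg f=1$ (since $\deg f^{(2)}=(\deg f)^2$), and that affine maps have no aperiodic points, so the hypothesis rules this out. Second, the same degree argument is silently used in condition (ii): the estimate $|f(x)|/|x|\to\infty$ requires $\deg f\ge 2$, which again must be extracted from the existence of an aperiodic point. Neither observation is difficult, but both are needed for the proof to be airtight as stated.
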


\section{Experimental investigations }

Here, we provide additional experimental results.

\subsection{Additional experiments for the orthogonal case} \label{sec:additionalorthoplot}
For this section, we follow the same experimental setup as described in Section~\ref{sec:m25maindraft}. Only the hidden-layer width is changed. Specifically, in Figures \ref{fig: 5_index_orthogonal} and \ref{fig: 10_index_orthogonal} we plot the training loss, sharpness of training loss and the trajectory-averaging training in various phases.

\begin{figure}[ht]
    \centering
    \subfigure{\includegraphics[width=0.32\textwidth]{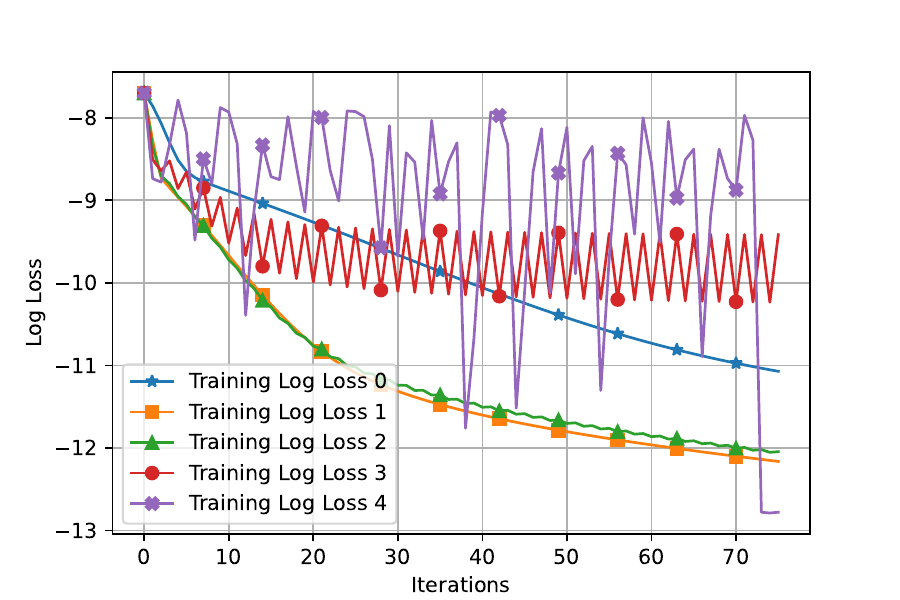}}
    \subfigure{\includegraphics[width=0.32\textwidth]{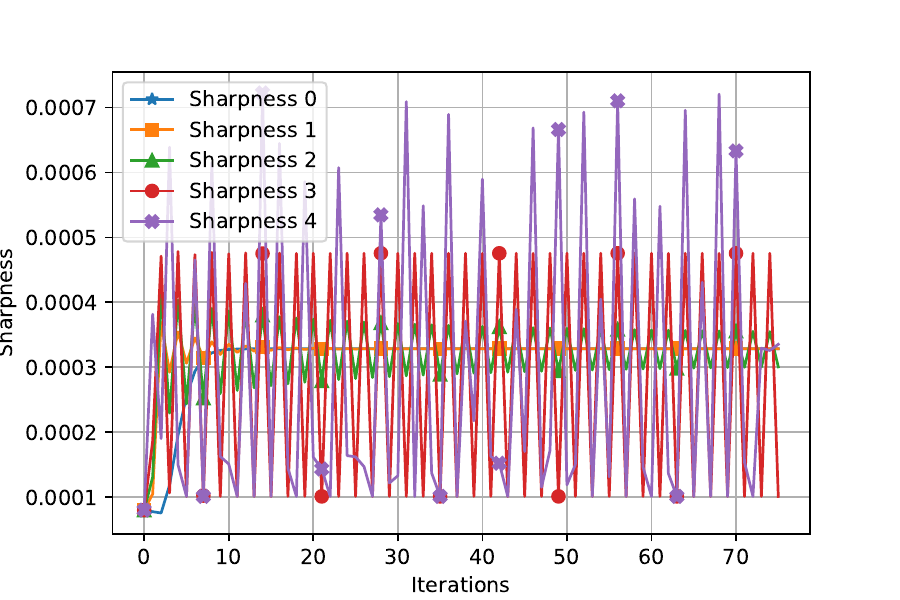}}
    \subfigure{\includegraphics[width=0.32\textwidth]{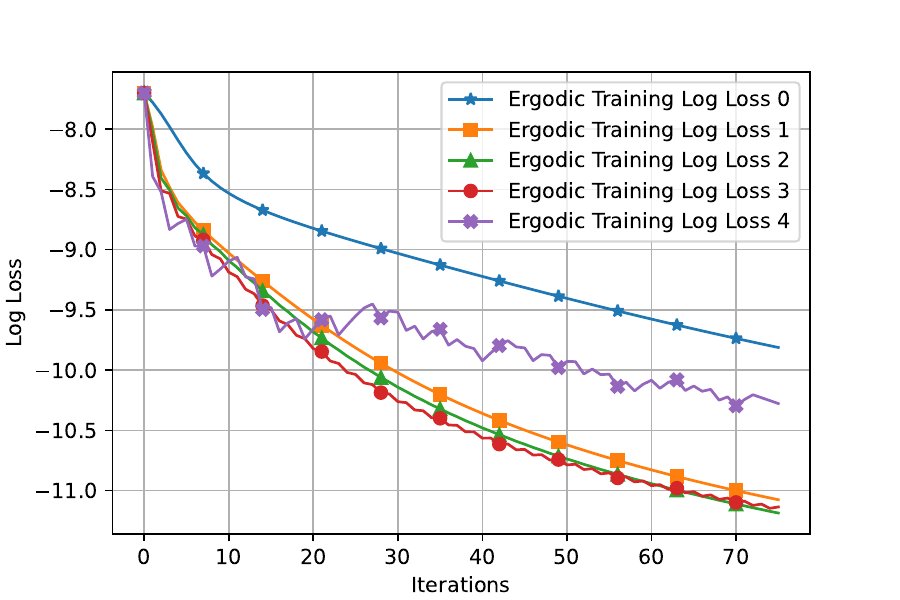}}

    \subfigure{\includegraphics[width=0.32\textwidth]{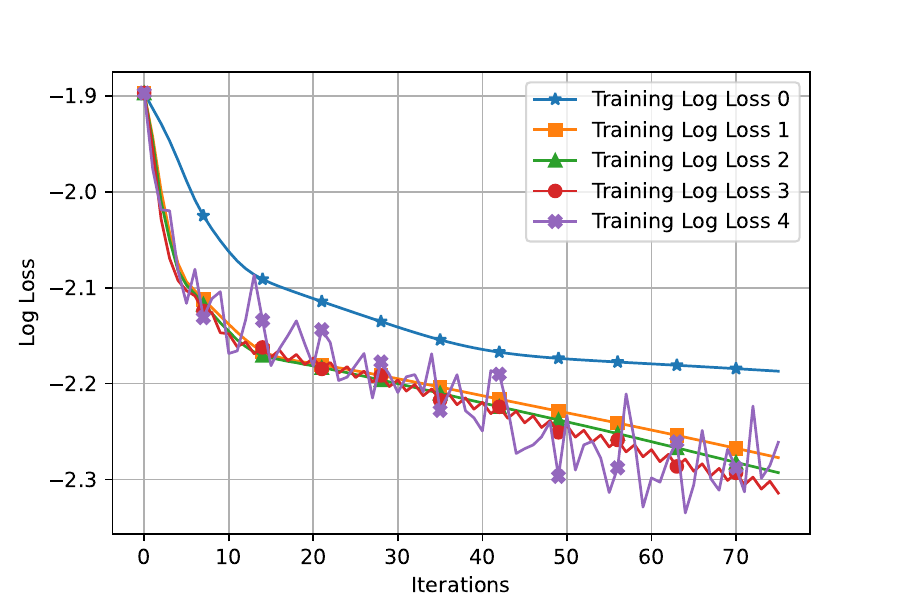}}
    \subfigure{\includegraphics[width=0.32\textwidth]{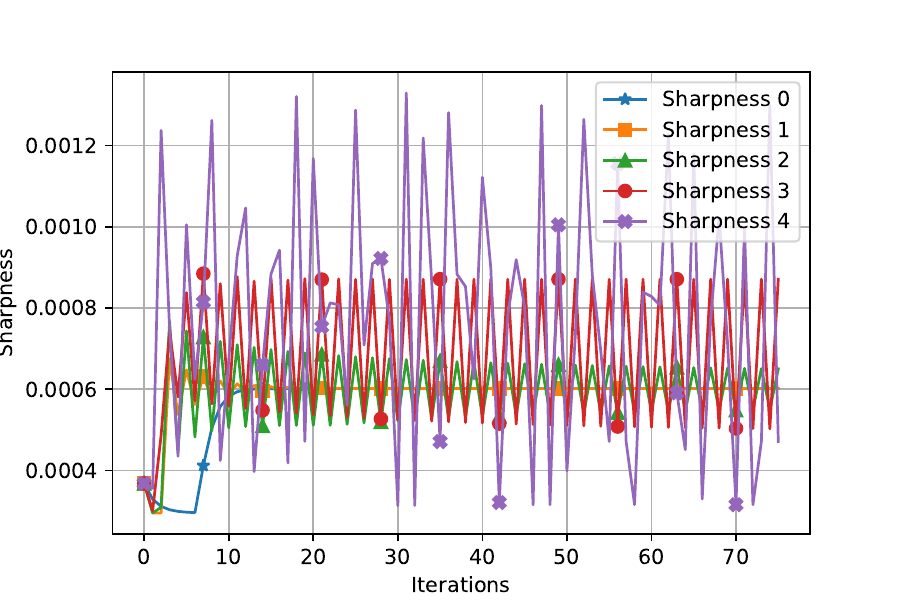}}
    \subfigure{\includegraphics[width=0.32\textwidth]{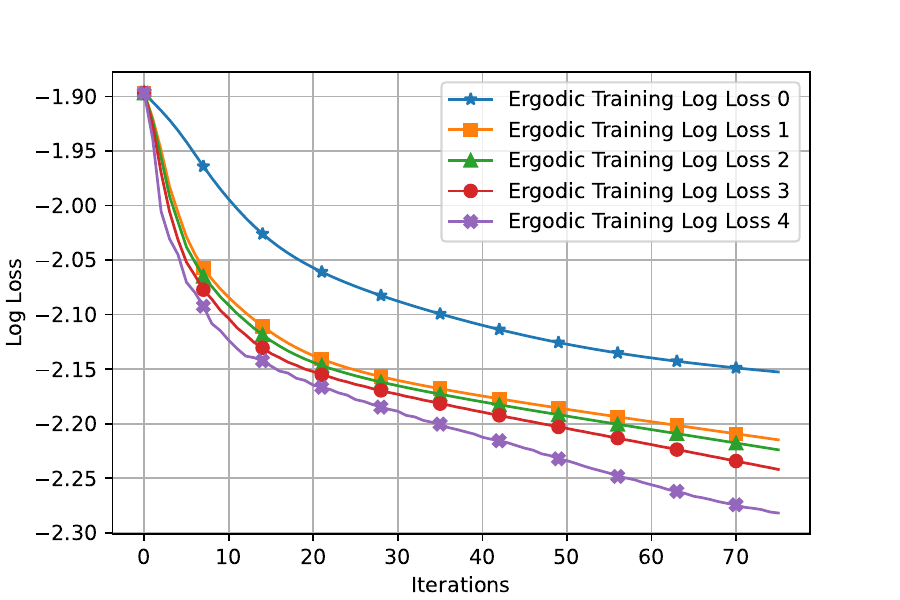}}

    \subfigure{\includegraphics[width=0.32\textwidth]{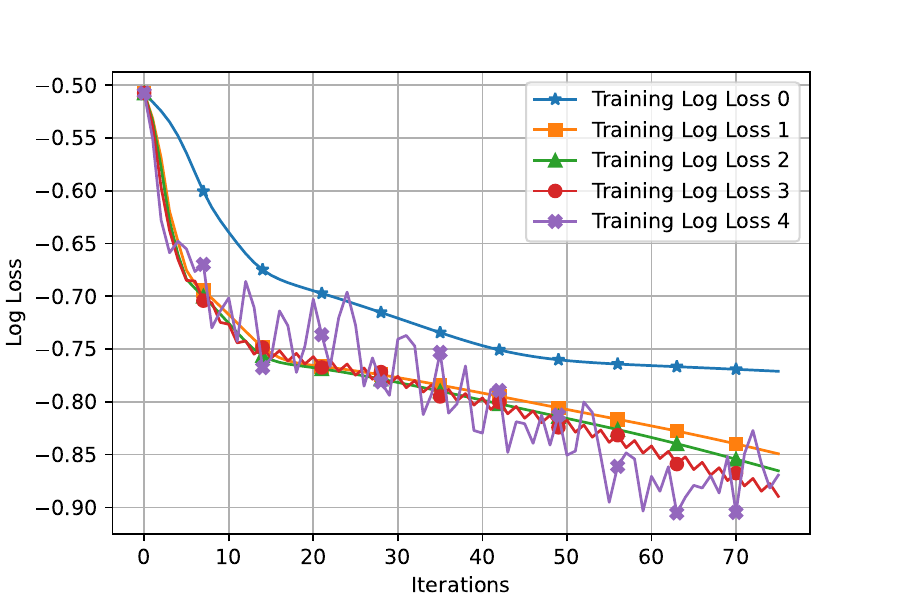}}
    \subfigure{\includegraphics[width=0.32\textwidth]{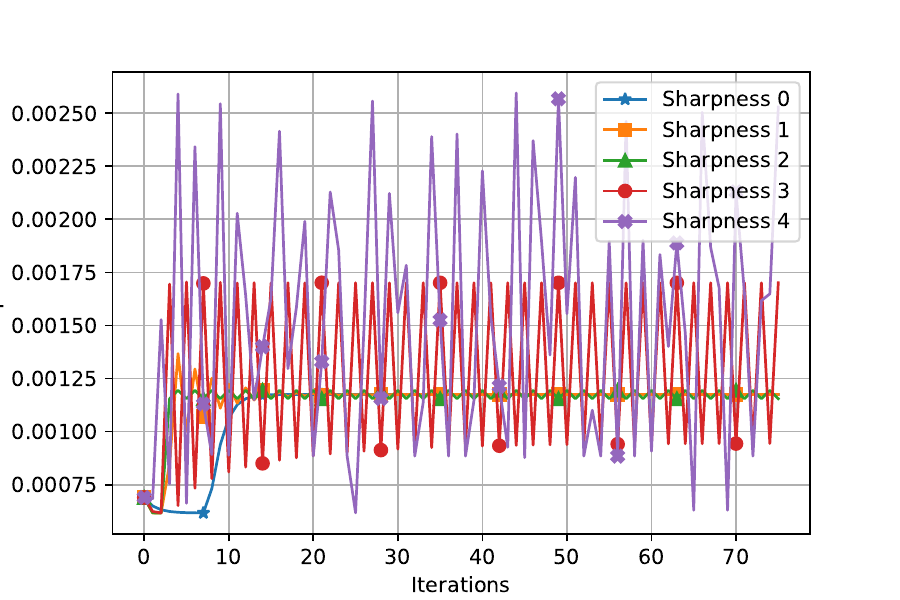}}
    \subfigure{\includegraphics[width=0.32\textwidth]{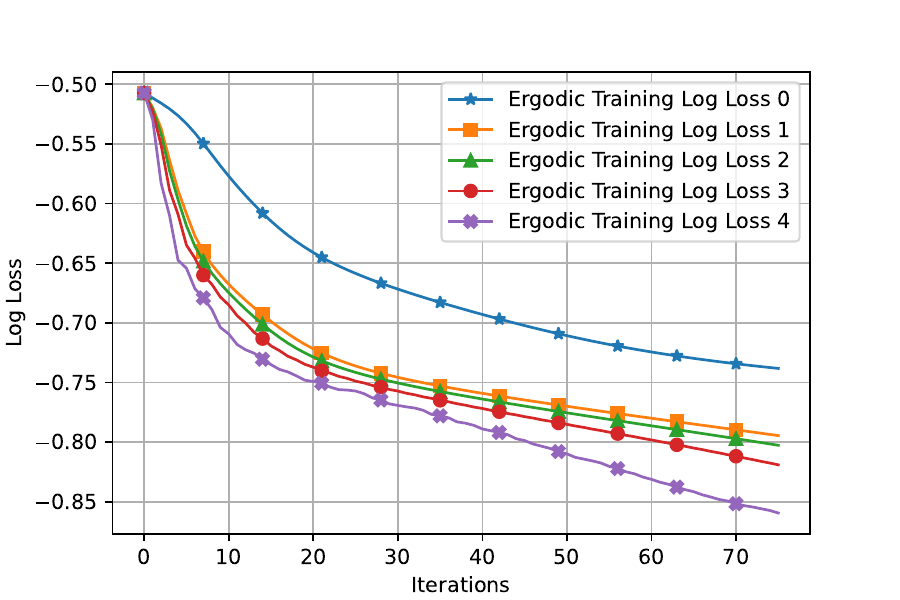}}
    \caption{Hidden-layer width =5, with orthogonal data points. Rows from top to bottom represent different levels of noise -- mean-zero normal distribution with variance $0, 0.25, 1$. The vertical axes are in log scale for the training loss curves. The second column is about the sharpness of the training loss functions.}
    \label{fig: 5_index_orthogonal}
\end{figure}

\begin{figure}[ht]
    \centering
    \subfigure{\includegraphics[width=0.32\textwidth]{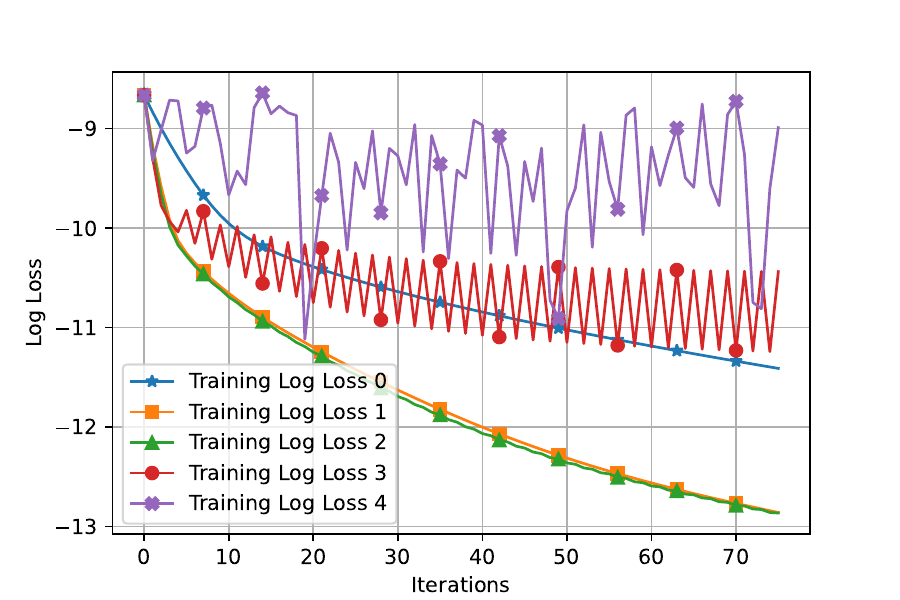}}
    \subfigure{\includegraphics[width=0.32\textwidth]{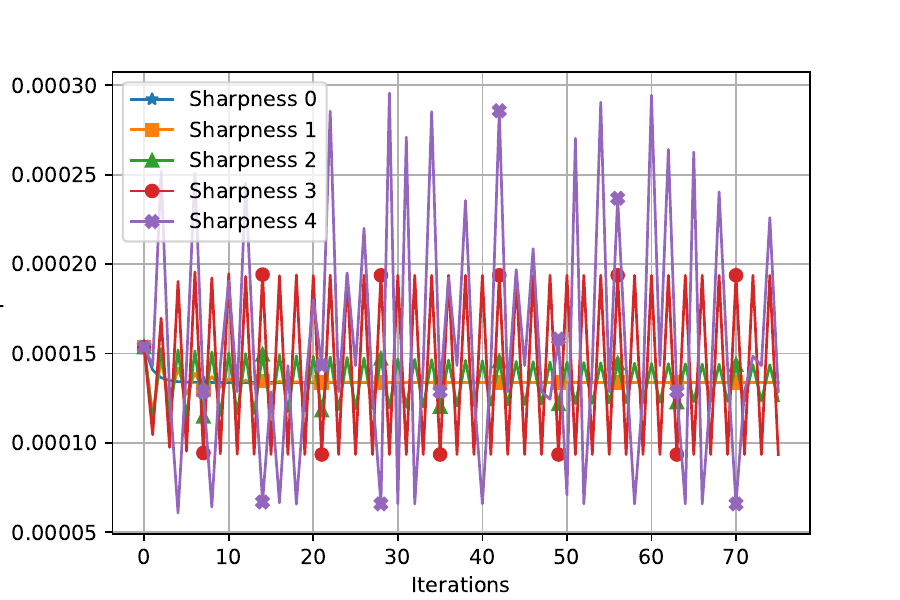}}
    \subfigure{\includegraphics[width=0.32\textwidth]{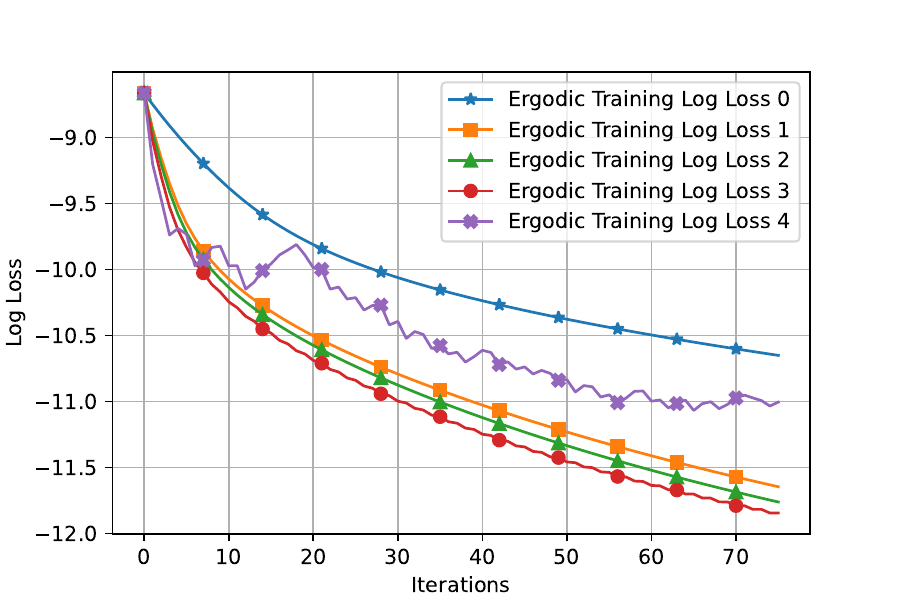}}

    \subfigure{\includegraphics[width=0.32\textwidth]{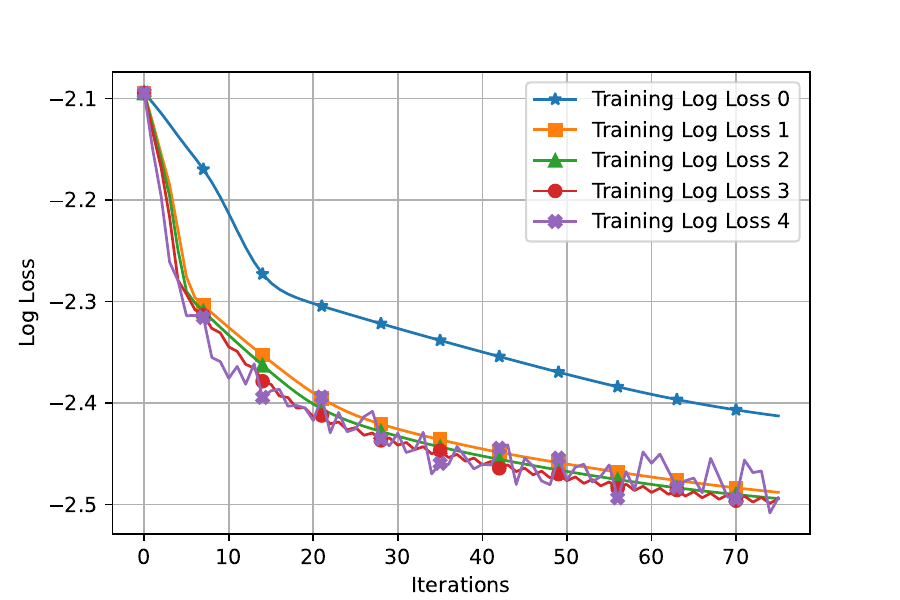}}
    \subfigure{\includegraphics[width=0.32\textwidth]{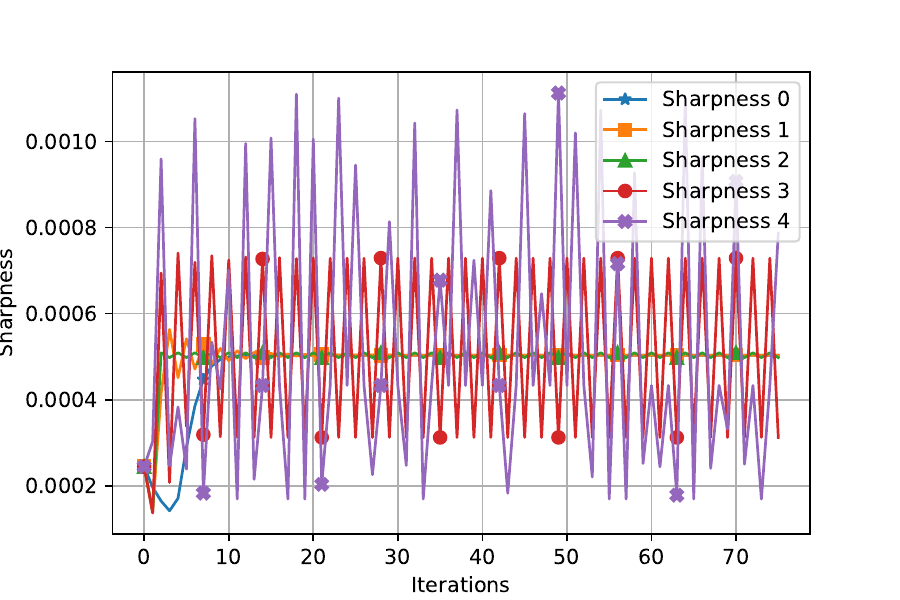}}
    \subfigure{\includegraphics[width=0.32\textwidth]{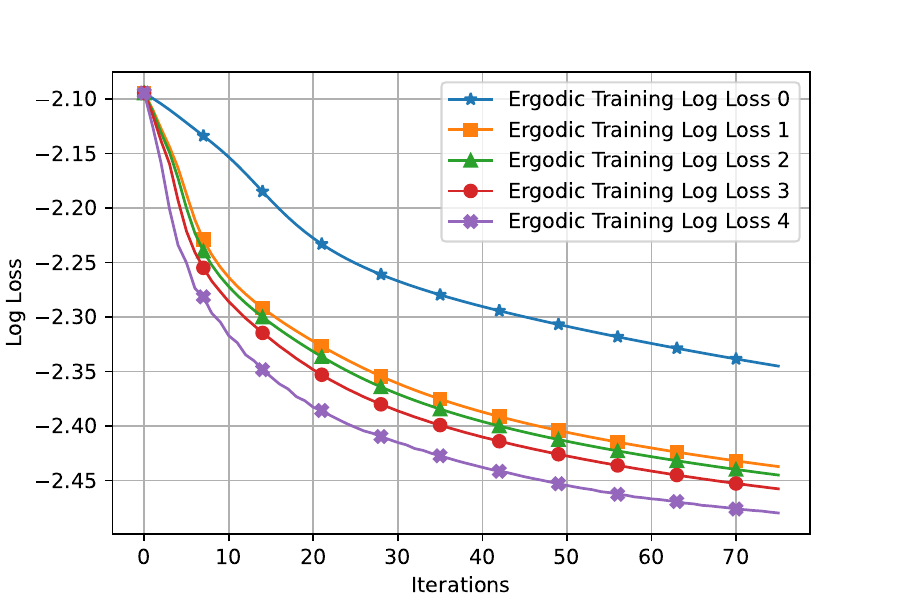}}

    \subfigure{\includegraphics[width=0.32\textwidth]{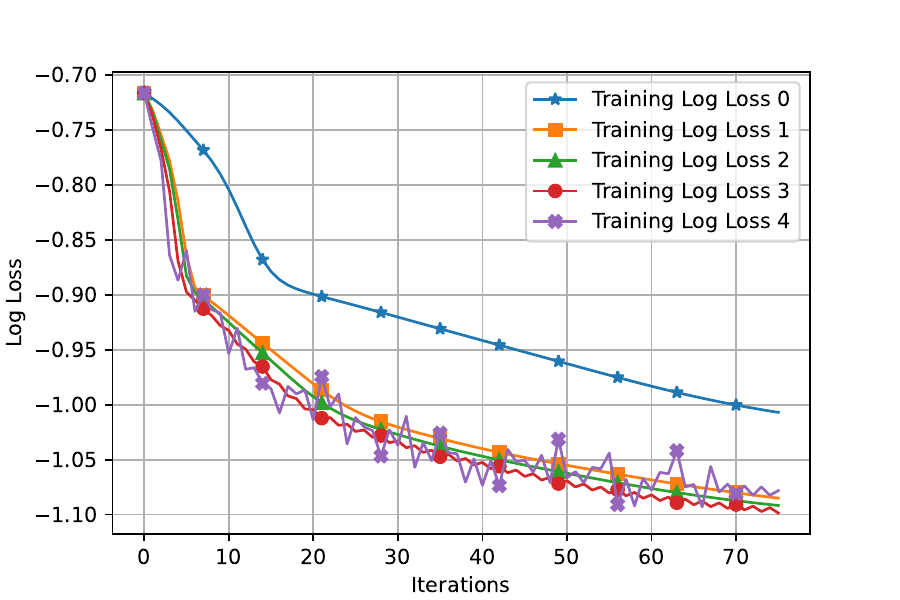}}
    \subfigure{\includegraphics[width=0.32\textwidth]{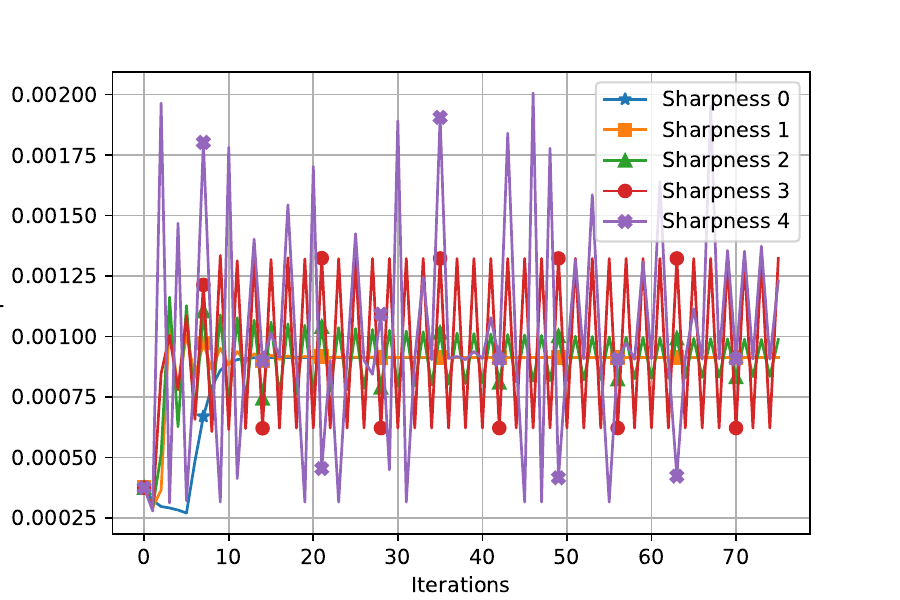}}
    \subfigure{\includegraphics[width=0.32\textwidth]{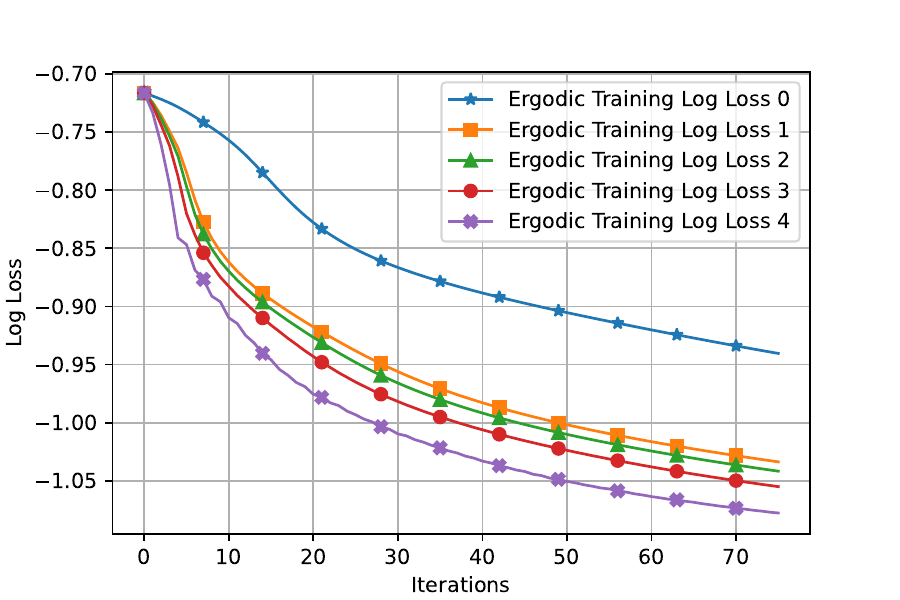}}
    \caption{Hidden-layer width =10, with orthogonal data points. Rows from top to bottom represent different levels of noise -- mean-zero normal distribution with variance $0, 0.25, 1$. The vertical axes are in log scale for the training loss curves. The second column is about the sharpness of the training loss functions.}
    \label{fig: 10_index_orthogonal}
\end{figure}

\clearpage
\subsection{Non-orthogonal data}\label{sec:nonorthoplots}

We next investigate the case when orthogonality condition does not hold. The setup is the same as described in Section~\ref{sec:m25maindraft} except that $n=5000$ and each entry of the data matrix $X\in \R^{n\times d}$ is now sampled from a standard normal distribution. We also generate $500$ data points from the same distribution for testing. Note that our theory in this work is only applicable for orthogonal data. hence, for these experiments with non-orthogonal data, we first tune the step-size to be as large as possible, say $\eta_{\max}$, so that the training does not diverge and then run the experiments for $\frac{i+1}{5}\eta_{\max}$ with $i=0,...,4$. Hence, the step-sizes for loss and sharpness curves $0, 1, 2, 3, 4$ are chosen to be $10, 20, 30, 40, 50$ for $m=5, 10$ and $12, 24, 36, 48, 60$ for $m=25$. 

In Figures~\ref{fig: 5_index_nonorthogonal}, \ref{fig: 10_index_nonorthogonal} and~\ref{fig: 25_index_nonorthogonal} we plot the training loss and the testing loss (with and without ergodic trajectory averaging) in log scale. Notably different phases (including the periodic and catapult phases) characterized theoretically for the case of orthogonal data, also appear to be present for the non-orthogonal case. We also make the following intriguing conclusions:
\begin{itemize}[noitemsep,leftmargin=0.15in]
    \item As a general trend, training roughly in the catapult phase and predicting without doing the ergodic trajectory averaging appears to have the best test error performance.
    \item In some cases (especially the one with high noise variance), when testing after training in the periodic phase, the test error goes down rapidly in the initial few iterations. Correspondingly, ergodic trajectory averaging after training in the periodic phase, helps to obtain better test error decay compared to ergodic trajectory averaging after training in the catapult phase. However, as mentioned in the previous point, training roughly in the catapult phase and predicting without doing the ergodic trajectory averaging performs the best.
    \item As discussed in~\cite{lim2022chaotic}, in various cases, artificially infusing control chaos help to obtain better test accuracy. Given our empirical observations and the results in ~\cite{lim2022chaotic}, it is interesting to design controlled chaos infusion in gradient descent and perform ergodic training averaging to obtain stable and improved test error performance. 
\end{itemize}

Obtaining theoretical results corroborating the above-mentioned observations is challenging future work.

\begin{figure}[ht]
    \centering
    \subfigure{\includegraphics[width=0.23\textwidth]{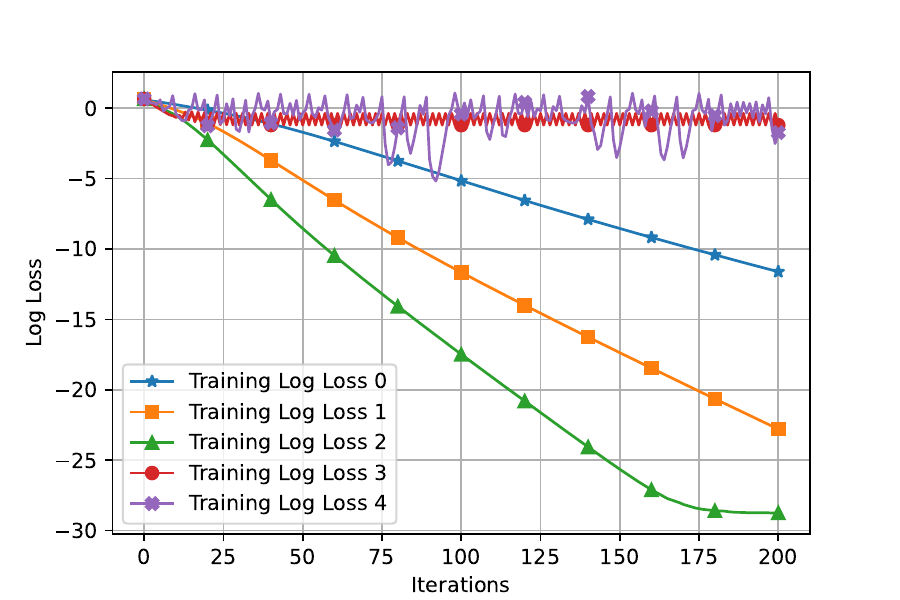}}
    \subfigure{\includegraphics[width=0.23\textwidth]{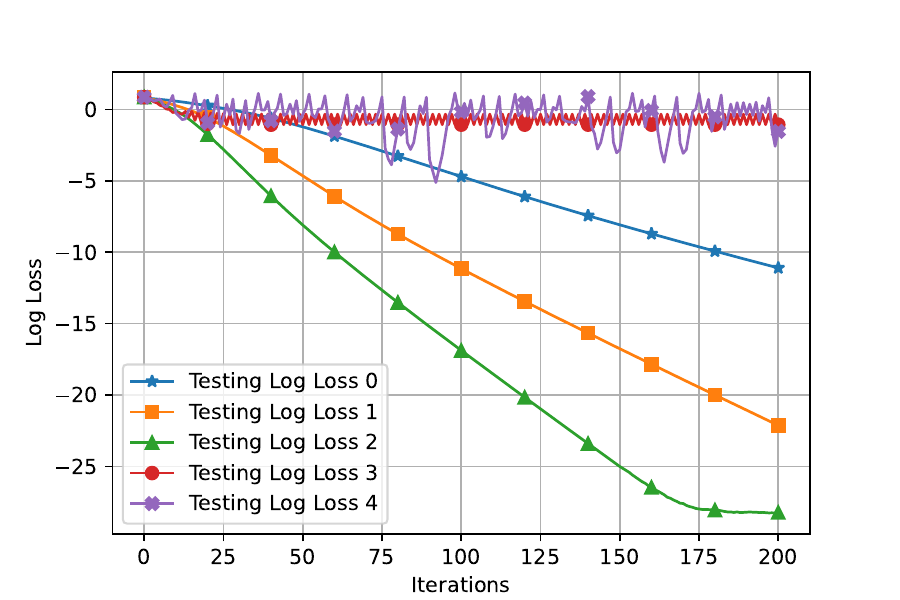}}
    \subfigure{\includegraphics[width=0.23\textwidth]{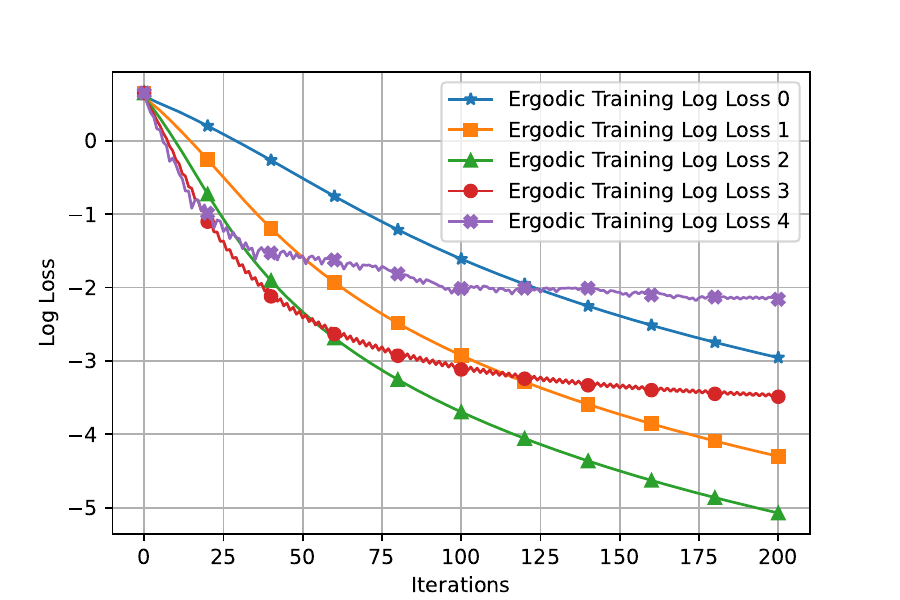}}
    \subfigure{\includegraphics[width=0.23\textwidth]{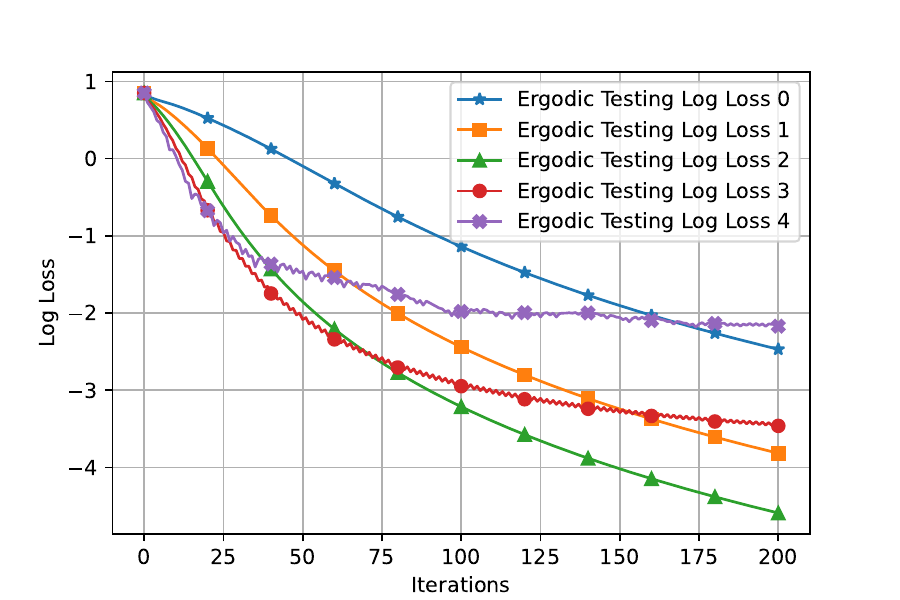}}

    \subfigure{\includegraphics[width=0.23\textwidth]{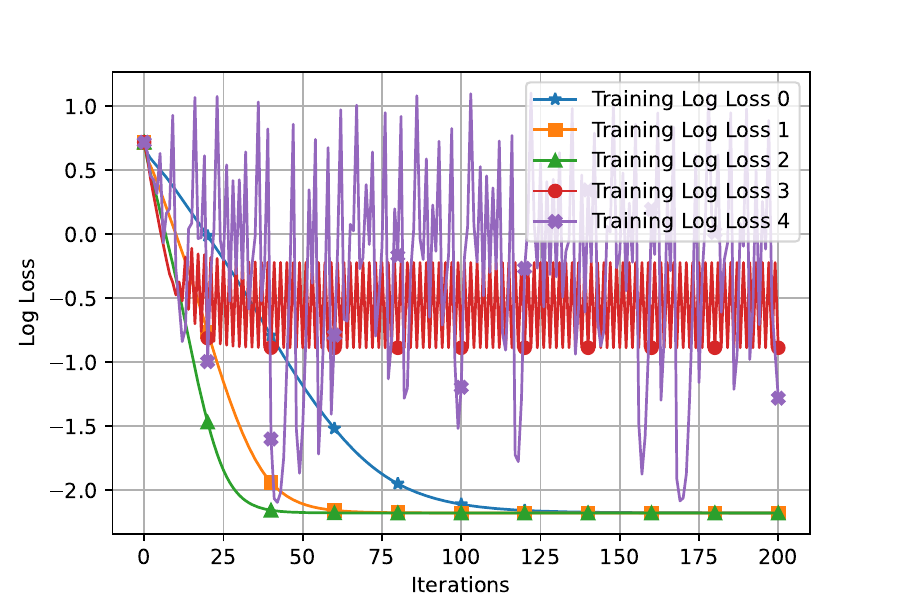}}
    \subfigure{\includegraphics[width=0.23\textwidth]{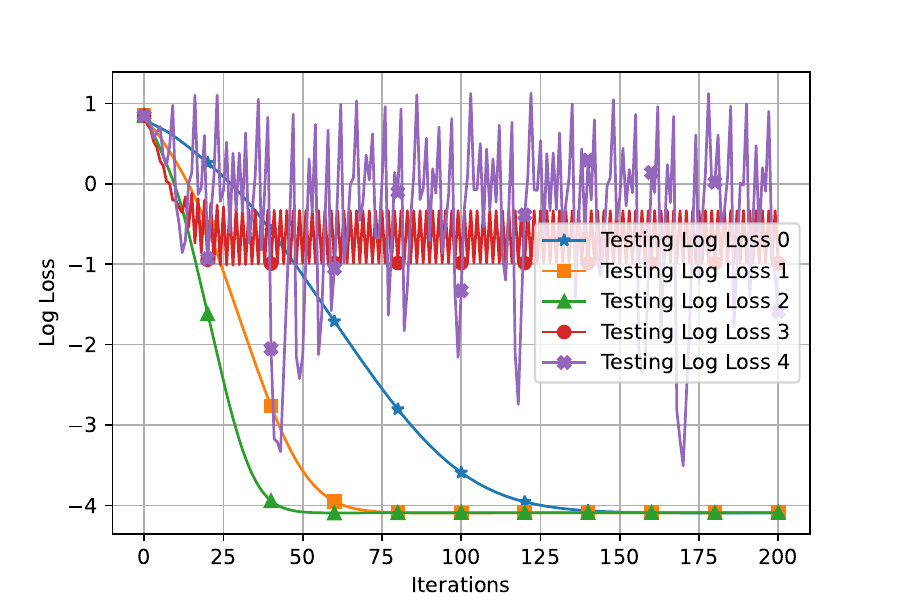}}
    \subfigure{\includegraphics[width=0.23\textwidth]{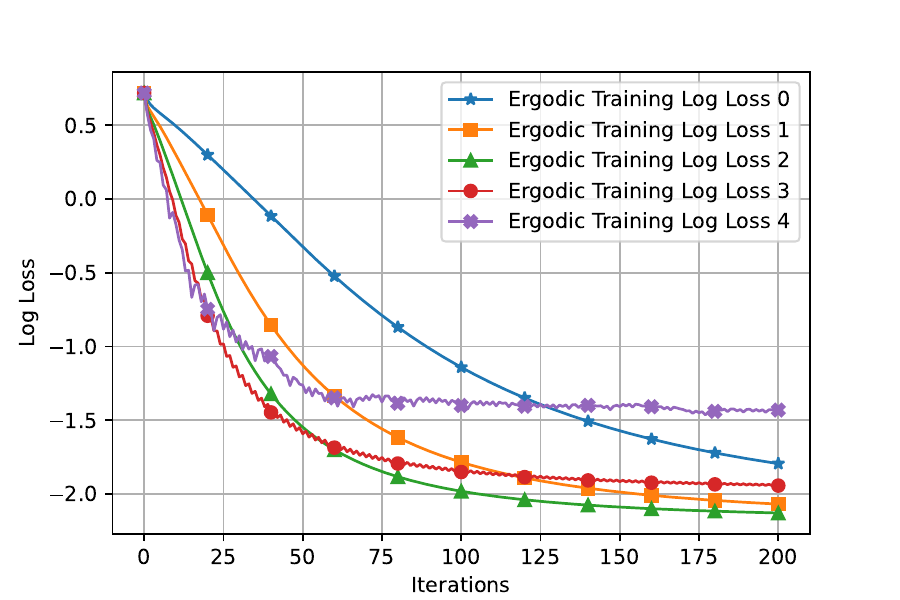}}
    \subfigure{\includegraphics[width=0.23\textwidth]{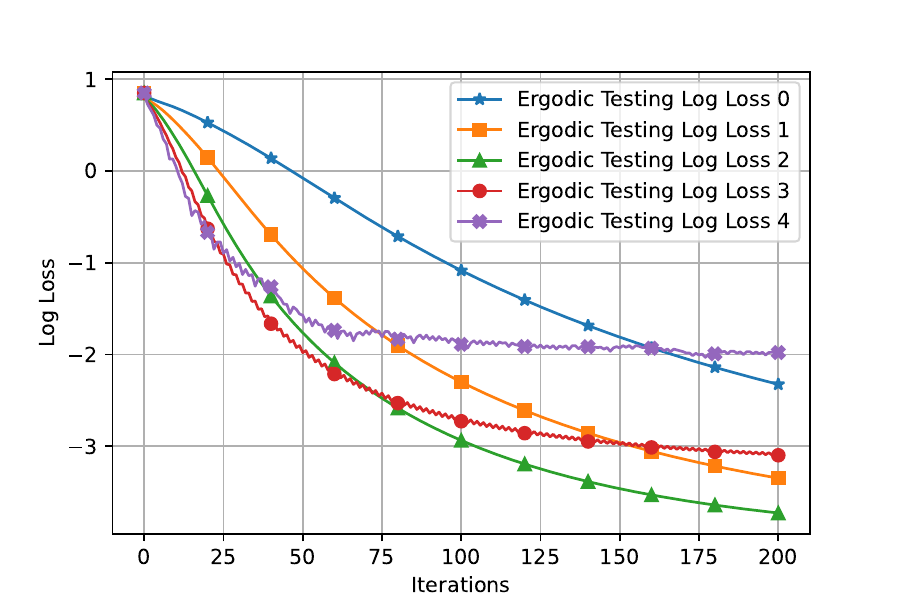}}

    \subfigure{\includegraphics[width=0.23\textwidth]{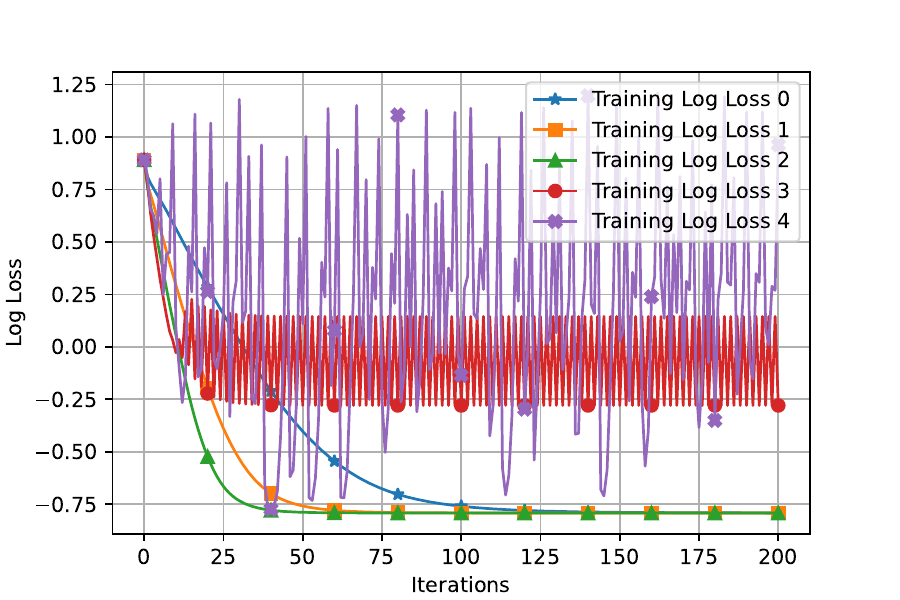}}
    \subfigure{\includegraphics[width=0.23\textwidth]{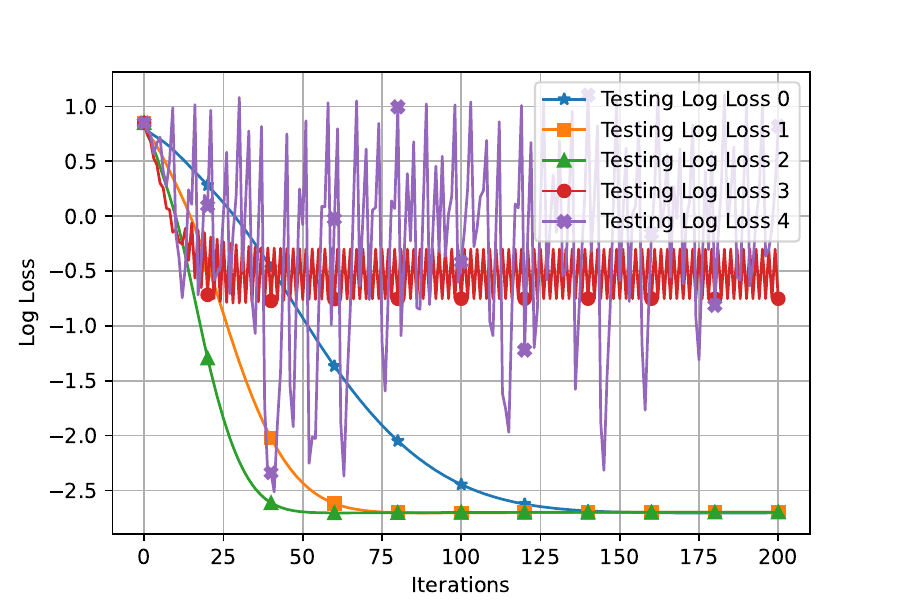}}
    \subfigure{\includegraphics[width=0.23\textwidth]{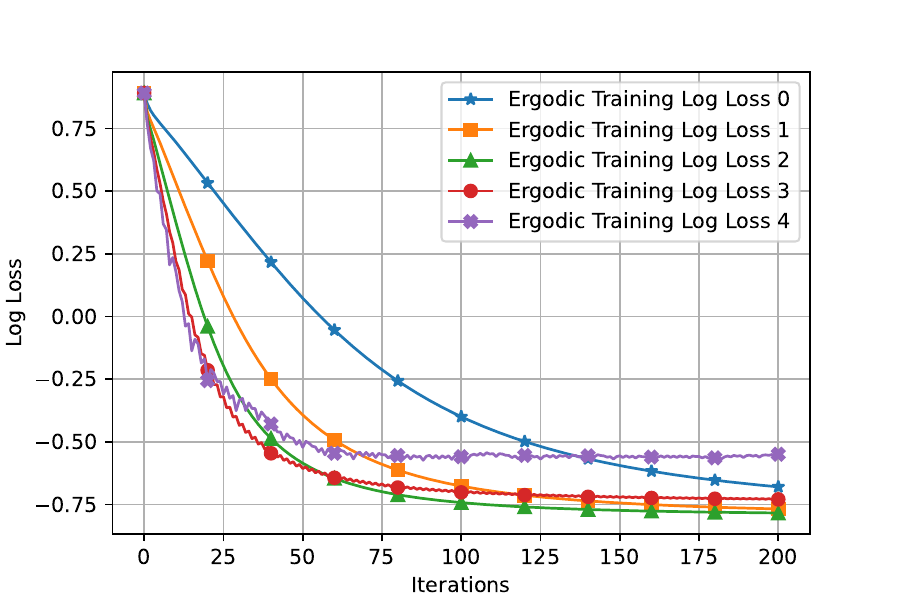}}
    \subfigure{\includegraphics[width=0.23\textwidth]{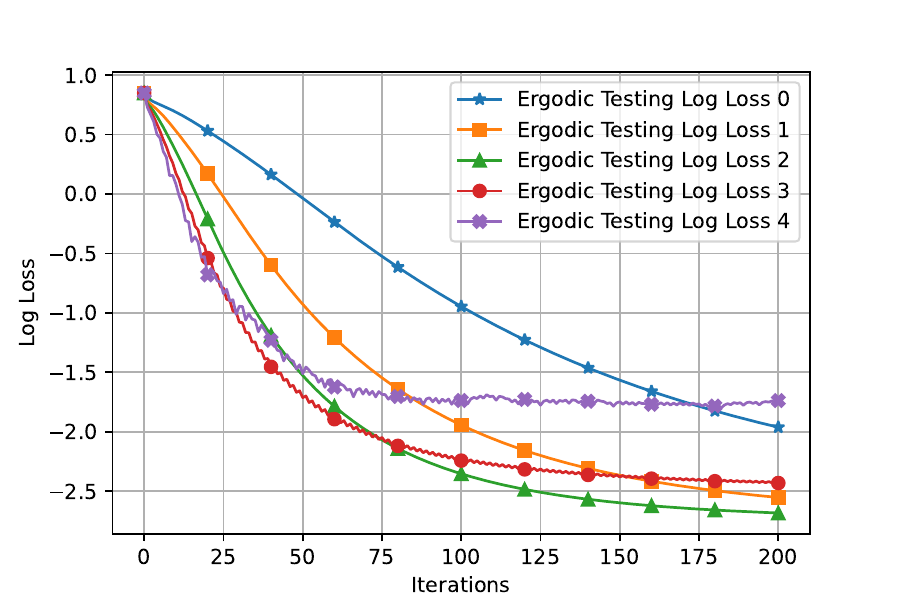}}

    \caption{Hidden-layer width=5, with non-orthogonal data points. Rows from top to bottom represent different levels of noise -- mean-zero normal distribution with variance $0, 0.25, 1$. The vertical axes are in log scale for loss curves.}
    \label{fig: 5_index_nonorthogonal}
\end{figure}

\begin{figure}[ht]
    \centering
    \subfigure{\includegraphics[width=0.23\textwidth]{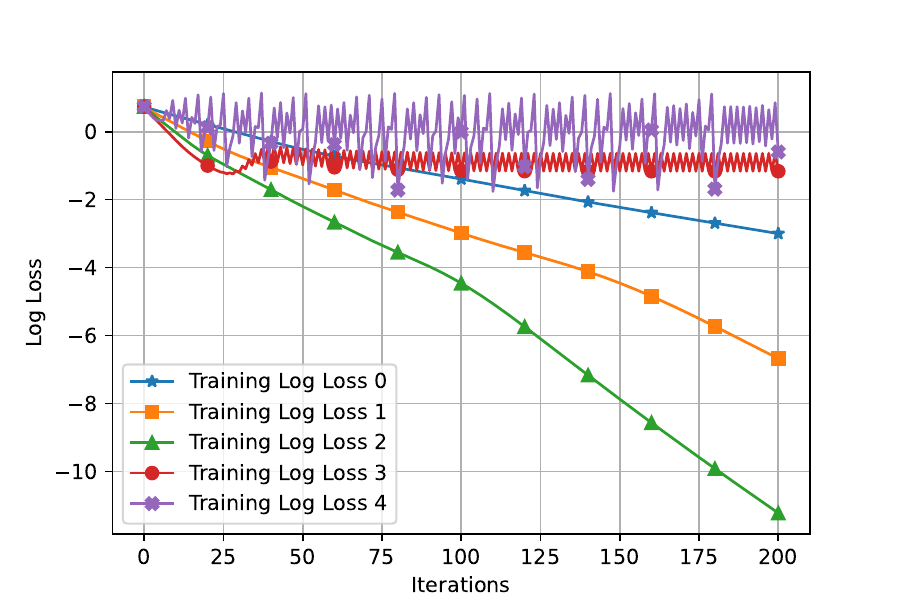}}
    \subfigure{\includegraphics[width=0.23\textwidth]{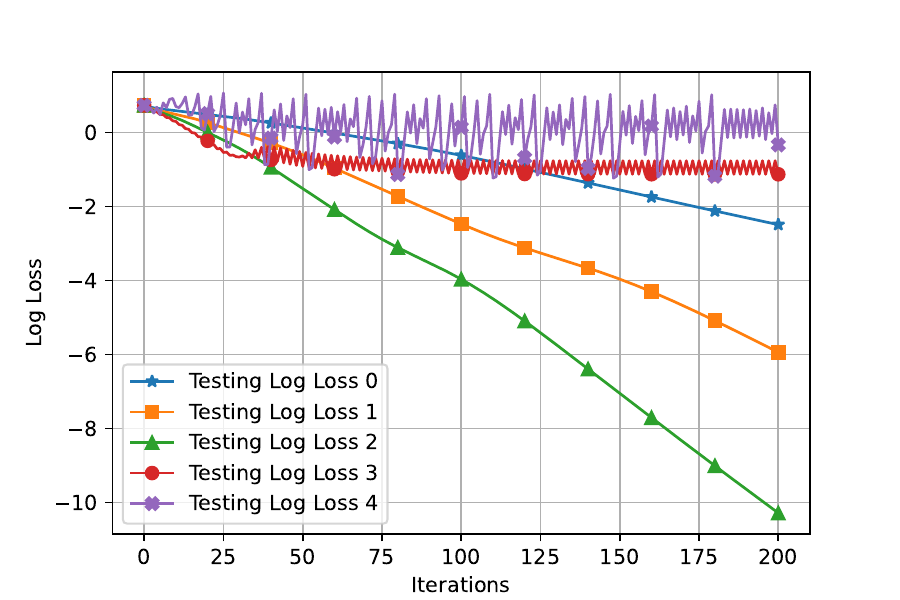}}
    \subfigure{\includegraphics[width=0.23\textwidth]{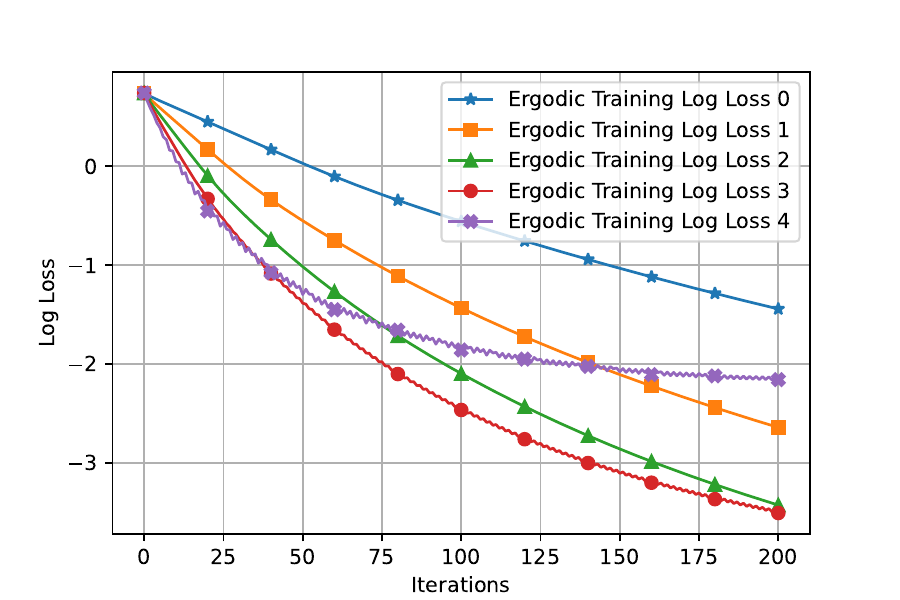}}
    \subfigure{\includegraphics[width=0.23\textwidth]{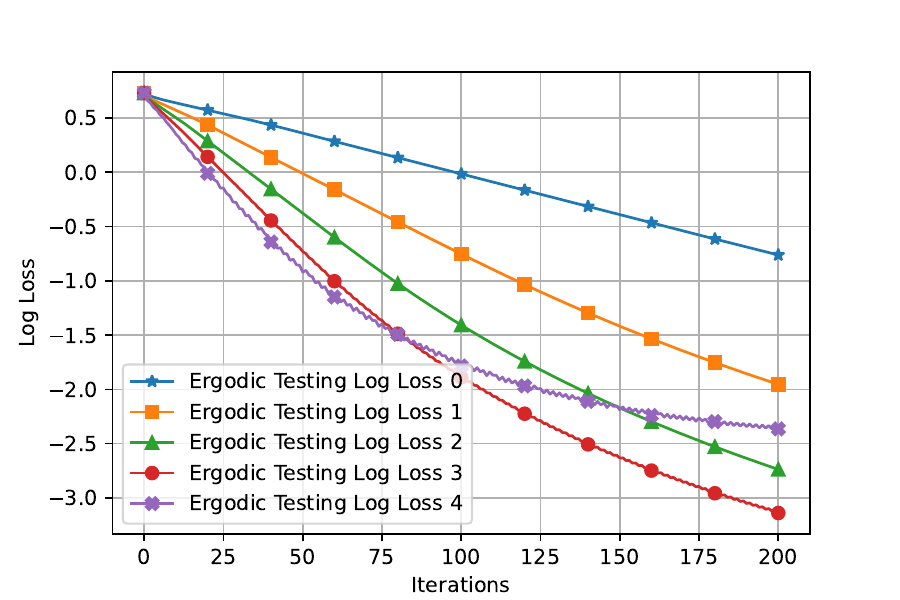}}

    \subfigure{\includegraphics[width=0.23\textwidth]{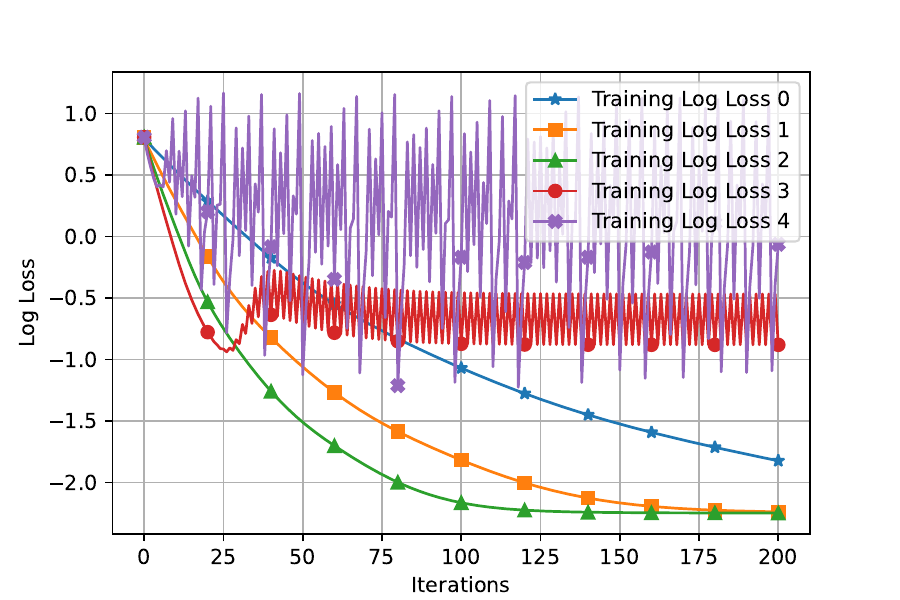}}
    \subfigure{\includegraphics[width=0.23\textwidth]{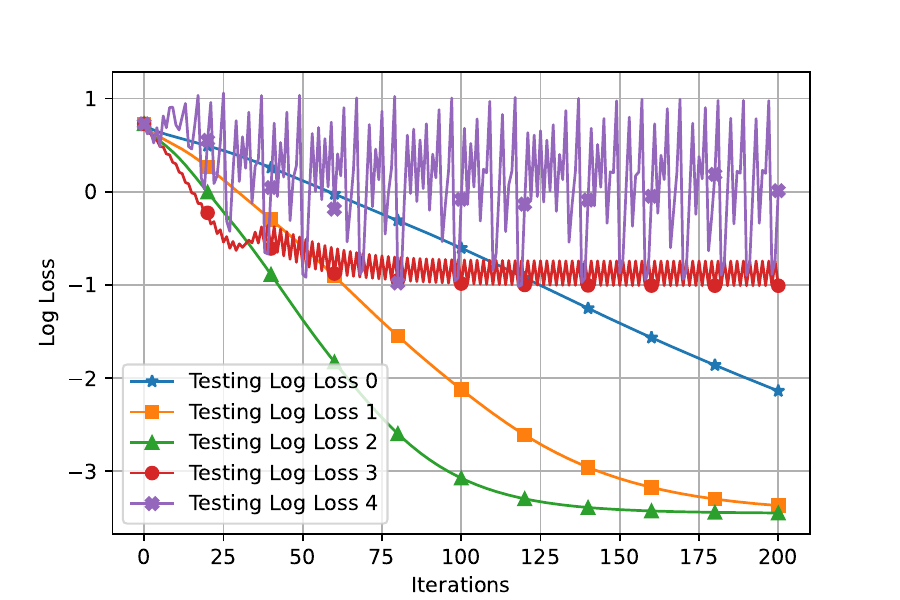}}
    \subfigure{\includegraphics[width=0.23\textwidth]{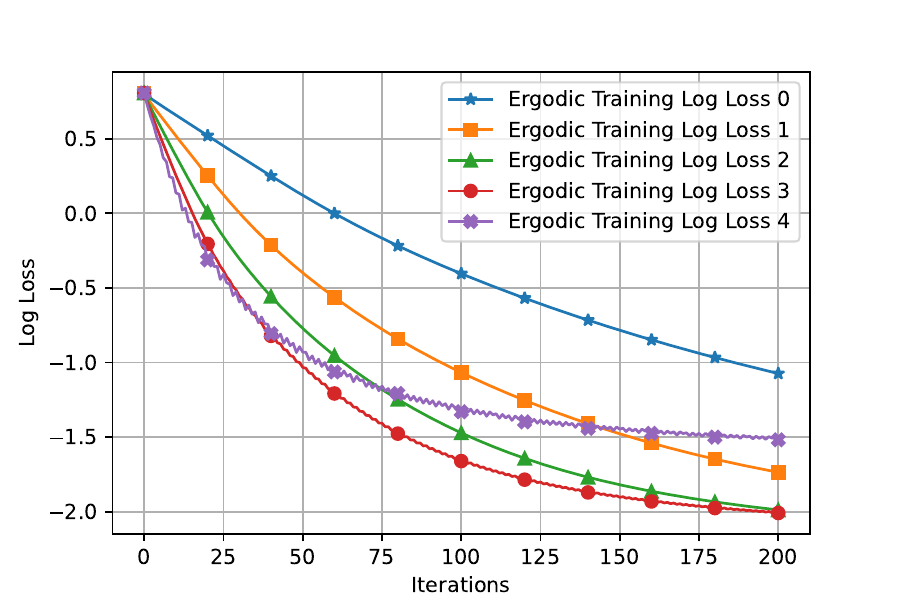}}
    \subfigure{\includegraphics[width=0.23\textwidth]{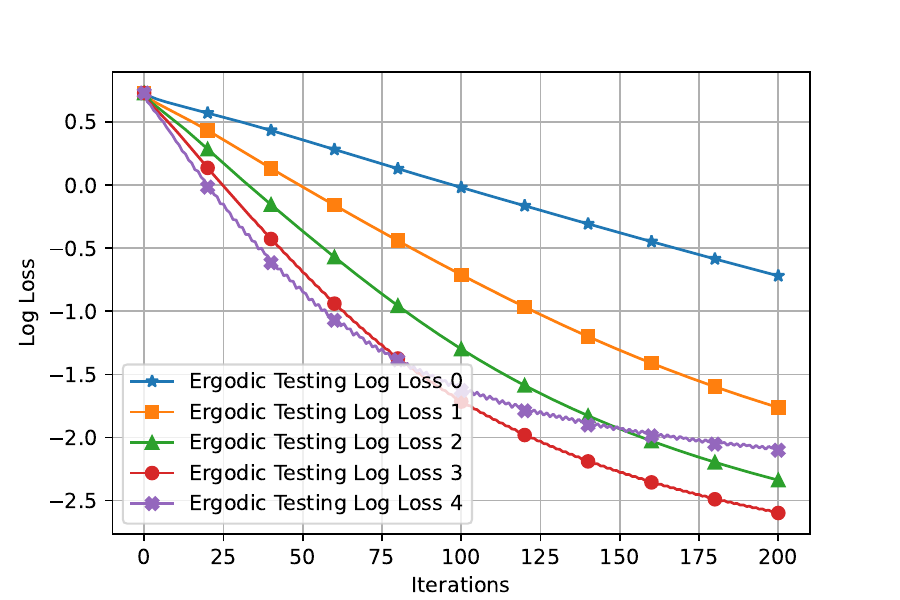}}

    \subfigure{\includegraphics[width=0.23\textwidth]{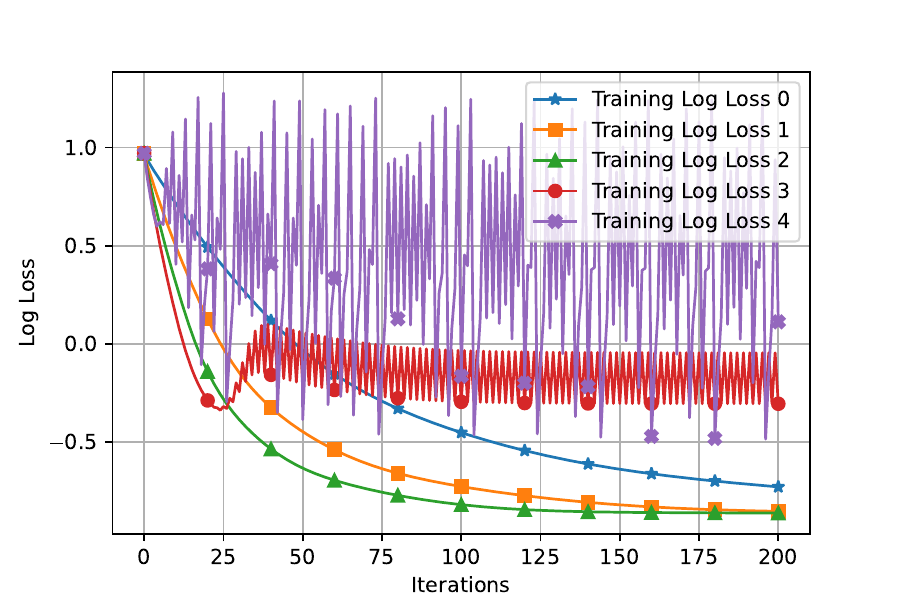}}
    \subfigure{\includegraphics[width=0.23\textwidth]{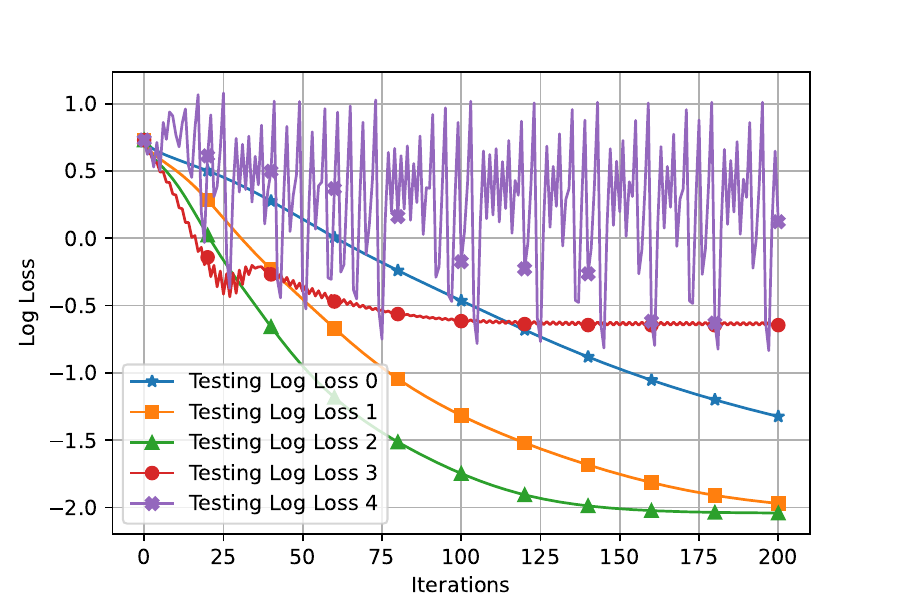}}
    \subfigure{\includegraphics[width=0.23\textwidth]{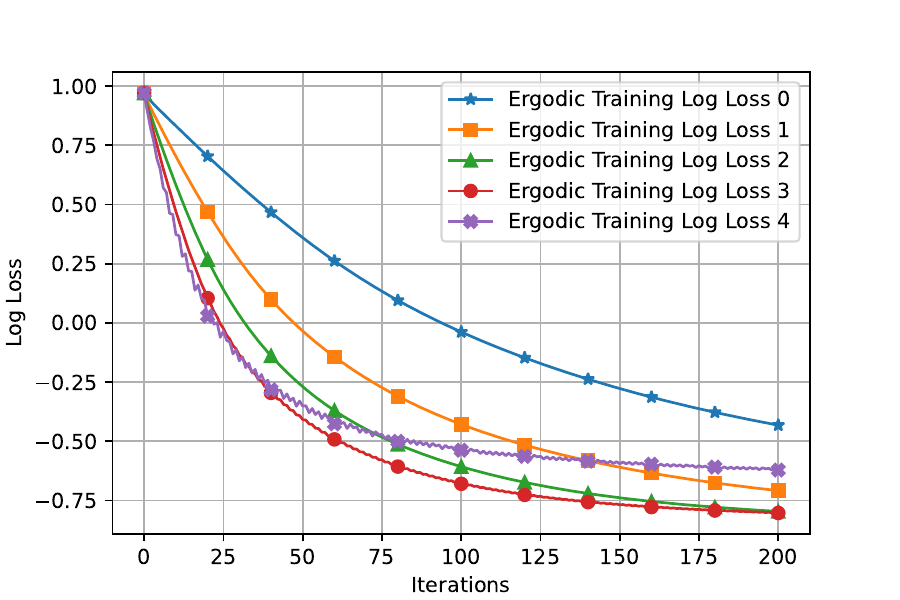}}
    \subfigure{\includegraphics[width=0.23\textwidth]{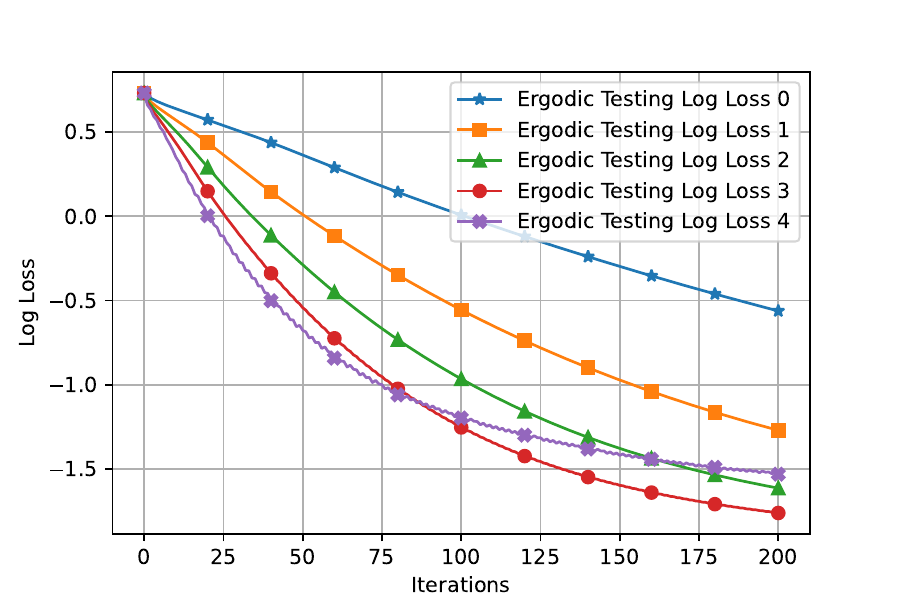}}
    \caption{Hidden-layer width=10, with non-orthogonal data points. Rows from top to bottom represent different levels of noise -- mean-zero normal distribution with variance $0, 0.25, 1$. The vertical axes are in log scale for loss curves.}
    \label{fig: 10_index_nonorthogonal}
\end{figure}

\begin{figure}[ht]
    \centering
    \subfigure{\label{fig: nonortho_ind25_noise0_train}\includegraphics[width=0.23\textwidth]{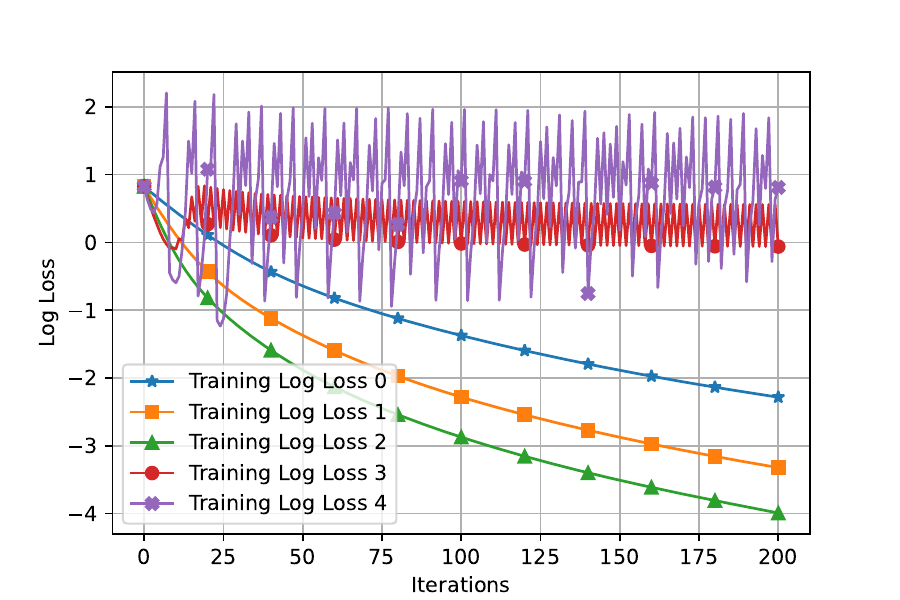}}
    \subfigure{\label{fig: nonortho_ind25_noise0_test}\includegraphics[width=0.23\textwidth]{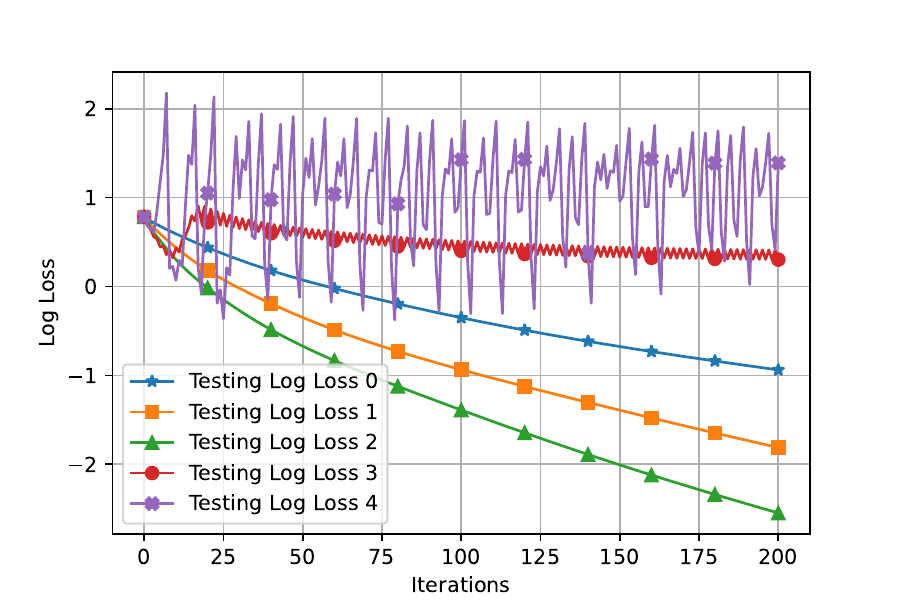}}
    \subfigure{\label{fig: nonortho_ind25_noise0_ergo_train}\includegraphics[width=0.23\textwidth]{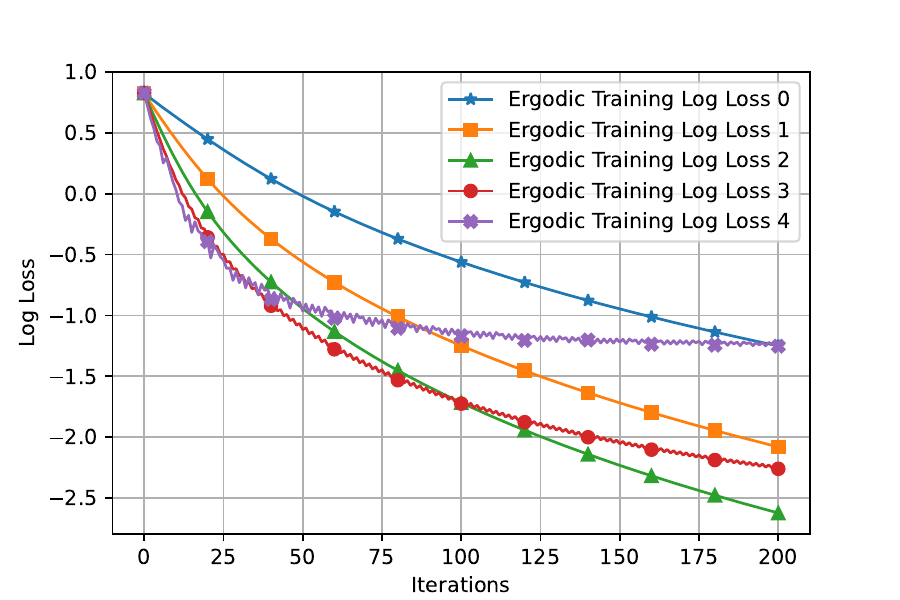}}
    \subfigure{\label{fig: nonortho_ind25_noise0_ergo_test}\includegraphics[width=0.23\textwidth]{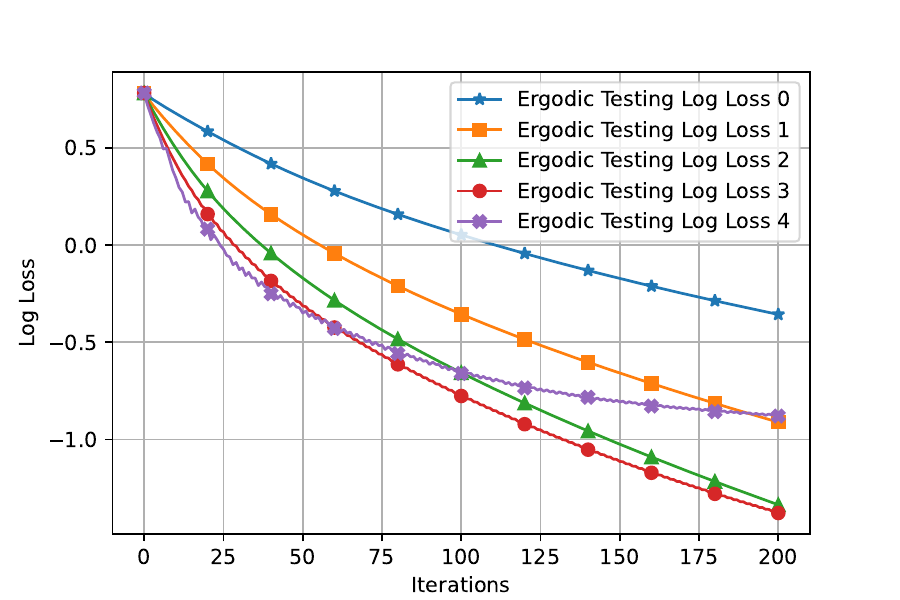}}

    \subfigure{\label{fig: nonortho_ind25_noise05_train}\includegraphics[width=0.23\textwidth]{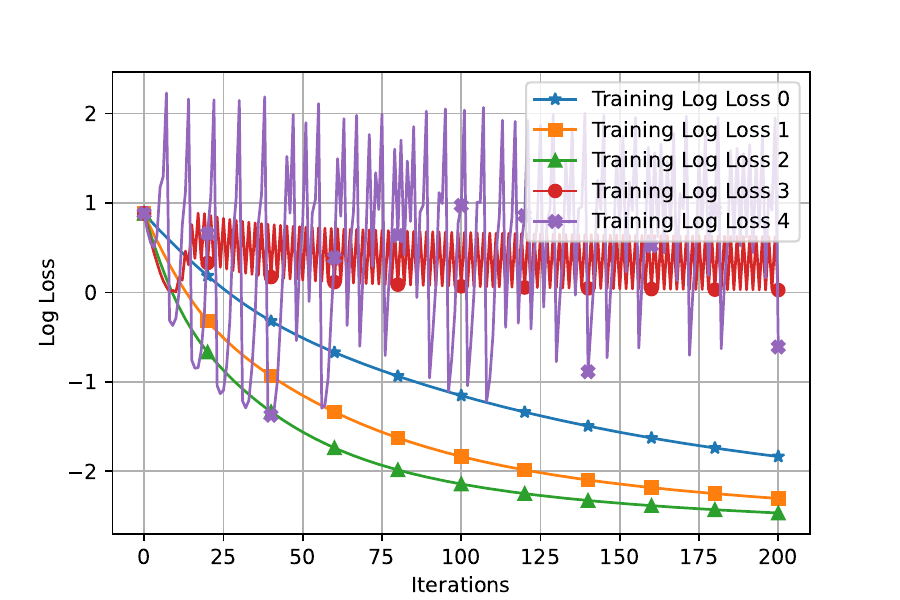}}
    \subfigure{\label{fig: nonortho_ind25_noise05_test}\includegraphics[width=0.23\textwidth]{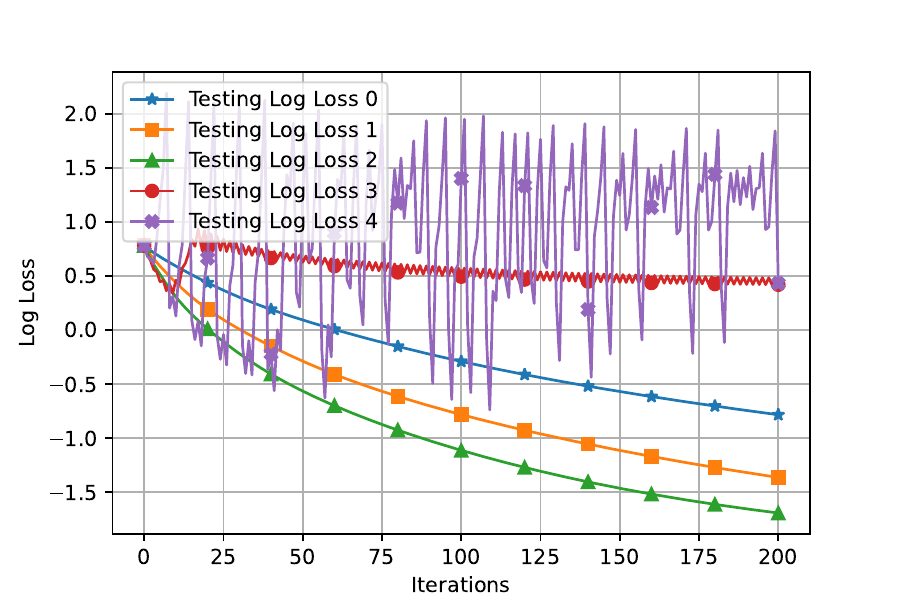}}
    \subfigure{\label{fig: nonortho_ind25_noise05_ergo_train}\includegraphics[width=0.23\textwidth]{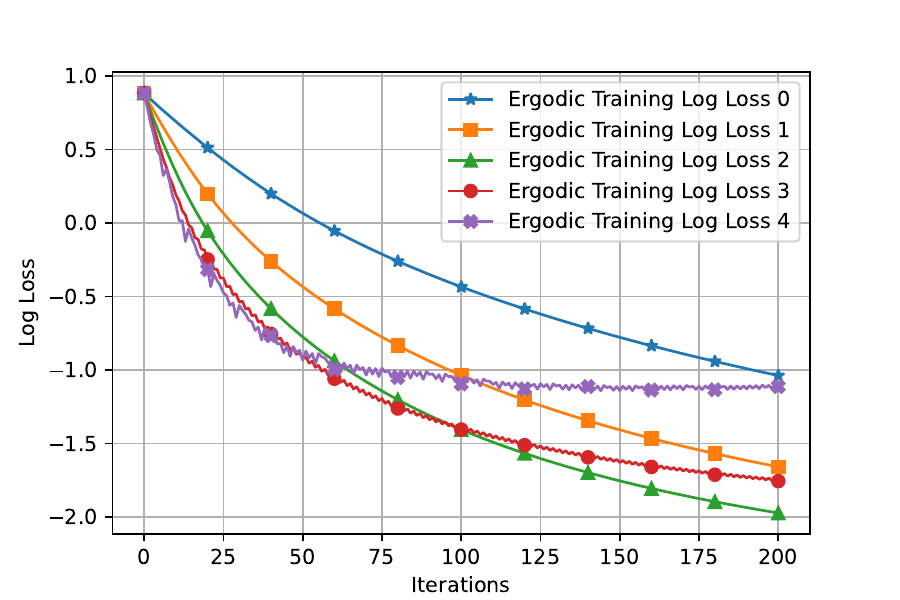}}
    \subfigure{\label{fig: nonortho_ind25_noise05_ergo_test}\includegraphics[width=0.23\textwidth]{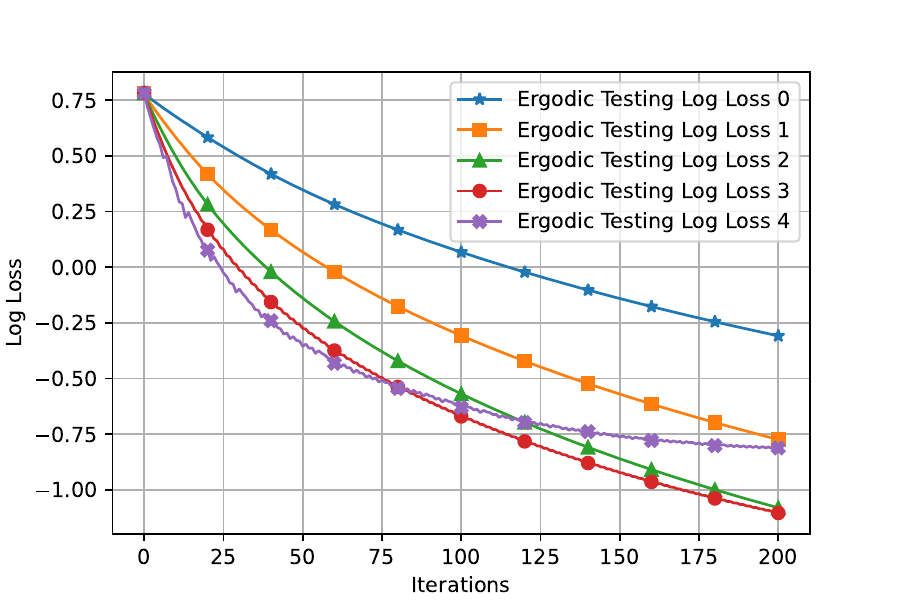}}

    \subfigure{\label{fig: nonortho_ind25_noise1_train}\includegraphics[width=0.23\textwidth]{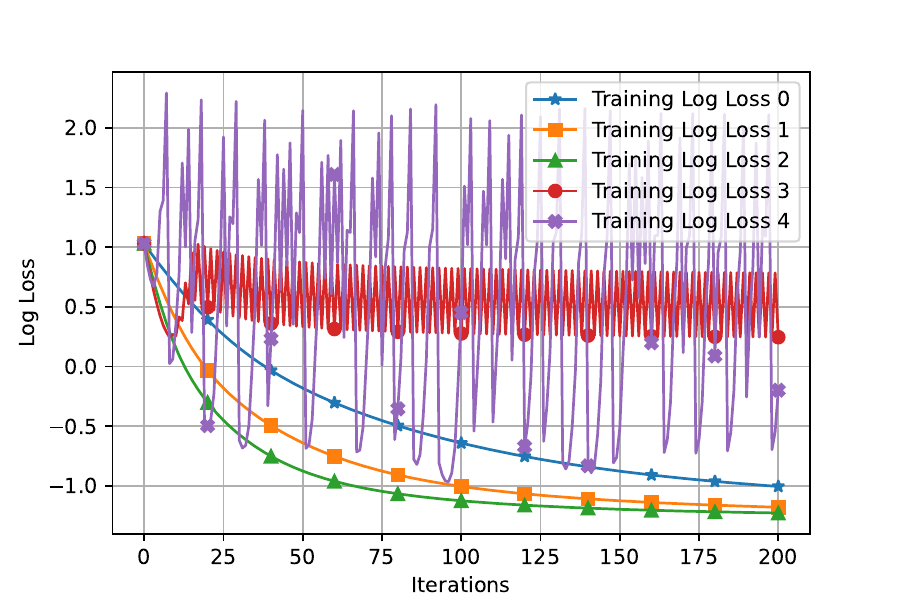}}
    \subfigure{\label{fig: nonortho_ind25_noise1_test}\includegraphics[width=0.23\textwidth]{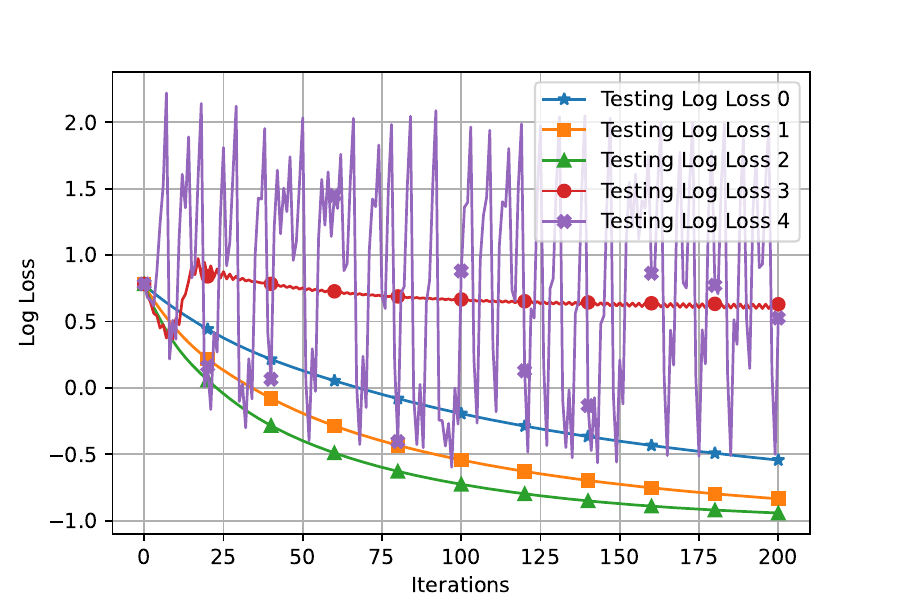}}
    \subfigure{\label{fig: nonortho_ind25_noise1_ergo_train}\includegraphics[width=0.23\textwidth]{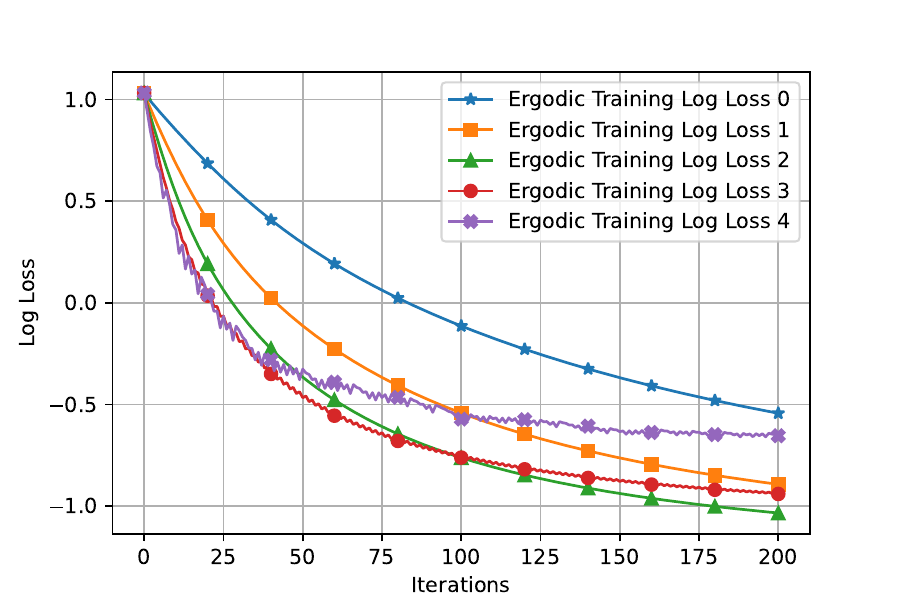}}
    \subfigure{\label{fig: nonortho_ind25_noise1_ergo_test}\includegraphics[width=0.23\textwidth]{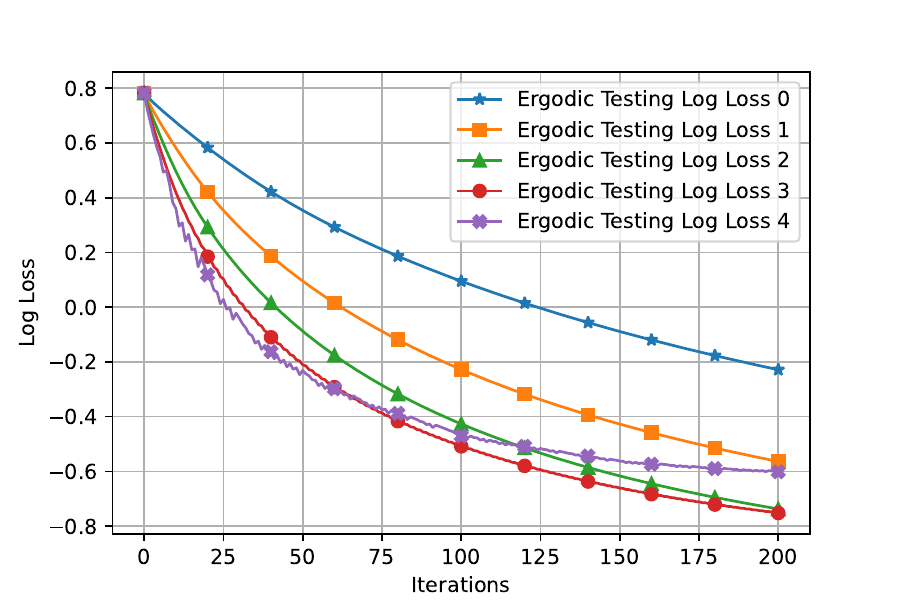}}
    \caption{Hidden-layer width=25, with non-orthogonal data points. Rows from top to bottom represent different levels of noise -- mean-zero normal distribution with variance $0, 0.25, 1$. The vertical axes are in log scale for loss curves.}
    \label{fig: 25_index_nonorthogonal}
\end{figure}

\clearpage

\clearpage
\subsection{Two-layer Neural Network with ReLU}\label{sec:reluexp}
While our main focus in this work is for quadratic activation functions, it is also instructive to examine the dynamics with other activation function, in particular the ReLU activation. Hence, we follow the experimental setup from Section \ref{sec:nonorthoplots}, except that the activation function is now ReLU and repeat our experiments. For this case, the step-sizes manually  chosen to be $60, 120, 180, 240, 300$ for loss/sharpness curves $0, 1, 2, 3, 4$, respectively.

\begin{figure}[ht]
    \centering
    \subfigure{\includegraphics[width=0.19\textwidth]{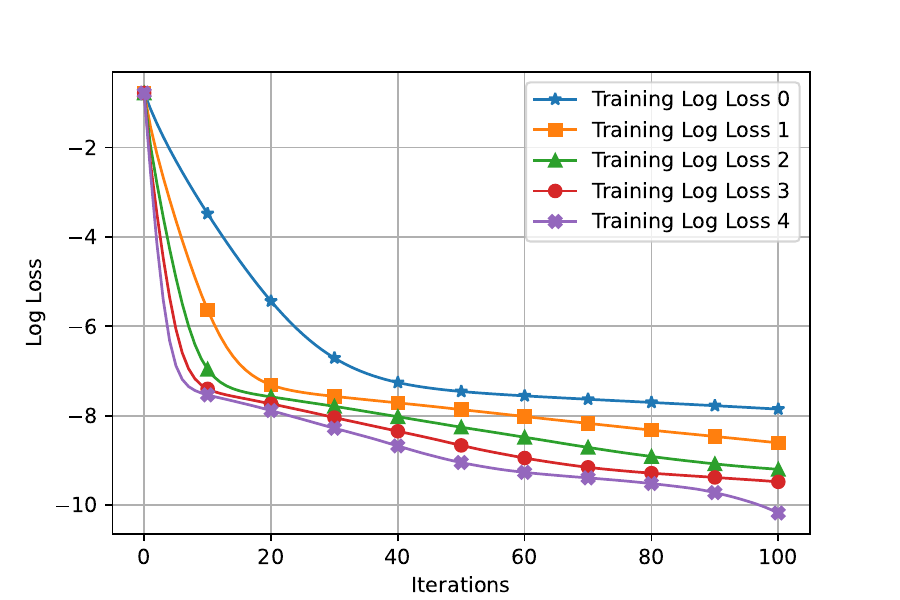}}
    \subfigure{\includegraphics[width=0.19\textwidth]{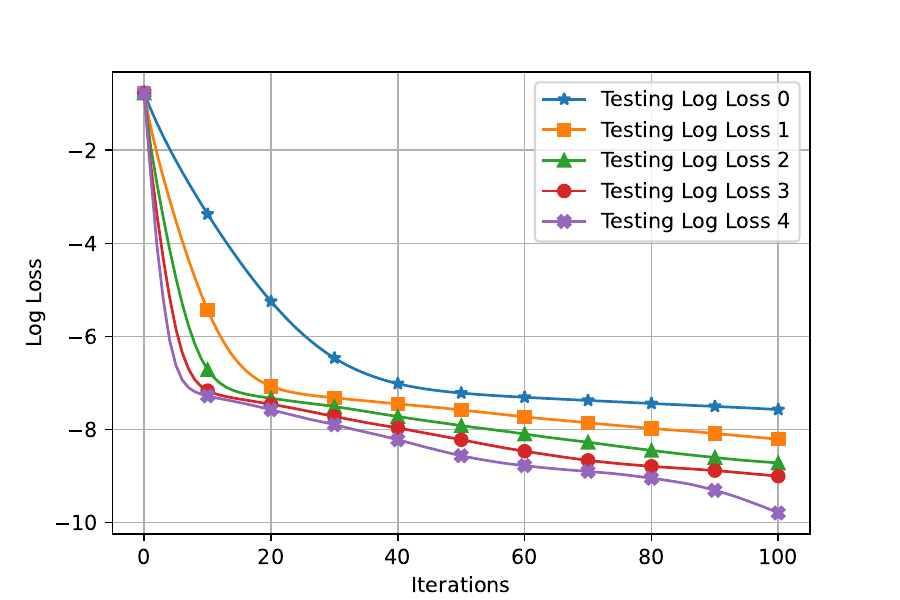}}
    \subfigure{\includegraphics[width=0.19\textwidth]{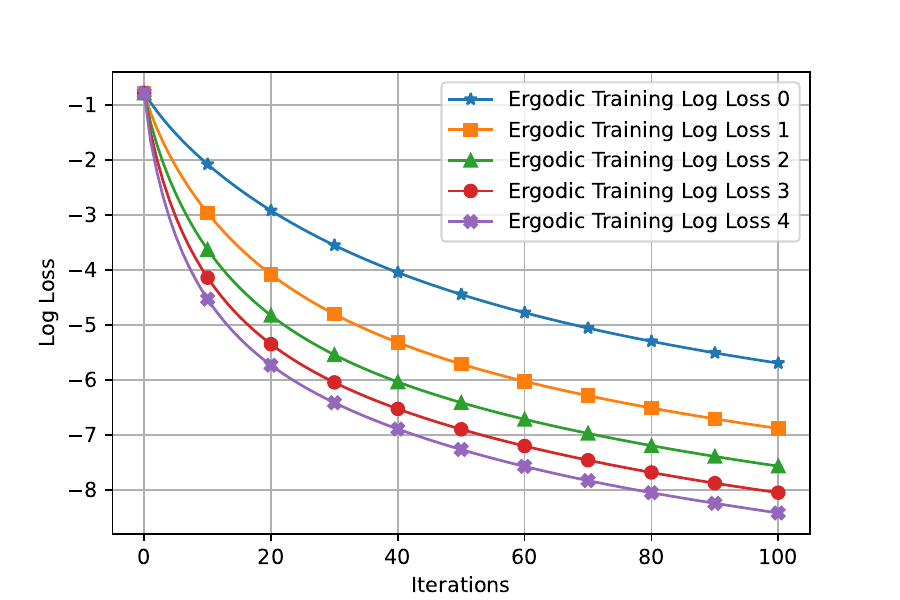}}
    \subfigure{\includegraphics[width=0.19\textwidth]{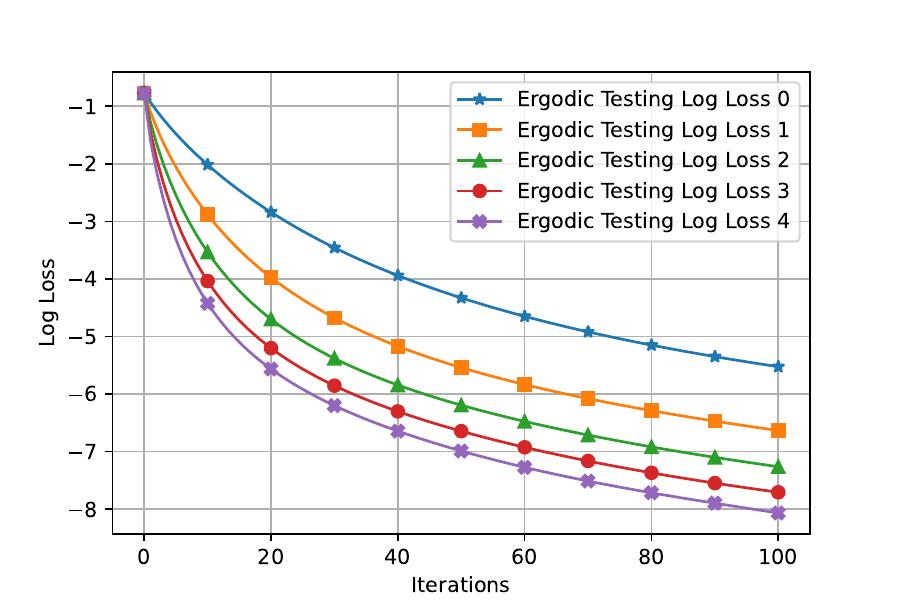}}
    \subfigure{\includegraphics[width=0.19\textwidth]{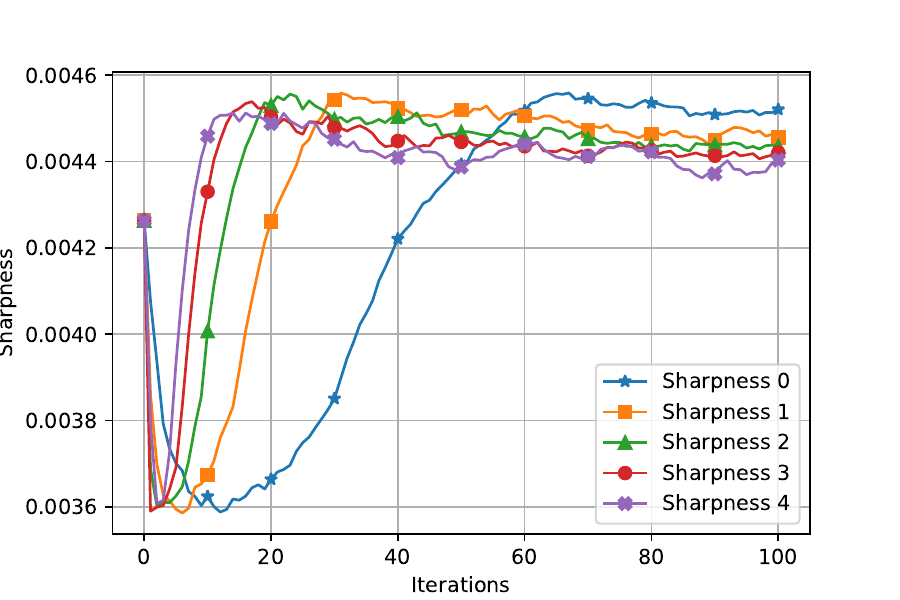}}

    \subfigure{\includegraphics[width=0.19\textwidth]{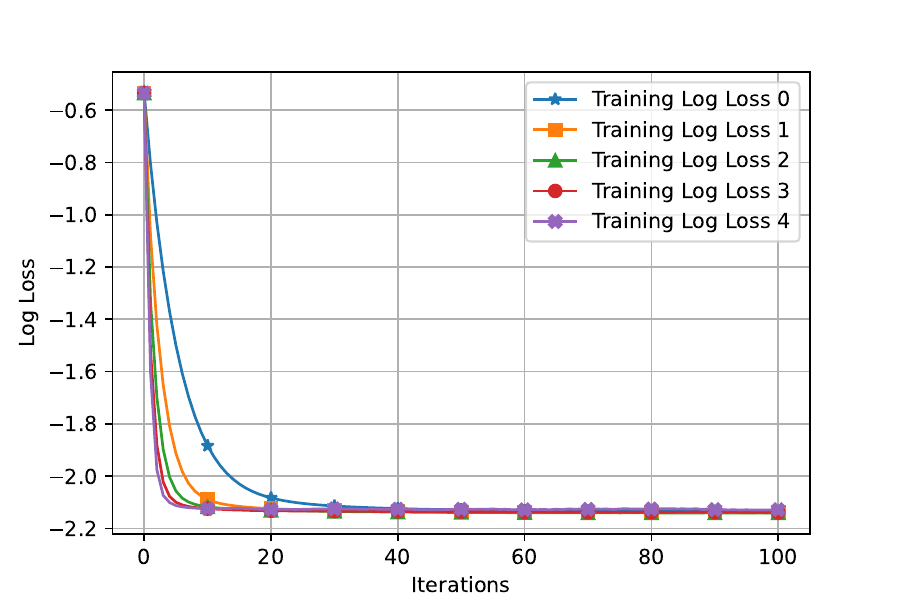}}
    \subfigure{\includegraphics[width=0.19\textwidth]{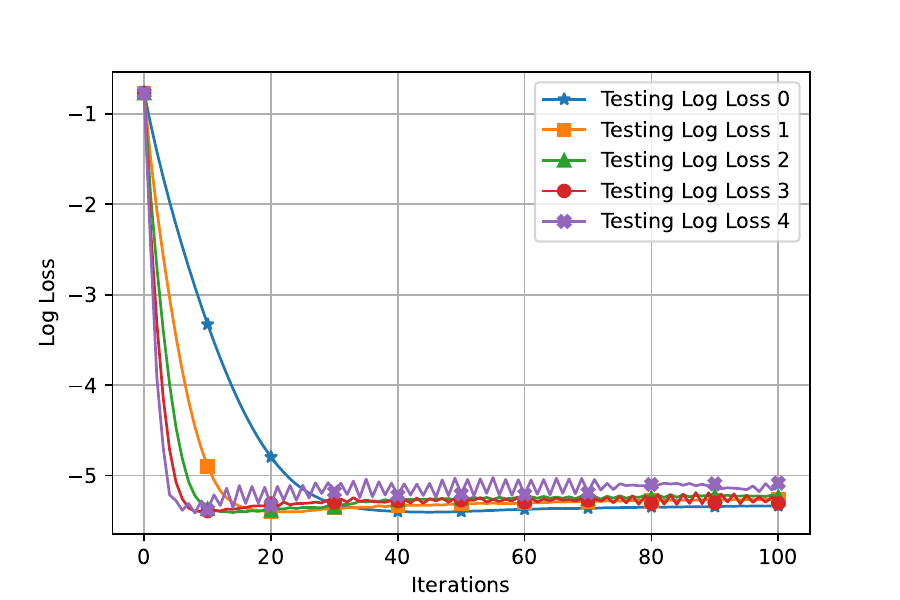}}
    \subfigure{\includegraphics[width=0.19\textwidth]{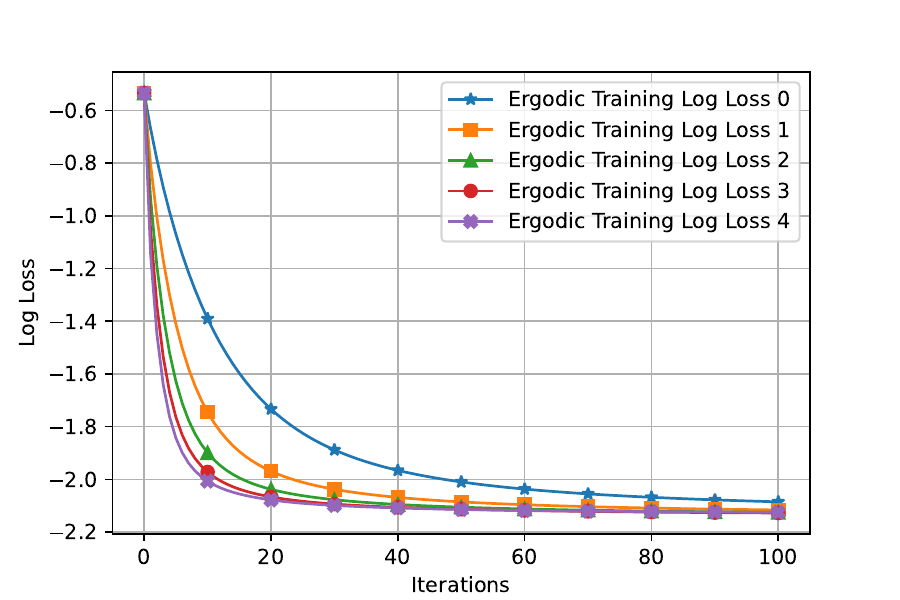}}
    \subfigure{\includegraphics[width=0.19\textwidth]{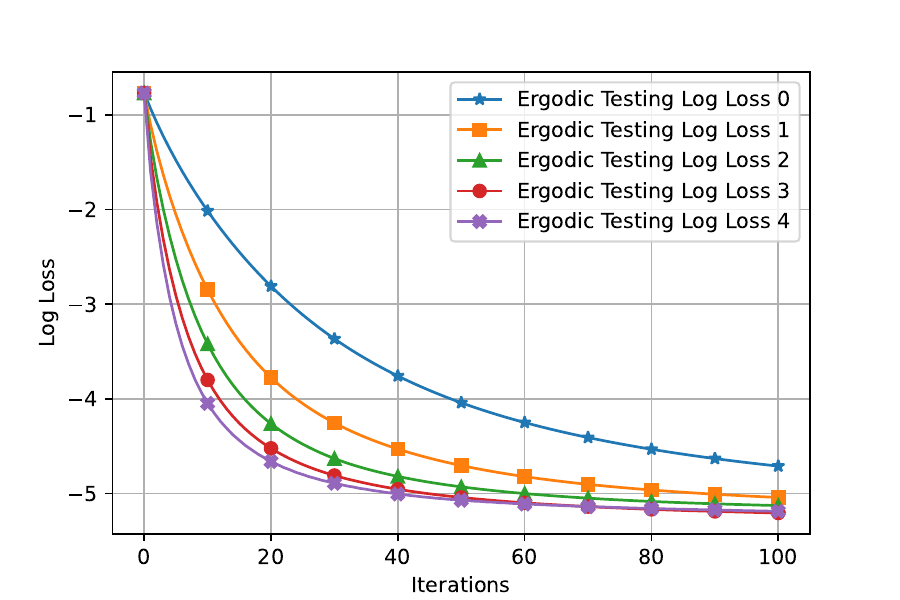}}
    \subfigure{\includegraphics[width=0.19\textwidth]{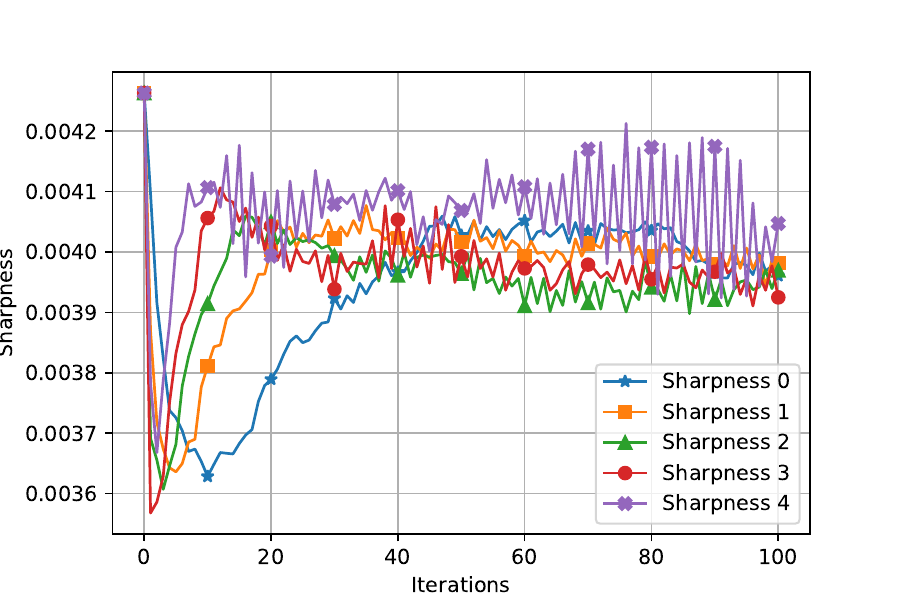}}

    \subfigure{\includegraphics[width=0.19\textwidth]{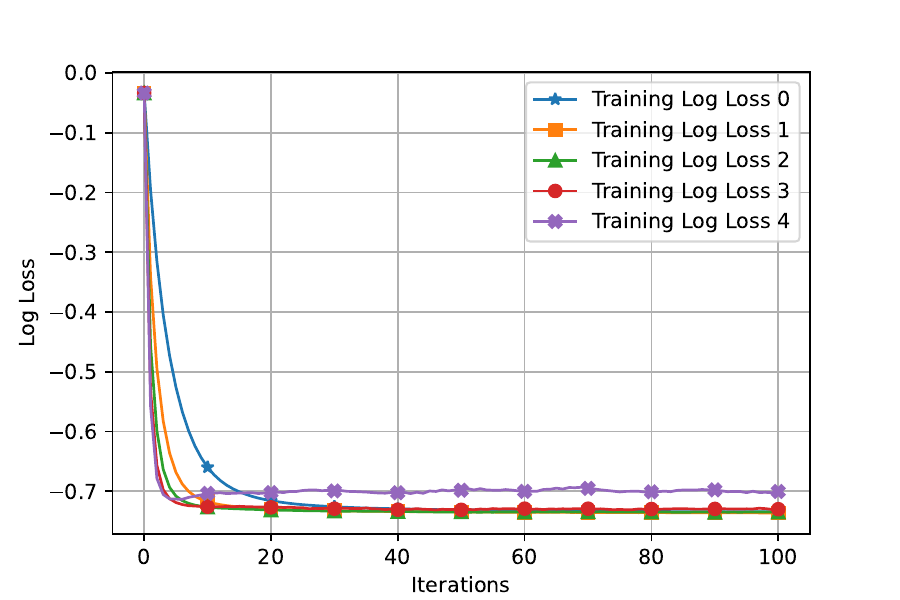}}
    \subfigure{\includegraphics[width=0.19\textwidth]{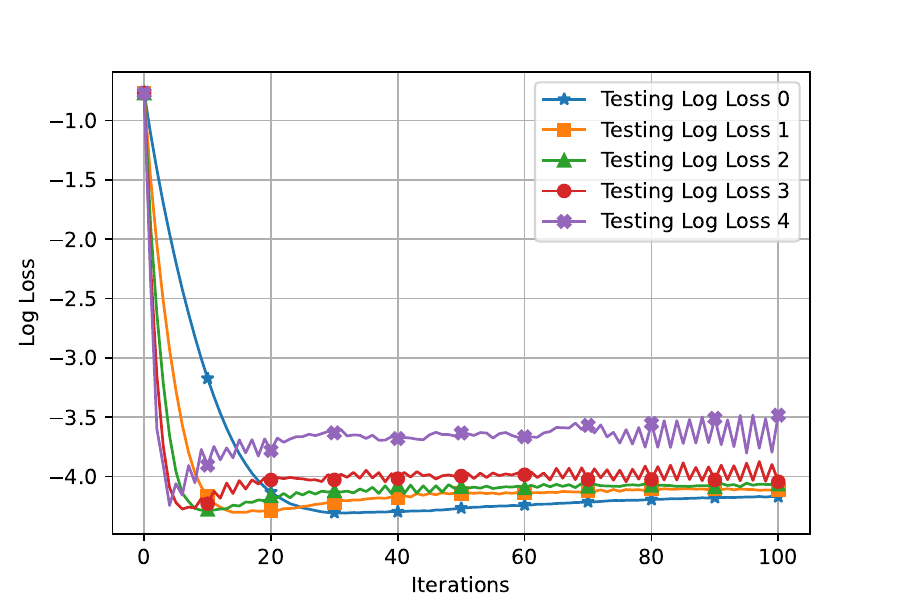}}
    \subfigure{\includegraphics[width=0.19\textwidth]{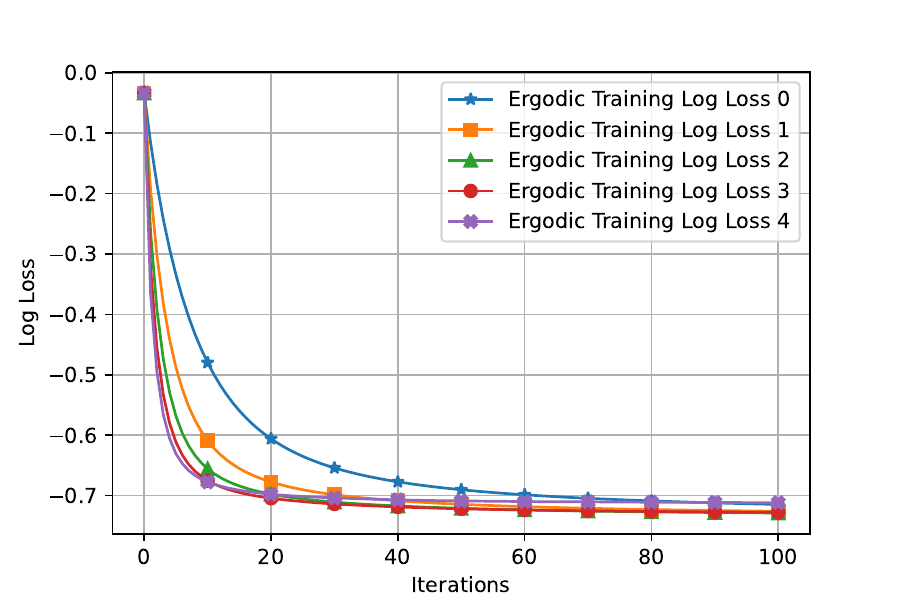}}
    \subfigure{\includegraphics[width=0.19\textwidth]{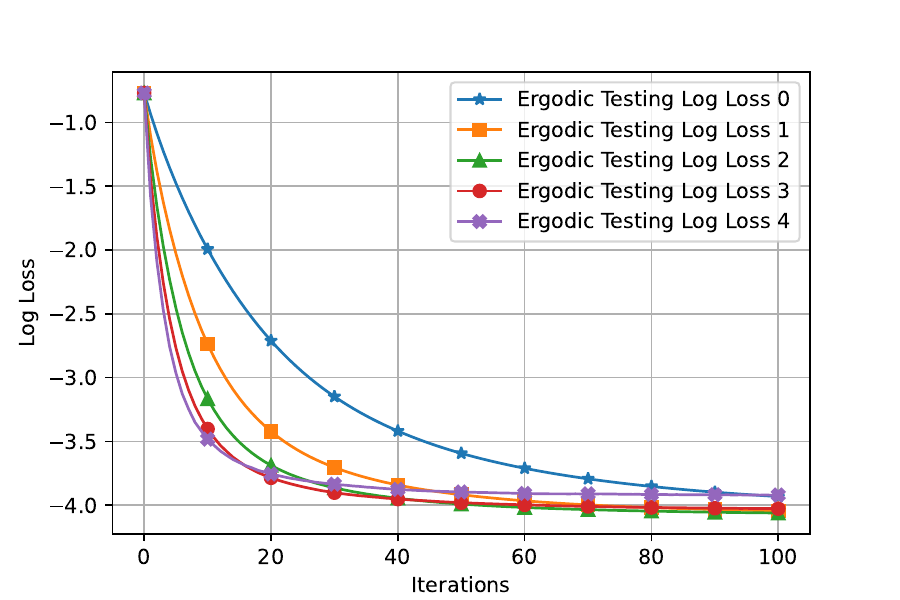}}
    \subfigure{\includegraphics[width=0.19\textwidth]{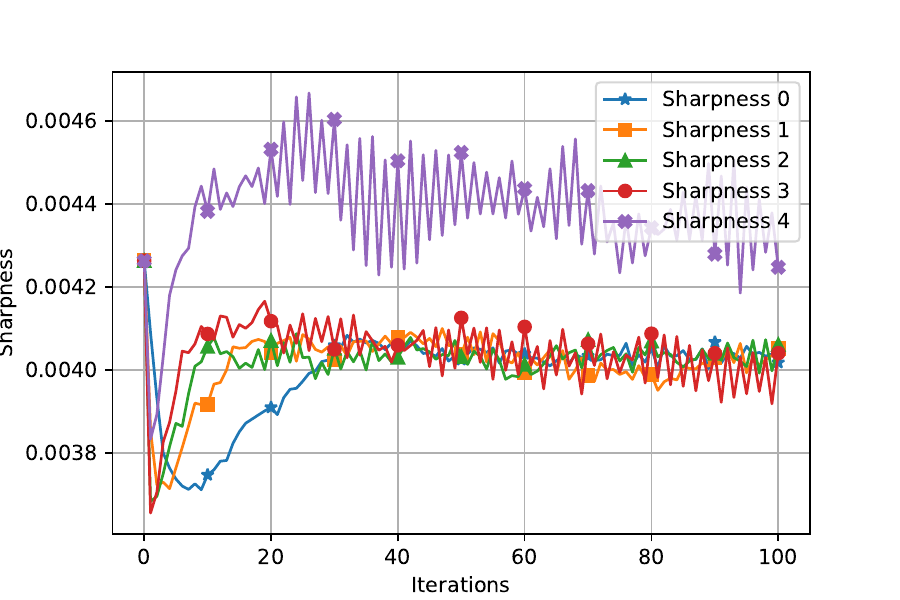}}
    \caption{Hidden-layer width=5 with ReLU activation. Rows from top to bottom represent different levels of noise -- mean-zero normal distribution with variance 0, 0.25, 1. The vertical axes are in log scale for loss curves. The last column is about the sharpness of the training loss functions.}
    \label{fig: relu_ind5}
\end{figure}

\begin{figure}[ht]
    \centering
    \subfigure{\includegraphics[width=0.19\textwidth]{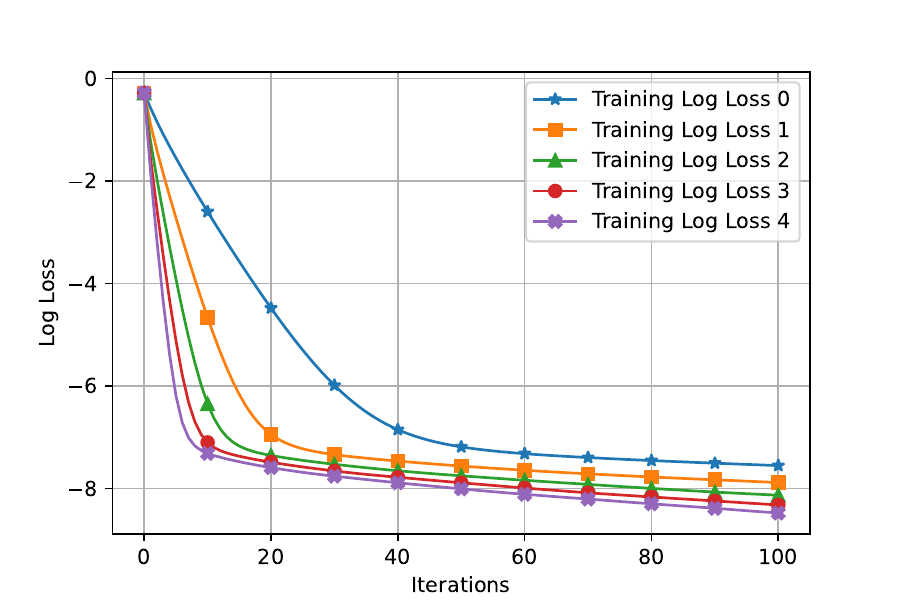}}
    \subfigure{\includegraphics[width=0.19\textwidth]{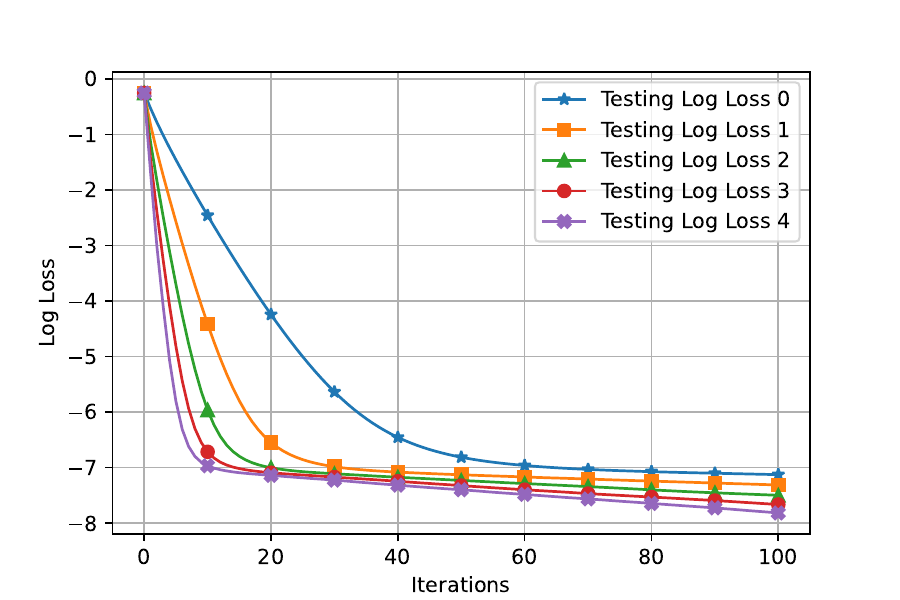}}
    \subfigure{\includegraphics[width=0.19\textwidth]{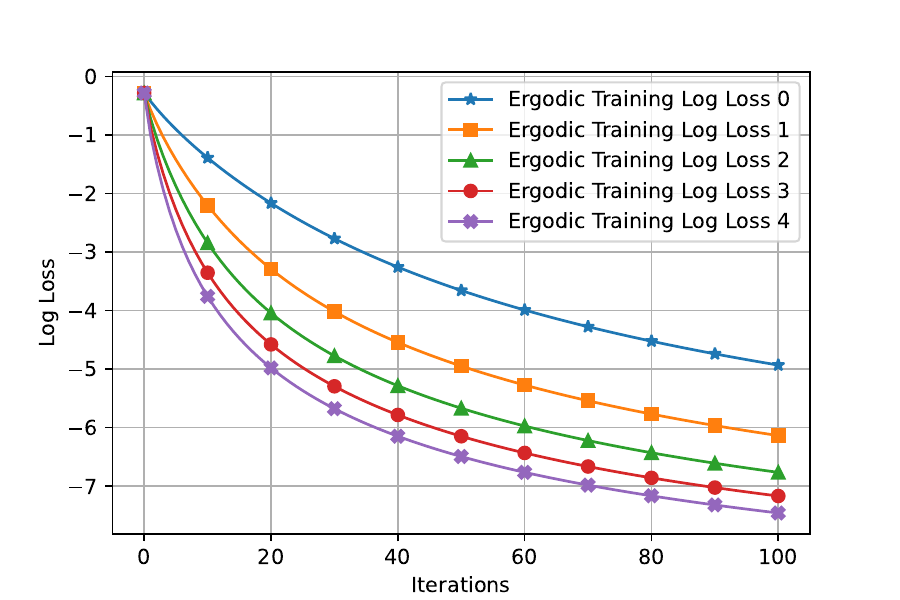}}
    \subfigure{\includegraphics[width=0.19\textwidth]{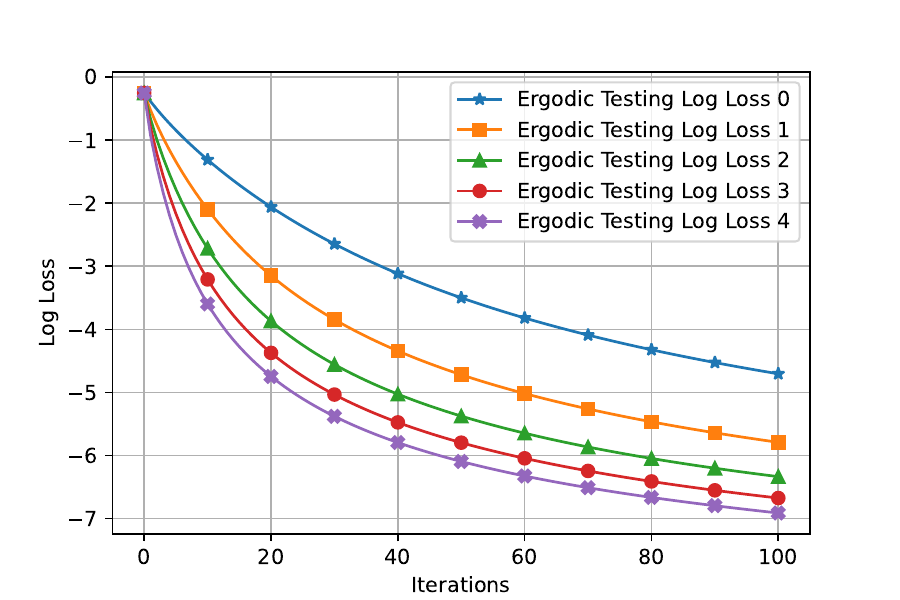}}
    \subfigure{\includegraphics[width=0.19\textwidth]{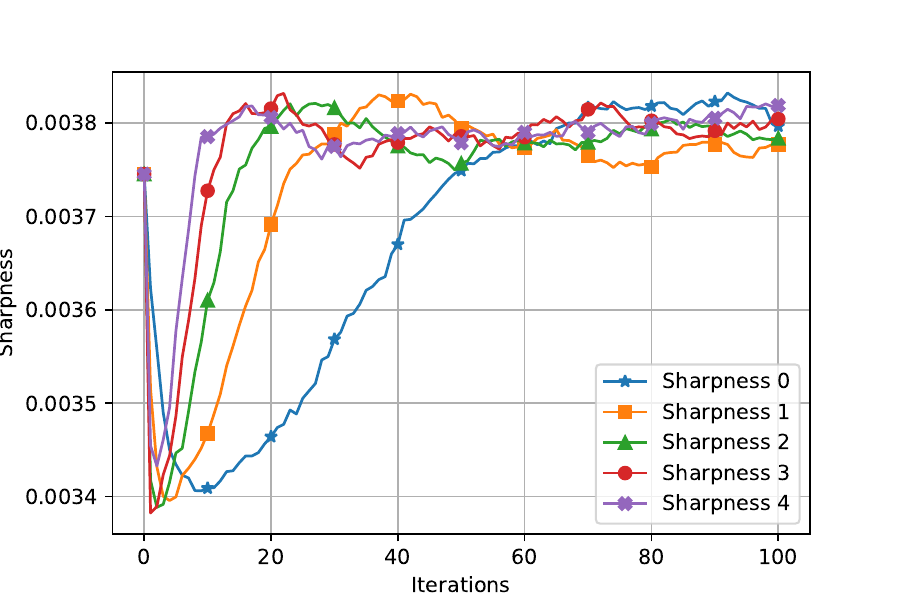}}

    \subfigure{\includegraphics[width=0.19\textwidth]{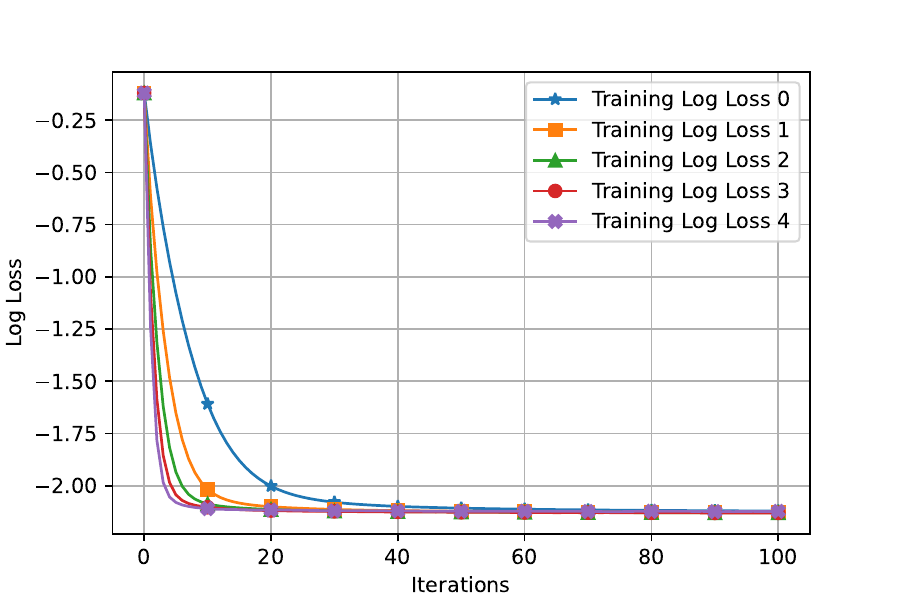}}
    \subfigure{\includegraphics[width=0.19\textwidth]{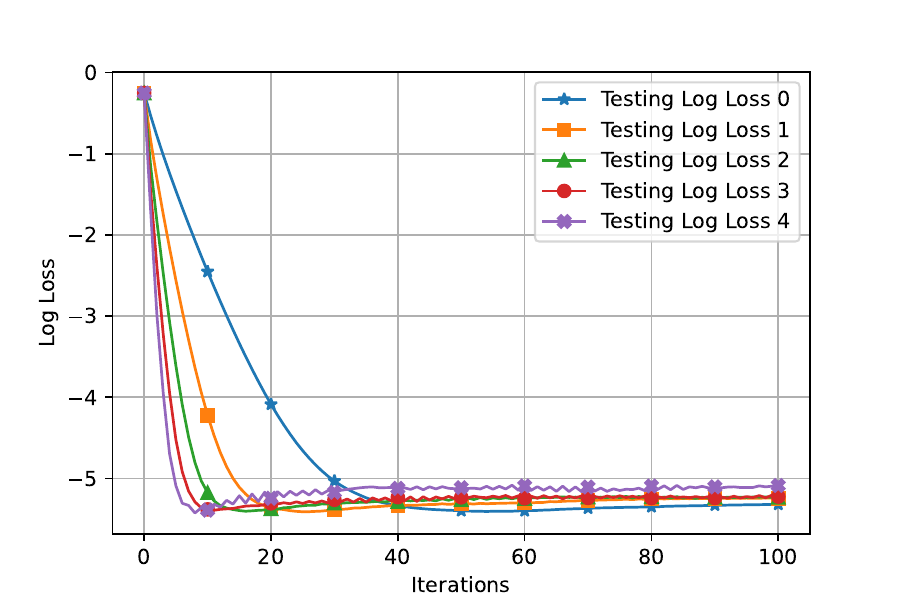}}
    \subfigure{\includegraphics[width=0.19\textwidth]{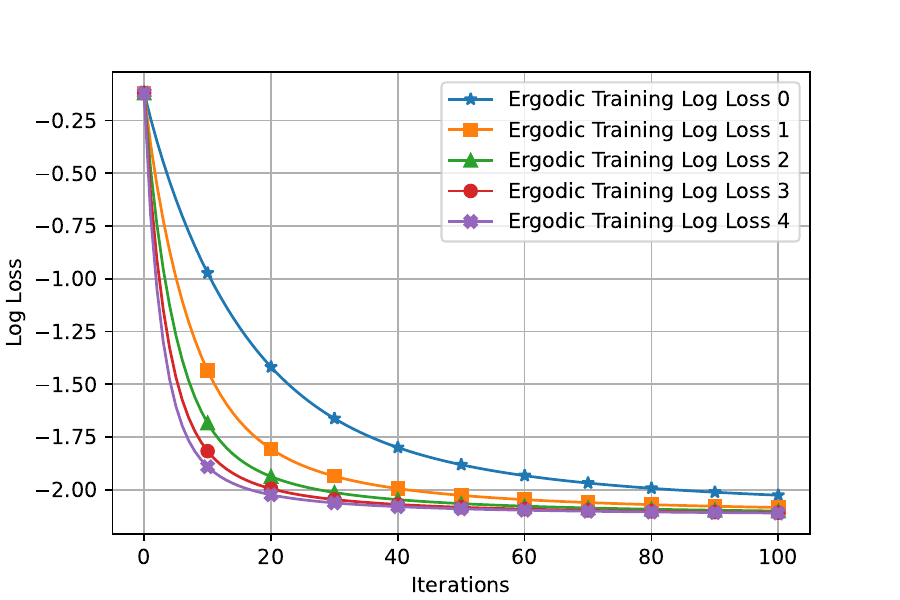}}
    \subfigure{\includegraphics[width=0.19\textwidth]{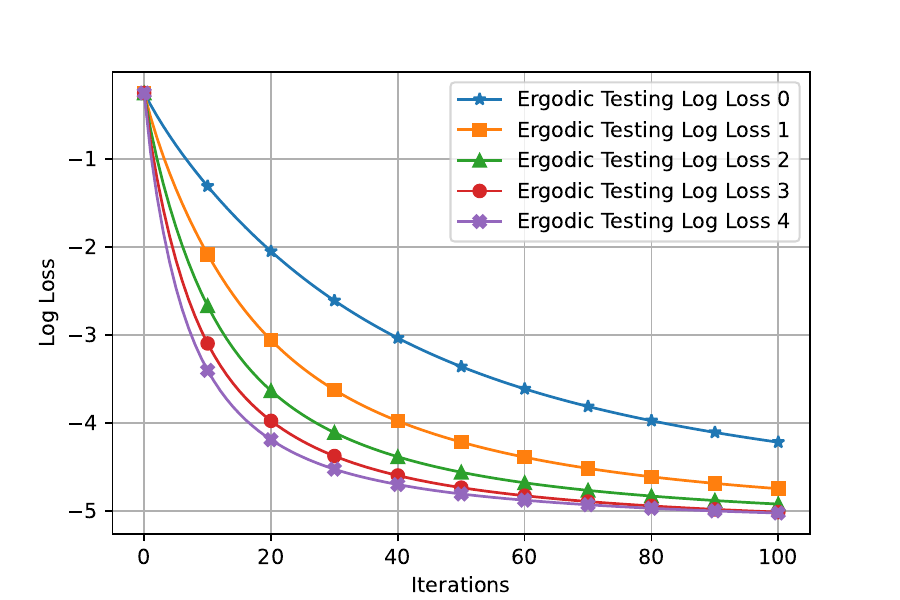}}
    \subfigure{\includegraphics[width=0.19\textwidth]{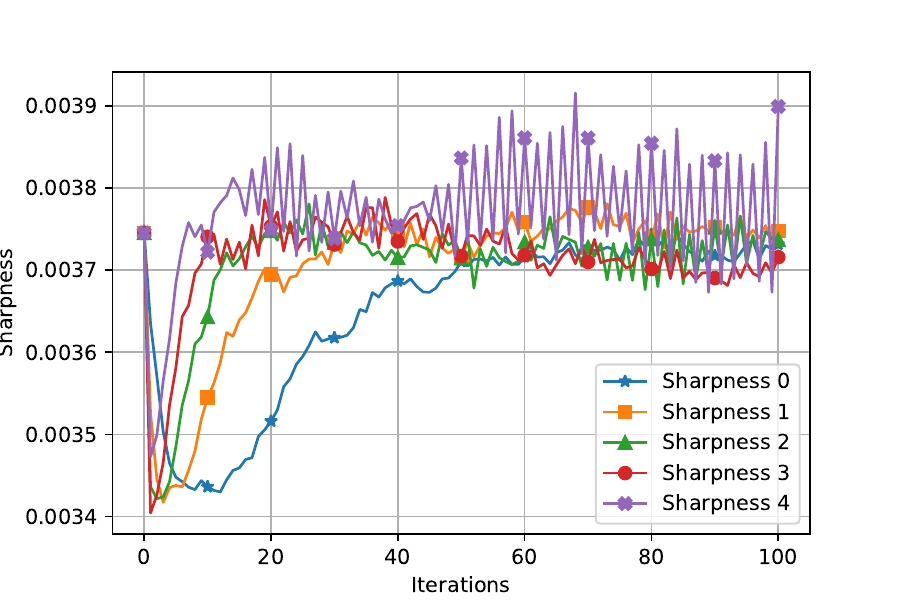}}

    \subfigure{\includegraphics[width=0.19\textwidth]{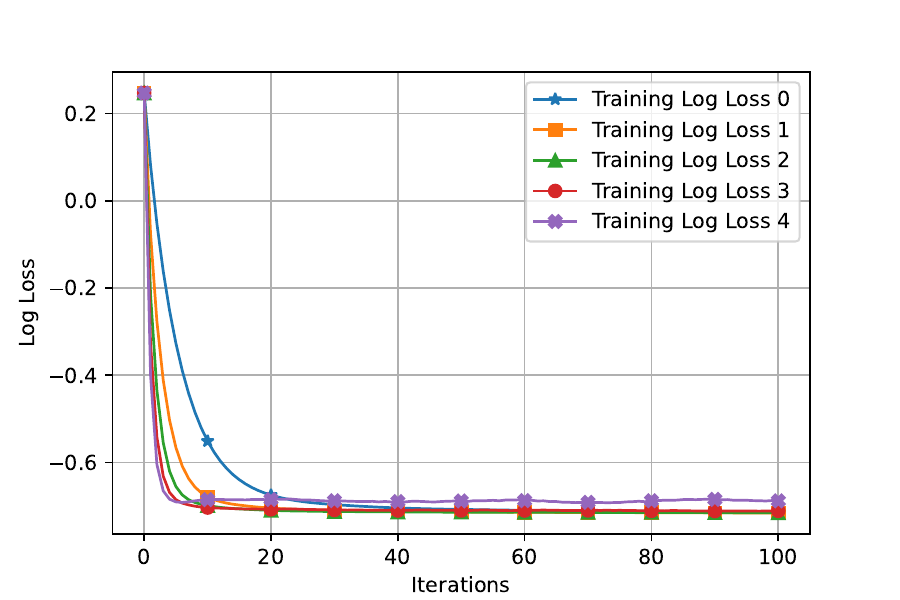}}
    \subfigure{\includegraphics[width=0.19\textwidth]{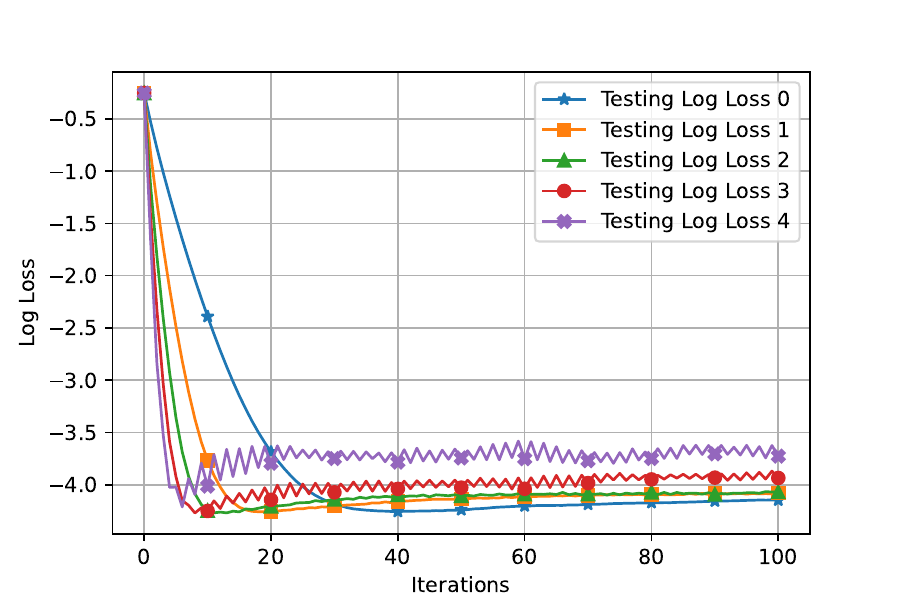}}
    \subfigure{\includegraphics[width=0.19\textwidth]{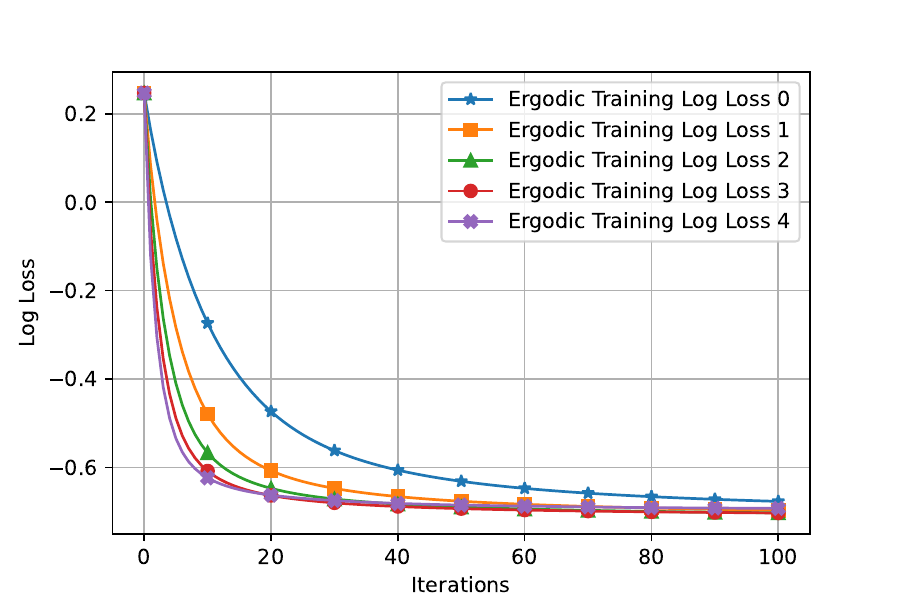}}
    \subfigure{\includegraphics[width=0.19\textwidth]{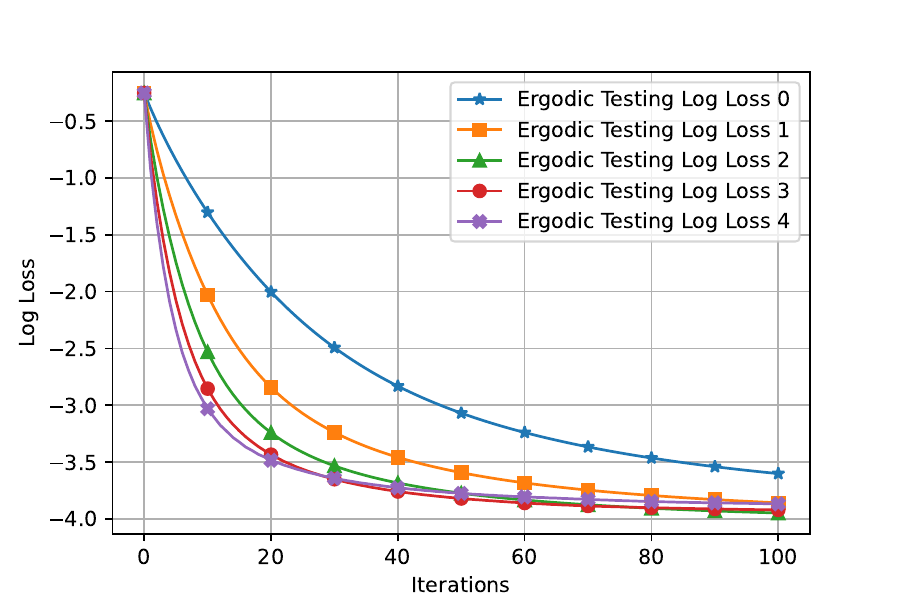}}
    \subfigure{\includegraphics[width=0.19\textwidth]{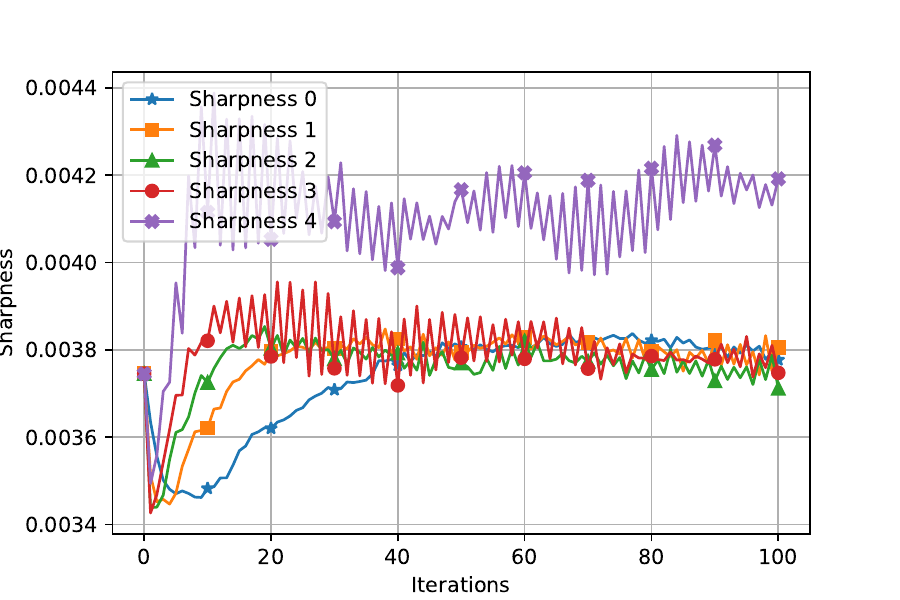}}
    \caption{Hidden-layer width=10 with ReLU activation. Rows from top to bottom represent different levels of noise -- mean-zero normal distribution with variance 0, 0.25, 1. The vertical axes are in log scale for loss curves. The last column is about the sharpness of the training loss functions.
    }
    \label{fig: relu_ind10}
\end{figure}

From Figures~\ref{fig: relu_ind5} and~\ref{fig: relu_ind10}, (in particular from the sharpness plots), we observe various non-monotonic patterns, roughly including periodic and chaotic patterns. Obtaining a precise theoretical characterization of the training dynamics for this setting is extremely interesting as future work.

\clearpage

\end{document}